\newif\ifdraft
\newcommand{\cqa}[2][]{\mathit{cqa}_{#1}(#2)}
\newcommand{\Cqa}[2][]{\textsc{cqa}_{#1}(#2)}
\newcommand{\temp}{\mathsf{State}(\mathit{temp})}
\newcommand{\funcCall}{\F} 
\newcommand{\serviceCall}{\mathbb{SC}} 
\newcommand{\serviceCallMap}{m} 
\newcommand{\servCall}{\textsc{serviceCalls}} 
\newcommand{\Rep}{\ensuremath{\textsc{rep}}\xspace}
\newtheorem{counter}{Counter}[section] 
\newtheorem{lemma}[counter]{Lemma}
\newtheorem{example}[counter]{Example}
\newtheorem{theorem}[counter]{Theorem}
\newcommand{\asd}[1]{\deleted[AS]{#1}} 
\newcommand{\asr}[2]{\replaced[AS]{#1}{#2}} 
\newcommand{\tbox}{\T} 
\newcommand{\initABox}{A_0} 
\newcommand{\actSet}{\Gamma} 
\newcommand{\action}{\gamma} 
\newcommand{\procSet}{\Pi} 
\newcommand{\process}{\pi} 
\def\qed{\hfill{\qedboxempty}} 
\def\qedboxempty{\vbox{\hrule\hbox{\vrule\kern3pt
\vbox{\kern3pt\kern3pt}\kern3pt\vrule}\hrule}}
\newcommand{\A}{\mathcal{A}} \newcommand{\B}{\mathcal{B}}
\newcommand{\C}{\mathcal{C}} 
 \newcommand{\F}{\mathcal{F}}
 \renewcommand{\H}{\mathcal{H}}
\newcommand{\I}{\mathcal{I}} 
\newcommand{\K}{\mathcal{K}} \renewcommand{\L}{\mathcal{L}}
\renewcommand{\S}{\mathcal{S}} \newcommand{\T}{\mathcal{T}}
\newcommand{\mf}{\mathfrak}
\newcommand{\eset}{\emptyset}
\newcommand{\ra}{\rightarrow}
\newcommand{\per}{\mbox{\bf .}}                  
\newcommand{\set}[1]{\{#1\}}                      
\newcommand{\tup}[1]{\langle #1\rangle}            
\newcommand{\myi}{\emph{(i)}\xspace}
\newcommand{\myii}{\emph{(ii)}\xspace}
\newcommand{\EXPTIME}{\textsc{ExpTime}\xspace}
\newcommand{\dom}[1][\I]{\Delta^{#1}}  
\newcommand{\Int}[2][\I]{#2^{#1}}      
\newcommand{\SOMET}[1]{\exists #1}
\newcommand{\INV}[1]{#1^{-}}
\newcommand{\BOX}[1]{ [\!-\!] #1}
\newcommand{\DIAM}[1]{\langle \!-\! \rangle #1}
\newcommand{\ISA}{\sqsubseteq}
\newcommand{\limp}{\rightarrow}
\newcommand{\MOD}[1]{(#1)^{\mf{}}}
\newcommand{\MODA}[1]{(#1)_{\vfo,\vso}^{\mf{}}}   
\newcommand{\MODAX}[2]{(#1)_{\vfo #2,\vso}^{\mf{}}}   
\newcommand{\true}{\mathsf{true}}
\newcommand{\false}{\mathsf{false}}
\newcommand{\dllitea}{\textit{DL-Lite}\ensuremath{_{\mathcal{A}}}\xspace}
\newcommand{\ans}[2][]{\mathit{ans}_{#1}(#2)}
\newcommand{\rew}[1]{\mathit{rew}(#1)}
\newcommand{\conj}{\mathit{conj}}
\newcommand{\map}[2]{#1 \rightsquigarrow #2}
\newcommand{\carule}[2]{#1 \mapsto #2}
\renewcommand{\A}{A}
\renewcommand{\T}{T}
\newcommand{\ex}[1]{\mathsf{#1}}
\newcommand{\const}[1]{\C_{#1}}
\newcommand{\funct}[1]{(\ex{funct}~#1)}
\newcommand{\adom}[1]{\textsc{adom}(#1)}
\newcommand{\DO}[1]{\textsc{do}(#1)}
\newcommand{\Ans}[2][]{\textsc{ans}_{#1}(#2)}
\renewcommand{\=}{\hspace{-0.08cm}=\hspace{-0.08cm}}
\newcommand{\abox}{\mathit{abox}}
\newcommand{\CONST}{\C}
\newcommand{\FUNC}{\ensuremath{\F}\xspace}   
\newcommand{\HERBRAND}{\ensuremath{\mathbb{U}}\xspace}
\newcommand{\idb}{\ensuremath{\I_0}\xspace}
\newcommand{\pos}[1]{\ensuremath{#1^+}\xspace}
\newcommand{\act}{\alpha}
\newcommand{\sys}{\ensuremath{\S}\xspace}
\newcommand{\dcds}{\mbox{DCDS}\xspace}
\newcommand{\muladom}{\ensuremath{\muL_{A}^{{\textnormal{EQL}}}}\xspace}
\newcommand{\muladomcqa}{\ensuremath{\muL_{A}^{{\textnormal{CQA}}}}\xspace}
\newcommand{\muladomit}{\ensuremath{\muL_{A}^{\textnormal{IT}}}\xspace}
\newcommand{\hbsim}{\approx}
\newcommand{\restrict}[2]{\ensuremath{#1 \vert_{#2}}}
\newcommand{\muL}{\mu\L} 
\newcommand{\rmap}{\ensuremath{m}\xspace}
\newcommand{\exec}{\textsc{exec}}
\newcommand{\pexec}{\textsc{p-exec}}
\newcommand{\doo}[3]{\textsc{do}^{#1}_{#2}(#3)}
\newcommand{\groundexec}[2]{{\textsc{evals}_{#1}(#2)}}
\newcommand{\skolems}[1]{{\textsc{calls}({#1})}}
\newcommand{\trans}{\Rightarrow}
\newcommand{\iconst}{\CONST_0}
\newcommand{\cts}[1]{\Upsilon_{#1}^\textnormal{S}}
\newcommand{\pcts}[1]{\Theta_{#1}^\textnormal{S}}
\newcommand{\bs}[1]{\Upsilon_{#1}^{b}\xspace}
\newcommand{\cs}[1]{\Upsilon_{#1}^{c}\xspace}
\newcommand{\verit}[2]{#1 \models_\textnormal{R} #2}
\newcommand{\vercqa}[2]{#1 \models_\textnormal{CQA} #2}
\newcommand{\skab}{S-KAB}
\newcommand{\ckab}{CQA-KAB}
\newcommand{\cittransl}{\tau}
\newcommand{\qunsatf}{q^f_{\textnormal{unsat}}}
\newcommand{\qunsatn}{q^n_{\textnormal{unsat}}}
\newcommand{\lab}{\textsc{label}}
\newcommand{\viol}{\textsc{viol}}
\newcommand{\sts}[1]{\Theta_#1}
\renewcommand{\mf}[1]{\Upsilon_{#1}}
\newcommand{\scset}{\ensuremath{\mathbb{SC}}\xspace}
\newcommand{\domain}[1]{\ensuremath{\textsc{dom}(#1)}\xspace}
\newcommand{\image}[1]{\ensuremath{\textsc{im}(#1)}\xspace}
\newcommand{\cstate}{\mbox{c-state}\xspace}
\newcommand{\vfo}{\ensuremath{v}}
\newcommand{\vso}{\ensuremath{V}}
\begin{document}

\title{
  Verification of Inconsistency-Aware Knowledge and Action Bases\\
(Extended Version)%
 \thanks{The authors are supported by the EU project ACSI (FP7-ICT-257593) and
  Optique (FP7-IP-318338).  Kharlamov was also supported by the ERC grant
  Webdam, agreement n.~226513.}}

\author{Diego Calvanese, Evgeny Kharlamov, Marco Montali, Ario Santoso, Dmitriy
 Zheleznyakov\\
 KRDB Research Centre for Knowledge and Data\\
 Free University of Bozen-Bolzano\\
 \textit{lastname}@inf.unibz.it
}

\maketitle


\begin{abstract} 



Description Logic Knowledge and Action Bases (KABs)
have been recently introduced as a mechanism that provides a
semantically rich representation of the information on the domain of
interest in terms of a DL KB and a set of actions to change such
information over time, possibly introducing new objects. In this
setting, decidability of verification of sophisticated temporal properties over KABs,
expressed in a variant of first-order $\mu$-calculus, has been shown.
However, the established framework treats inconsistency in a
simplistic way, by rejecting inconsistent states produced through
action execution. We address this problem by showing how inconsistency handling based on the notion of repairs can be
integrated into KABs, resorting to inconsistency-tolerant semantics.
In this setting, we establish decidability and complexity of verification.
\end{abstract}


\section{Introduction}
\label{sec:introduction}

Recent work in knowledge representation and databases has addressed
the problem of dealing with the combination of knowledge, processes
and data in the design of complex enterprise systems \cite{DHPV09,Vian09,BCDD*12,CDLMS12,LDLHV12}.
The verification of temporal properties in this setting represents a
significant research challenge, since data and knowledge makes the system infinite-state, and
neither finite-state model checking \cite{ClGP99} nor most of the current
techniques for infinite-state model checking \cite{BCMS01} apply to this case.

Along this line,
\emph{Knowledge and Action Bases} (\emph{KABs}) \cite{BCDD*12} have
have been recently introduced as a mechanism that provides a
semantically rich representation of the information on the domain of
interest in terms of a Description Logic (DL) KB and a set of actions to change such
information over time, possibly introducing new objects. In this
setting, decidability of verification of sophisticated temporal properties over KABs,
expressed in a variant of first-order $\mu$-calculus, has been shown.

However, KABs and the majority of approaches dealing with verification
in this complex setting assume a rather simple treatment of inconsistency resulting as an
effect of action execution: inconsistent states are simply
rejected (see, e.g., \cite{DeSV07,DHPV09,BCDDM13}). In general, this is not satisfactory, since the
inconsistency may affect just a small portion of the entire KB, and
should be treated in a more careful way.
Starting from this observation, in this work we leverage on the research on instance-level evolution of knowledge bases
\cite{Wins90,EiGo92,FMKPA08,CKNZ10b}, and, in particular, on the
notion of knowledge base repair \cite{LLRRS10}, in order to make KABs
inconsistency-aware. In particular, we present a novel setting that extends KABs by assuming the
availability of a repair service that is able to compute, from an
inconsistent knowledge base resulting from the execution of an action,
one or more \emph{repairs}, in which the inconsistency has been
removed with a ``minimal'' modification to the existing
knowledge. This allows us to incorporate, in the temporal
verification formalism, the possibility of quantifying over repairs.
Notably, our novel setting is able to deal with both deterministic
semantics for repair, in which a single repair is computed from an
inconsistent knowledge base, and non-deterministic ones, by
simultaneously taking into account all possible repairs. 
We show how the techniques developed for KABs extend to this
inconsistency-aware setting, preserving both decidability and
complexity results, under the same assumptions required in KABs for decidability.

We also show how our setting is able to accommodate meta-level information about
the sources of inconsistency at the intentional level, so as to allow
them to be queried when verifying temporal properties of the
system. The decidability and complexity results for verification carry
over to this extended setting as well.





The proofs of all presented theorems are contained in the appendix.


\section{Preliminaries}
\label{sec:prelim}

\subsection{\dllitea Knowledge Bases}
For expressing knowledge bases, we use \dllitea
\cite{PLCD*08,CDLL*09}.  The syntax of \emph{concept} and \emph{role}
\emph{expressions} in \dllitea is as follows 
{\small{
\[
   \begin{array}{rcl@{}l}
     B &\longrightarrow&  N &~\mid~ \SOMET{R} \\
   \end{array}
   \qquad\qquad
   \begin{array}{rcl@{}l}
     R &\longrightarrow&  P &~\mid~ \INV{P}\\
   \end{array}
\]
}}
where $N$ denotes a \emph{concept name},
$P$ a \emph{role name}, and $\INV{P}$ an \emph{inverse role}.
%
A \dllitea \emph{knowledge base} (KB) is a pair $(T,A)$, where:
\begin{inparaenum}[\it (i)]
\item $A$ is an Abox, i.e., a finite set of \emph{ABox membership assertions} of the form $ N(t_1) \mid P(t_1,t_2)$, where
  $t_1$, $t_2$ denote individuals
\item $T$ is a TBox, i.e., $T = T_p \uplus
 T_n \uplus T_f$, with $T_p$ a finite set of \emph{positive inclusion
   assertions} of the form $B_1 \ISA B_2$, $T_n$ a finite set of \emph{negative inclusion
   assertions} of the form $B_1 \ISA \neg B_2$, and $T_f$ a finite set
 of \emph{functionality assertions} of the form $\funct{R}$.


\end{inparaenum}


We adopt the standard FOL semantics of DLs based on FOL interpretations
$\I=(\dom,\Int{\cdot})$ such that $\Int{c}\in\dom$, $\Int{N}\subseteq\dom$, and
$\Int{P}\subseteq\dom\times\dom$.  The semantics of the construct, of TBox and
ABox assertions, and the notions of \emph{satisfaction} and of \emph{model} are
as usual.
We also say that $A$ is \emph{$T$-consistent} if $(T,A)$ is
satisfiable, i.e., admits at least one model, otherwise we say $A$ is
\emph{$T$-inconsistent}.

\smallskip
\noindent
\textbf{Queries.}
As usual (cf.\ \asr{OWL~2~QL}{OWL~2}), answers to queries are formed by terms denoting
individuals explicitly mentioned in the ABox.  The \emph{domain of an ABox}
$A$, denoted by $\adom{A}$, is the (finite) set of terms appearing in $A$.
A \emph{union of conjunctive queries} (UCQ) $q$ over a KB $(T,A)$ is a FOL
formula of the form
$\bigvee_{1\leq i\leq n}\exists\vec{y_i}\per\conj_i(\vec{x},\vec{y_i})$ with free
variables $\vec{x}$ and existentially quantified variables
$\vec{y}_1,\ldots,\vec{y}_n$.  Each $\conj_i(\vec{x},\vec{y_i})$ in
$q$ is a conjunction of atoms of the form $N(z)$, $P(z,z')$, where $N$
and $P$ respectively denote a concept and a role name occurring in
$T$, and $z$, $z'$ are constants in $\adom{A}$ or variables in
$\vec{x}$ or $\vec{y_i}$, for some $i\in\{1,\ldots,n\}$.
%
The \emph{(certain) answers} to $q$ over $(T,A)$ is the set $\ans{q,T,A}$ of
substitutions 
$\sigma$ of the free variables of $q$ with constants in $\adom{A}$ such that
$q\sigma$ evaluates to true in every model of $(T,A)$.
If $q$ has no free variables, then it is called \emph{boolean} and its certain
answers are either $\true$ or $\false$.

We compose UCQs using ECQs, i.e., queries of
the query language \textit{EQL-Lite}(UCQ)~\cite{CDLLR07b}, which is
the FOL query language whose atoms are UCQs evaluated according to the
certain answer semantics above. An \emph{ECQ} over $T$ and
$A$ is a possibly open formula of the form
\[
  Q ~:=~
  [q] ~\mid~ 
\lnot Q ~\mid~ Q_1\land Q_2 ~\mid~
  \exists x\per Q
\]
where $q$ is a UCQ.
%
The \emph{answer to $Q$ over $(\T,\A)$}, is the set $\Ans{Q,\T,\A}$ of tuples
of constants in $\adom{A}$ defined by composing  the certain answers
$\ans{q,T,A}$ of UCQs $q$ through first-order
constructs, and
interpreting existential variables as
ranging over $\adom{A}$.

Finally, we recall that \dllitea enjoys the \emph{FO rewritability}
property, which states that for every UCQ
$q$, $\ans{q,T,A} = \ans{rew(q),\emptyset,A}$, where $rew(q)$ is a UCQ
computed by the reformulation algorithm in \cite{CDLL*09}. Notice
that this algorithm can be extended to ECQs \cite{CDLLR07b}, and that
its effect is to ``compile away'' the TBox. 

\subsection{Knowledge and Action Bases}
\label{sec-framework}
We recall the notion of \emph{Knowledge and Action Bases (KABs)}, as
introduced in \cite{BCDD*12}. In the following, we make use of a
countably infinite set $\const{}$ of constant to denote all possible
value in the system.  Moreover, we also make use of a finite set
$\funcCall$ of functions that represent service calls, and can be used
to inject fresh values into the system.

A KAB is a tuple
\asr{$\K=(\tbox,\initABox,\actSet,\procSet)$}{$\K=(\T,A_0,\Gamma,\Pi)$}
where \asr{$\tbox$}{$T$} and \asr{$\initABox$}{$A_0$} form the
knowledge base (KB), and \asr{$\actSet$}{$\Gamma$} and
\asr{$\procSet$}{$\Pi$} form the \asd{or} action base. Intuitively,
the KB maintains the information of interest. It is formed by a fixed
\dllitea TBox $\T$ and an initial \emph{$T$-consistent} \dllitea ABox
\asr{$\initABox$}{$A_0$}. \asr{$\initABox$}{$A_0$} represents the
initial state of the system and, differently from \asr{$\tbox$}{$\T$},
it evolves and incorporates new information from the external world by
executing actions \asr{$\actSet$}{$\Gamma$}, according to the
sequencing established by process \asr{$\procSet$}{$\Pi$}.
\asr{$\actSet$}{$\Gamma$} is a finite set actions.  An \emph{action}
\asr{$\action \in \actSet$}{$\gamma\in\Gamma$} modifies the current
ABox $A$ by adding or deleting assertions, thus generating a new ABox
$A'$.  \asr{$\action$}{$\gamma$} is constituted by a signature and an
effect specification.  The \emph{action signature} is constituted by a
name and a list of individual \emph{input parameters}.  Such
parameters need to be instantiated with individuals for the execution
of the action.  Given a substitution $\theta$ for the input
parameters, we denote by \asr{$\action\theta$}{$\gamma\theta$} the
instantiated action with the \emph{actual} parameters coming from
$\theta$.  The \emph{effect specification} consists of a set
$\{e_1,\ldots,e_n\}$ of effects, which take place simultaneously.  An
\emph{effect} $e_i$ has the form $\map{[q^+_i] \land Q^-_i}{A'_i}$,
where:
  \begin{inparaenum}[\it (i)]
  \item $q^+_i$ is an UCQ, and $Q^-_i$ is an arbitrary ECQ whose free
    variables occur all among the free variables of
    $q^+_i$;
  \item $A'_i$ is a set of facts (over the alphabet of $T$) which
    include as terms: individuals in $A_0$, free variables of $q^+_i$,
    and Skolem terms $f(\vec{x})$ having as arguments free variables
    $\vec{x}$ of $q^+_i$.
  \end{inparaenum}
The distinction between $q_i^+$ and $Q_i^-$ is needed
for technical reasons (see Appendix~\ref{sec:wa-kabs}). 

  The process \asr{$\procSet$}{$\Pi$} is a finite set of
  condition/action rules.  A \emph{condition/action rule}
  \asr{$\process\in\procSet$}{$\pi\in\Pi$} is an expression of the
  form \asr{$\carule{Q}{\action}$}{$ \carule{Q}{\alpha} $}, where \asr{$\action$}{$\alpha$} is an action in \asr{$\actSet$}{$\Gamma$}
  and $Q$ is an ECQ over $T$, whose free variables are exactly the
  parameters of \asr{$\action$}{$\gamma$}.  The rule expresses that, for each tuple
  $\sigma$ for which condition $Q$ holds, the action \asr{$\action$}{$\alpha$} with
  actual parameters $\sigma$ \emph{can} be executed.

{\footnotesize
\begin{example}\label{RunningExample}
\asr{$\K = (\tbox, \initABox, \actSet, \procSet)$}{$\K=(\T,A_0,\Gamma,\Pi)$} is a KAB defined as follows:
\begin{inparaenum}[\it (i)]
\item $\tbox = \{ C \sqsubseteq \neg D\}$,
\item \asr{$\initABox = \{C(a)\}$}{$A_0 = \{C(a)\}$}, 
\item \asr{$\actSet = \{\action_1, \action_2\}$}{$\Gamma = \{\gamma_1, \gamma_2\}$} with
\asr{$\action_1():\{ \map{C(x)}{D(x), C(x)} \}$}{$\gamma_1():\{ \map{C(x)}{D(x), C(x)} \}$} and
\asr{$\action_2(p):\{ \map{C(p)}{G(f(p))} \}$}{$\gamma_2(p):\{ \map{C(p)}{G(f(p))} \}$}, 
\item \asr{$\Pi =\{true \mapsto \action_1, C(y) \mapsto \action_2(y)\}$}{$\Pi =\{true \mapsto \gamma_1, C(y) \mapsto \gamma_2(y)\}$}.
\end{inparaenum}
\qed
\end{example}
}

\section{Verification of Standard KABs}
\label{sec:verification1}
We are interested in verifying temporal/dynamic properties over
KABs. To this aim, we fix a countably infinite set $\CONST$ of
individual constants (also called values), which act as standard names,
and finite set of distinguished constants
$\iconst \subset \CONST$. Then, we define the execution semantics of a KAB in
terms of a possibly infinite-state \emph{transition system}. More
specifically, we consider transition systems of the form
$(\CONST,\T,\Sigma,s_0,\abox,{\Rightarrow})$, where:
\begin{inparaenum}[\it (i)]
\item \asr{$\tbox$}{$\T$} is a TBox;
\item $\Sigma$ is a set of states;
\item $s_0 \in \Sigma$ is the initial state;
\item $\abox$ is a function that, given a state $s\in\Sigma$, returns an ABox
  associated to $s$, which has as individuals values of $\CONST$ and
  conforms to $\T$;
\item ${\Rightarrow} \subseteq \Sigma\times\Sigma$ is a transition
  relation between pairs of states.
\end{inparaenum}

The standard execution semantics for a KAB \asr{$\K = (\tbox,
  \initABox, \actSet, \procSet)$}{$\K=(\T,A_0,\Gamma,\Pi)$} is
obtained starting from \asr{$\initABox$}{$A_0$} by nondeterministically applying every executable
action with corresponding legal parameters, and considering each possible value returned by applying the
involved service calls. Notice that this is radically different from
\cite{BCDD*12}, where service calls are not evaluated when
constructing the transition system.
The executability of an action with fixed parameters does not only
depend on the process \asr{$\procSet$}{$\Pi$}, but also on the $\T$-consistency of the
ABox produced by the application of the action: if the resulting ABox
is $\T$-inconsistent, the action is considered as non
executable with the chosen parameters.

We consider
\emph{deterministic} services, i.e., services that return always the
same value when called with the same input
parameters. Nondeterministic services can be seamlessly added without
affecting our technical results.
To ensure that services behave deterministically, we recast the
approach in \cite{BCDDM13} to the semantic setting of KABs, keeping track, in the states of the
transition system generated by $\K$, of all the service call
results accumulated so far. To do so, we introduce the set of (Skolem terms representing)
service calls as $\scset = \{f(v_1,\ldots,v_n) \mid f/n \in \FUNC
\textrm{ and } \{v_1,\ldots,v_n\} \subseteq \CONST \}$, and define a
\emph{service call map} as a partial function
$\rmap:\scset\ra\CONST$.

A  \emph{state} of the transition system generated by $\K$
is a pair $\tup{\A, \rmap}$,
where $\A$  is an ABox and \rmap is a  service call map.
Let \asr{$\action(p_1,\ldots,p_r) :
  \set{e_1,\ldots,e_k}$}{$\alpha(p_1,\ldots,p_r) :
  \set{e_1,\ldots,e_k}$} be an action in \asr{$\actSet$}{$\Gamma$}
with parameters $p_1,\ldots,p_r$, and $e_i=\map{[q_i^+]\land
  Q_i^-}{E_i}$.  Let $\sigma$ be a substitution for $p_1,\ldots,p_r$
with values taken from $\CONST$.  We say that $\sigma$ is \emph{legal}
for \asr{$\action$}{$\alpha$} in state $\tup{\A,\rmap}$ if there
exists a condition-action rule \asr{$Q \mapsto \action$}{$Q \mapsto
  \alpha$} in \asr{$\procSet$}{$\Pi$} such that
$\tup{p_1,\ldots,p_r}\sigma \in \Ans{Q,\A}$.
We denote with \asr{$\doo{}{}{\tbox,\A,
    \action\sigma}$}{$\doo{}{}{\T,\A, \alpha\sigma}$} the set of facts
obtained by evaluating the effects of action \asr{$\action$}{$\alpha$} with parameters
$\sigma$ on ABox $\A$, so as to \emph{progress} (cf.~planning \cite{GhNT04})
the system from the
current state to the next:
\[
  \doo{}{}{\T, \A, \action\sigma} = \bigcup_{\map{[q_i^+]\land
    Q_i^-}{E_i} \text{ in } \action\ }
  \bigcup_{\rho\in\Ans[]{([q_i^+]\land Q_i^-)\sigma,\tbox,\A}} E_i\sigma\rho
\]
%
%
%
The returned set is the union of the results of
    applying the effects specifications in \asr{$\action$}{$\alpha$}, where the
    result of each effect specification $\map{[q_i^+]\land Q_i^-}{E_i}$ is,
    in turn, the set of facts $E_i\sigma\rho$ obtained from
    $E_i\sigma$ grounded on all the assignments $\rho$ that satisfy
    the query $[q_i^+]\land Q_i^-$ over $A$.

    Note that $\doo{}{}{}$ generates facts that use values from the
    domain $\CONST$, but also Skolem terms, which model service calls.
    For any such set of facts $E$, we denote with $\skolems{E}$ the
    set of calls it contains, and with
    \asr{$\groundexec{}{\tbox,\A,\action\sigma}$}{$\groundexec{}{\T,\A,\alpha\sigma}$}
    the set of substitutions that replace all service calls in
    \asr{$\doo{}{}{\tbox,\A, \action\sigma}$}{$\doo{}{}{\T,\A, \alpha\sigma}$} with values in $\CONST$:
\begin{tabbing}
$\groundexec{}{\tbox,\A,\action\sigma}= \{ \theta\ |\ $ \=
                        $\theta \mbox { is a total function }$\+\\
                        $\theta: \skolems{\doo{}{}{\tbox, \A, \action\sigma}} \ra \CONST \}$.
\end{tabbing}
Each substitution in \asr{$\groundexec{}{\T,\A,\action\sigma}$}{$\groundexec{}{\T,\A,\alpha\sigma}$}
models the simultaneous evaluation of all service calls, returning
results arbitrarily chosen from $\CONST$.

{\footnotesize
\begin{example}
  \label{ex:exec}
  Consider our running example (Example~\ref{RunningExample}). Starting
  from \asr{$\initABox$}{$A_0$}, the execution of \asr{$\action_1$}{$\gamma_1$} would produce $A' = \{D(a), C(a)\}$,
  which is $T$-inconsistent.  Thus, the execution of
  \asr{$\action_1$}{$\gamma_1$} in \asr{$\initABox$}{$A_0$} should
  either be rejected or its effect should be repaired (cf.~Section~\ref{sec:KABWithInconsistencyTolerance}). The execution of \asr{$\action_2$}{$\gamma_2$} with legal parameter $a$ instead produces $A''
  = \{G(c)\}$ when the service call $f(a)$ returns $c$. $A''$ is
  $T$-consistent, and \asr{$\action_2(a)$}{$\gamma_2(a)$} is therefore executable in \asr{$\initABox$}{$A_0$}. \qed
\end{example}
}
Given a KAB \asr{$\K =(\tbox,\initABox,\actSet,\procSet)$}{$\K =(\T,A_0,\Gamma,\Pi)$}, we employ $\doo{}{}{}$ and $\groundexec{}{}$ to define a transition relation $\exec_\K$
connecting two states through the application of an action with
parameter assignment. In particular, given an action with parameter assignment
\asr{$\action\sigma$}{$\alpha\sigma$}, we have
\asr{$\tup{\tup{\A,\rmap},\action\sigma,\tup{\A',\rmap'}} \in \exec_\K$}{$\tup{\tup{\A,\rmap},\alpha\sigma,\tup{\A',\rmap'}} \in \exec_\K$} if
the following holds:
\begin{inparaenum}[\it (i)]
\item $\sigma$ is a legal parameter assignment for \asr{$\action$}{$\alpha$} in state
  $\tup{\A,\rmap}$, according to \asr{$\procSet$}{$\Pi$};
\item there exists \asr{$\theta \in \groundexec{}{\tbox,\A,\action\sigma}$}{$\theta \in \groundexec{}{\T,\A,\alpha\sigma}$}
  such that $\theta$ and $\rmap$ agree on the
  common values in their domains (so as to realize the deterministic
  service semantics);
\item \asr{$\A' = \doo{}{}{\T, \A, \action\sigma}\theta$}{$\A' = \doo{}{}{\T, \A, \alpha\sigma}\theta$};
\item $\rmap' = \rmap \cup \theta$ (i.e., the history of issued service calls
  is updated).
\end{inparaenum}

\smallskip
\noindent {\bf Standard transition system.}
 The \emph{standard transition system} $\cts{\K}$ for KAB \asr{$\K =(\tbox,\initABox,\actSet,\procSet)$}{$\K =(\T,\A_0,\Gamma,\Pi)$}  is a
  (possibly infinite-state) transition system
  $(\CONST,\T,\Sigma,s_0,\abox,\trans)$ where:
\begin{inparaenum}[\it (i)]
\item \asr{$s_0 = \tup{\initABox,\emptyset}$}{$s_0 = \tup{\A_0,\emptyset}$}; \item $\abox(\tup{\A,\rmap}) = \A$;
\item  $\Sigma$ and
$\trans$ are defined by simultaneous induction
as the smallest sets satisfying the following properties:
\myi $s_0 \in \Sigma$; \myii if $\tup{\A,\rmap} \in \Sigma$ , then for
all actions \asr{$\action$}{$\alpha$} in \asr{$\actSet$}{$\Gamma$}, for all
substitutions $\sigma$ for the parameters
  of \asr{$\action$}{$\alpha$}
and for all
  $\tup{\A',\rmap'}$ such that $\A'$ is \emph{$\T$-consistent} and
  \asr{$\tup{\tup{\A,\rmap},\action\sigma,\tup{\A',\rmap'}}\in \exec_\K$}{$\tup{\tup{\A,\rmap},\alpha\sigma,\tup{\A',\rmap'}}\in \exec_\K$},
 we have $\tup{\A',\rmap'}\in\Sigma$ and $\tup{\A,\rmap} \trans \tup{\A',\rmap'}$.
We call \skab\ a KAB interpreted under the standard execution semantics.
\end{inparaenum}

{\footnotesize
\begin{example}
\label{ex:ts}
Consider $\K$ of Example~\ref{RunningExample} and its standard transition
system $\cts{\K}$. As discussed in Example~\ref{ex:exec}, in state
\asr{$s_0 = \tup{\initABox,\emptyset}$}{$s_0 = \tup{A_0,\emptyset}$}
only \asr{$\action_2$}{$\gamma_2$} is applicable with parameter
$a$. Since \asr{$\doo{}{}{\T,A_0,\action_2(a)} = \{G(f(a))\}$}{$\doo{}{}{\T,A_0,\gamma_2(a)} = \{G(f(a))\}$},
  $\cts{\K}$ contains infinitely many successors for $s_0$, each of
the form $\tup{\{G(x)\},\{f(a) \mapsto x\}}$, where $x$ is arbitrarily
substituted with a specific value picked from $\CONST$.  \qed
\end{example}
}

\smallskip
\noindent
\textbf{Verification Formalism.}~
To specify sophisticated temporal properties over KABs, we resort to
the first-order variant of $\mu$-calculus
\cite{Stir01,Park76} defined in \cite{BCDD*12}.
This variant, here called $\muladom$, exploits EQL to query the states, and supports a particular form of first-order
quantification across states: quantification ranges over the
individuals explicitly present in the current active domain, and can
be arbitrarily referred to in later states of the systems.
Formally, \muladom is defined as follows:
\[
  \Phi ~:=~ Q ~\mid~ \lnot \Phi ~\mid~ \Phi_1 \land \Phi_2
  ~\mid~ \exists x.\Phi ~\mid~ \DIAM{\Phi} ~\mid~ Z ~\mid~ \mu Z.\Phi
\]
where $Q$ is a possibly open EQL query that can make use of the
distinguished constants in $\iconst$, and $Z$ is a second order predicate
variable (of arity 0).
We make use of the following abbreviations: $\forall x.  \Phi = \neg
(\exists x.\neg \Phi)$, $\Phi_1 \lor \Phi_2 = \neg (\neg\Phi_1 \land
\neg \Phi_2)$, $\BOX{\Phi} = \neg \DIAM{\neg \Phi}$, and $\nu Z. \Phi
= \lnot\mu Z. \neg \Phi[Z/\neg Z]$.

The semantics of \muladom formulae is defined over transition
systems $\tup{\const{}, \T,\Sigma,s_0,\abox,{\Rightarrow}}$.  Since \muladom
contains formulae with both individual and predicate free variables, given
a transition system $\mf{}$, we introduce an individual variable valuation
$\vfo$, i.e., a mapping from individual variables $x$ to $\const{}$, and a
predicate variable valuation $\vso$, i.e., a mapping from the predicate
variables $Z$ to a subset of $\Sigma$.
All the language primitives follow
the standard $\mu$-calculus semantics, apart from the two listed
below \cite{BCDD*12}:
\[
  \begin{array}{r@{\ }l@{\ }l@{\ }l}
    \MODA{Q} & = &\{s \in \Sigma\mid& \Ans{Q\vfo, T, \abox(s)} = \mathit{true}\}\\
   \MODA{\exists x. \Phi} & =&\{s \in \Sigma \mid&\exists d. d \in \adom{\abox(s)}
    \\&&&\mbox{ and } s \in \MODAX{\Phi}{[x/d]}\}\\
 \end{array}
\]  
Here,  $Q\vfo$ stands for the query obtained from $Q$ by substituting its free
variables according to $\vfo$.
%
%
When $\Phi$ is a closed formula, $\MODA{\Phi}$ does not depend on
$\vfo$ or $\vso$, and we denote the extension of $\Phi$ simply by
$\MOD{\Phi}$.
A closed formula $\Phi$ holds in a state $s
\in \Sigma$ if $s \in \MOD{\Phi}$.
We call
\emph{model checking} verifying whether $s_0 \in \MOD{\Phi}$, and we
write in this case $\mf{}\models \Phi$.


\smallskip
\noindent
\textbf{Decidability of verification.}~
We are interested in studying the verification of $\muladom$
properties over \skab s.
We can easily recast the undecidability result in \cite{BCDD*12}
to the case of \skab s, obtaining that verification is undecidable even for the very simple temporal reachability property $\mu Z.(\textsf{N}(a) \lor
\DIAM{Z})$, with $\textsf{N}$ atomic concept and $a \in \CONST$.

Despite this undecidability result, we can isolate an interesting class of KABs that enjoys
verifiability of arbitrary $\muladom$ properties through finite-state abstraction. This class is based on a semantic restriction
named \emph{run-boundedness} \cite{BCDDM13}. Given an \skab\ $\K$, a run $\tau =
s_0s_1\cdots$ of $\cts{\K}$ is \emph{bounded} if
there exists a finite bound \emph{b} s.t.\ $\left|\bigcup_{s \text{
      state of } \tau } \adom{\abox(s)} \right| < b$. We say that $\K$ is \emph{run-bounded} if there
exists a bound \emph{b} s.t. every run $\tau$ in $\cts{\K}$ is bounded by
\emph{b}.
\begin{theorem}
\label{thm:dec}
Verification of \muladom properties over run-bounded \skab s is
decidable,
and can be reduced to finite-state model checking of propositional $\mu$-calculus.
\end{theorem}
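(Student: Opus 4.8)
The plan is to follow the standard ``abstraction'' strategy used for data-aware processes: show that a run-bounded \skab{} admits a \emph{faithful finite-state abstraction}, i.e.\ a finite transition system that satisfies exactly the same \muladom{} properties, and then invoke the classical decidability of model checking for propositional $\mu$-calculus over finite transition systems \cite{ClGP99}. The key observation is that, although $\cts{\K}$ has infinitely many states (because service calls may return arbitrary fresh values from $\CONST$), run-boundedness means that along any single run only boundedly many distinct individuals ever appear. Hence, up to a suitable notion of equivalence, only boundedly many ``genuinely different'' values matter at each step: the fresh values returned by service calls can be drawn from a fixed finite pool, since any two choices of fresh values are related by a permutation of $\CONST$ that fixes the distinguished constants $\iconst$, and both $\cts{\K}$ and the semantics of \muladom{} are invariant under such permutations.

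First I would make precise the invariance of the construction under renaming: if $h$ is a bijection on $\CONST$ fixing $\iconst$ (and the finitely many constants occurring in $\K$ and in the formula $\Phi$), then applying $h$ pointwise to states of $\cts{\K}$ yields an isomorphic transition system, and $\cts{\K} \models \Phi$ iff its $h$-image does. Next I would define a ``pruned'' transition system in which, whenever an action is executed and service calls return fresh values, those values are picked from a finite reserve $\mathsf{R}\subseteq\CONST$ whose cardinality is bounded in terms of the run-bound $b$, the size of $\T$, the maximal arity of service calls, and the number of effects/atoms in the actions --- essentially enough ``spare'' names to realize any single transition without colliding. Run-boundedness guarantees that this finite reserve never runs out along a run. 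The resulting pruned system is still infinite in general only because of accumulated service-call maps, so the third step is to quotient by an equivalence that forgets the precise identity of dead values (those no longer in the active ABox and never to be reused), obtaining genuinely finitely many states; here one must check that this quotient is a \emph{bisimulation} that moreover preserves the active-domain-quantification semantics of \muladom{} (a ``persistence-preserving'' or ``history-preserving'' bisimulation in the sense of \cite{BCDDM13,BCDD*12}), so that it preserves all \muladom{} formulae, not merely bisimulation-invariant ones.

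I would then assemble these pieces: run-bounded $\K$ $\leadsto$ pruned system (equivalent by renaming-invariance, using the finite reserve) $\leadsto$ finite quotient (equivalent by the history-preserving bisimulation), and the finite quotient, viewed with EQL atoms evaluated via FO rewritability $\ans{q,T,A}=\ans{\rew{q},\emptyset,A}$ as atomic propositions, is a finite propositional transition system on which $\Phi$ can be model-checked by standard algorithms. This yields both decidability and the reduction claimed in the statement.

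The main obstacle I expect is the second/third step: arguing that the finite reserve of fresh values is \emph{simultaneously} sufficient for every run and that the quotient really is a bisimulation compatible with \muladom{}'s quantification over the active domain. The subtlety is that \muladom{} lets a quantified individual be ``remembered'' across arbitrarily many transitions, so the bisimulation must track not just the current ABoxes but the correspondence between individuals that are still live; one has to show that a live individual in one state is matched by a live individual in the bisimilar state and that this matching is preserved by transitions, which is exactly where run-boundedness (bounding the number of live individuals along any run) is essential. This is the heart of the argument, and it is the step that directly recasts the machinery of \cite{BCDDM13} into the present DL-based setting --- the extra care being that states carry full \dllitea{} ABoxes and service-call maps rather than plain relational databases, so one leans on FO rewritability to reduce EQL evaluation to evaluation over the ABox alone.
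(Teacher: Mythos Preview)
Your proposal is correct and follows the same high-level strategy as the paper: build a finite-state abstraction of $\cts{\K}$ that is history-preserving bisimilar to it (in the sense of \cite{BCDDM13}), then exploit finiteness plus quantifier elimination to reduce to propositional $\mu$-calculus model checking. The paper, however, implements the abstraction via \emph{equality commitments} rather than a finite reserve of names: for each state $\tup{A,m}$ and each executable $\alpha\sigma$, the infinitely many successors are partitioned according to the pattern of equalities that the fresh service-call results satisfy among themselves and with the values already present in $\iconst\cup\adom{A}\cup\domain{m}\cup\image{m}$. There are only finitely many such commitments, and the pruning $\pcts{\K}$ keeps exactly one representative successor per commitment; a single lemma---states related by a value bijection that also carries one service-call map to the other are history-preserving bisimilar---then yields $\pcts{\K}\hbsim\cts{\K}$.

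This buys the paper your third step for free. You anticipate that the pruned system ``is still infinite in general only because of accumulated service-call maps'' and propose a further quotient that forgets dead values. Under the equality-commitment pruning that worry evaporates: run-boundedness bounds the individuals occurring along any run, hence bounds the possible service calls (their arguments are drawn from those individuals), hence bounds $m$, so each run of $\pcts{\K}$ visits finitely many states; together with finite branching this makes $\pcts{\K}$ outright finite, no quotient needed. Your finite-reserve-plus-quotient route would also go through, but the quotient requires arguing carefully when a value is ``never to be reused'' under deterministic services, which is delicate (a value recorded in $m$ may be forced to reappear if the same call is ever issued again). The paper's formulation sidesteps this and simultaneously packages the renaming invariance and the live-individual tracking you flag as the main obstacle into one bisimulation lemma.
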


The crux of the proof is to show, given a run-bounded \skab\ $\K$, how to
construct an \emph{abstract transition system} $\pcts{\K}$ that satisfies
exactly the same $\muladom$ properties as the original transition system
$\cts{\K}$. This is done by introducing a suitable bisimulation relation, and
defining a construction of $\pcts{\K}$ based on iteratively ``pruning'' those
branches of $\cts{\K}$ that cannot be distinguished by $\muladom$ properties.

In fact, $\pcts{\K}$ is of size exponential in the size of the initial state of
the \skab\ $\K$ and the bound $b$. Hence, considering the complexity of model
checking of $\mu$-calculus on finite-state transition systems
\cite{ClGP99,Stir01}, we obtain that verification is in \EXPTIME.



\section{Repair Semantics for KABs}
\label{sec:KABWithInconsistencyTolerance}

\skab s are defined by taking a radical approach in the management of
inconsistency: simply reject actions that lead to $\T$-inconsistent
ABoxes. However, an inconsistency could be caused by a small portion of the
ABox, making it desirable to \emph{handle} the inconsistency by
allowing the action execution, and taking care of
\emph{repairing} the resulting state so as to restore consistency while
minimizing the information loss. To this aim, we revise the standard
semantics for KABs so as to manage inconsistency, relying on the research on instance-level evolution of knowledge bases
\cite{Wins90,EiGo92,FMKPA08,CKNZ10b}, and, in particular, on the notion of
\emph{ABox repair}, cf.\ \cite{Bert06,LLRRS10}.

In particular, we assume that in this case the system is equipped with a
\emph{repair service} that is executed every time an action changes the content
of the ABox. In this light, a progression step of the KAB is constituted by two
sub-steps: an \emph{action step}, where an executable action with parameters is
chosen and applied over the current ABox, followed by a \emph{repair step},
where the repair service checks whether the resulting state is $\T$-consistent
or not, and, in the negative case, fixes the content of the ABox resulting from
the action step, by applying its repair strategy.

\smallskip
\noindent {\bf Repairing ABoxes.}
We illustrate our approach by considering two specific forms of repair that
have been proposed in the literature \cite{EiGo92} and are applicable to the
context of DL ontologies \cite{LLRRS10}.
\begin{compactitem}
\item Given an ABox $\A$ and a TBox $\T$, a \emph{bold-repair}
  (\emph{b-repair}) of $\A$ with $\T$ is a maximal $\T$-consistent subset $\A'$
  of $\A$.  Clearly, there might be more than one bold-repair for given $\A$
  and $\T$.  By $\Rep(\A,\T)$ we denote the set of \emph{all} b-repairs of $\A$
  with $\T$.
\item A \emph{certain-repair} (\emph{c-repair}) of $\A$ with $\T$ is the ABox
  defined as follows: $\A'=\cap_{\A''\in\Rep(\A,\T)}\A''$.  That is, a c-repair
  of $A$ with $T$ contains only those ABox statements that occur in every
  b-repair of $A$ with $T$.
\end{compactitem}
In general, there are (exponentially) many b-repairs of an ABox
$A$ with $T$, while by definition there is a single c-repair.

\begin{example}\label{Ex-RepairExample}
  \footnotesize{Continuing Example~\ref{ex:exec}, consider the
    $\T$-inconsistent state $\tup{\A',\emptyset}$ obtained from
    \asr{$\action_1()$}{$\gamma_1()$} in \asr{$\initABox$}{$A_0$}.  Its two b-repairs are
    $\Rep(\A',\T) = \{A_1,A_2\}$ with $A_1 = \{C(a)\}$, $A_2 =
    \{D(a)\}$.
    Its c-repair is $\bigcap_{\A \in
      \Rep(\A',\T)} \A = \set{C(a)} \cap \set{D(a)} = \eset$.  \qed}
\end{example}

%
%


\subsection{Bold and Certain Repair Transition Systems}
We now refine the execution semantics of KABs by constructing a
two-layered transition system that reflects the alternation between
the action and the repair steps. In particular, we consider the two
cases for which the repair strategy either follows the bold or certain semantics.
We observe that, if b-repair semantics is applied, then the repair
service has, in general, several possibilities to fix an inconsistent ABox. Since, a-priori, no information about the repair
service can be assumed beside the repair strategy itself, the transition system capturing this execution semantics must
consider the progression of the system for any computable repair,
modelling the repair step as the result of a non-deterministic choice
taken by the repair service when deciding which among the possible
repairs will be the actually enforced one.
This issue does not occur with c-repair semantics, because its repair
strategy is deterministic.

In order to distinguish whether a state
is obtained from an action or repair step,
we introduce a special marker $\temp$,
which is an ABox statement with
a fresh concept name $\mathsf{State}$
and a fresh constant $\mathit{temp}$, s.t.:
if $\temp$ is in the
        current state, this means that the state has been produced by
        an action step, otherwise by the repair step.

Formally, the \emph{b-transition system} $\bs{\K}$
        (resp.~\emph{c-transition system} $\cs{\K}$)
        for a KAB
        \asr{$\K =(\tbox,\initABox,\actSet,\procSet)$}{$\K =(\T,\A_0,\Gamma,\Pi)$}  is a
        (possibly infinite-state)
        transition system
  $(\CONST,\T,\Sigma,s_0,\abox,\trans)$
  where:
  \begin{compactenum}[(1)]
        \item
        \asr{$s_0 = \tup{\initABox,\emptyset}$}{$s_0 = \tup{\A_0,\emptyset}$};
      \item $\Sigma$ and
$\trans$ are defined by simultaneous induction
as the smallest sets satisfying the following properties:
        \begin{compactenum}[(i)] \item $s_0 \in \Sigma$;
        \item (\emph{action step})
                if $\tup{\A,\rmap} \in \Sigma$
                and
                $\temp \not \in \A$,
                then for
                all actions \asr{$\action$}{$\alpha$} in \asr{$\actSet$}{$\Gamma$},
                for all substitutions $\sigma$
                for the parameters
                of \asr{$\action$}{$\alpha$}
                and for all
                $\tup{\A',\rmap'}$
                s.t.~\asr{$\tup{\tup{\A,\rmap},
                \action\sigma,\tup{\A',\rmap'}}
                \in \exec_\K$}{$\tup{\tup{\A,\rmap},
                \alpha\sigma,\tup{\A',\rmap'}}
                \in \exec_\K$},
                let
                $A'' = A' \cup \{\temp\}$,
                and then
                $\tup{\A'',\rmap'}\in\Sigma$
                and
                $\tup{\A,\rmap}\trans \tup{\A'',\rmap'}$;
        \item
        (\emph{repair step})
        if $\tup{\A,\rmap} \in \Sigma$
        and
        $\temp \in \A$,
        then
        for b-repair $A'$
        (resp.~c-repair $A'$)
        of $\A-\{\temp\}$ with $\T$, we have
        $\tup{\A',\rmap}\in\Sigma$ and
        $\tup{\A,\rmap} \trans \tup{\A',\rmap}$.
\end{compactenum}
\end{compactenum}
We refer to KABs with b-transition
(resp.~c-transition) system semantics
as b-KAB (resp.~c-KAB).

\begin{example}\label{Ex-RKABTransitionSystem}
\label{ex:ts-rep}
\footnotesize{ Under b-repair semantics, the KAB in our running
  example looks as follows. Since $A'$ is $T$-inconsistent, we have
  two bold repairs, $A_1$ and $A_2$, which in turn \asr{give rise}{give raise} to two
  runs: $\tup{A_0, \emptyset} \trans \tup{A'_r, \emptyset} \trans
  \tup{A_1, \emptyset}$ and $\tup{A_0, \emptyset} \trans \tup{A'_r,
    \emptyset} \trans \tup{A_2, \emptyset}$, where $A'_r = \{A' \cup
  \{\temp\}$. Since instead \asr{$\action_1$}{$\gamma_1$} does not lead to
  any inconsistency, for every candidate successor $A'' = \{G(x)\}$
  with $m = \{(f(a) \mapsto x)\}$ (see Example~\ref{ex:ts}), we \asd{we} have
  $\tup{A_0, \emptyset} \trans \tup{A'' \cup \{\temp\},
    m} \trans \tup{A'', m}$, reflecting that in this case the repair
  service just maintains the resulting ABox unaltered.  \qed }
\end{example}

\subsection{Verification Under Repair Semantics}
\label{sec:verification2}

We observe that the alternation between an action and a repair step makes EQL queries meaningless for the
intermediate states produced as a result of action steps, because the
resulting ABox could be in fact $\T$-inconsistent (see, e.g.,  state
$\tup{A'_r,\emptyset}$ in Example~\ref{ex:ts-rep}). In fact,
such intermediate states are needed 
just to capture the dynamic structure
that reflects the behaviour of the system. E.g., state
$\tup{A'_r,\emptyset}$ in Example~\ref{ex:ts-rep} has two successor
states, attesting that the repair service with bold semantics will
produce one between two possible repairs.

In this light, we introduce the
\emph{inconsistency-tolerant} temporal logic $\muladomit$,
which is a fragment of $\muladom$ defined as:
\[
  \Phi ~:=~ Q \mid \lnot \Phi \mid \Phi_1 \land \Phi_2
  \mid \exists x.\Phi \mid \DIAM{\BOX{\Phi}} \mid
  \BOX{\BOX{\Phi}} \mid Z \mid \mu Z.\Phi
\]
Beside the standard abbreviations introduced for $\muladom$, we also
make use of the following: $\DIAM{\DIAM{\Phi}} = \neg \BOX{\BOX{\neg
    \Phi}}$, and $\BOX{\DIAM{\Phi}} = \neg \DIAM{\BOX{\neg
    \Phi}}$.
This logic can be used to express interesting properties over b- and c-KABs, exploiting different combinations
of the $\DIAM{}$ and $\BOX{}$ next-state operators so as to quantify over the
possible action steps and corresponding repair steps, ensuring at the
same time that only the $\T$-consistent states produced by the repair
steps are
queried. For example,
$\mu Z.(\Phi \lor \DIAM{\DIAM{Z}})$ models the ``optimistic''
reachability of $\Phi$, stating that there exists a sequence of action
and repair steps, s.t.~$\Phi$
eventually holds. Conversely, $\mu Z.(\Phi \lor \DIAM{\BOX{Z}})$
models the ``robust'' reachability of $\Phi$, stating the existence of
a sequence of action steps leading to $\Phi$ independently from the
behaviour of the repair service. This patterns can be nested into more
complex properties that express requirements about the acceptable
progressions of the system, taking into account data and
repairs. E.g., $\nu Z.(\forall x. Stud(x) \limp \mu Y.(Grad(x) \lor
\DIAM{\BOX{Y}})) \land \BOX{\BOX{Z}}$ states that, for every student
$x$ encountered in any state of the system, it is possible to ``robustly''
reach a state where $x$ becomes graduated.

Since for a given ABox there exist finitely many b-repairs, and one
c-repair, the technique used to prove decidability of properties for
run-bounded \skab s can be extended to deal with b- and c-KABs as well.

\begin{theorem}
\label{thm:itkabs-dec}
Verification of \muladomit properties over run-bounded b-KABs and c-KABs is
decidable.
\end{theorem}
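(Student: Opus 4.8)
The plan is to extend the finite-state abstraction technique behind Theorem~\ref{thm:dec} to the two-layered transition systems $\bs{\K}$ and $\cs{\K}$, where runs now alternate action steps (states carrying the marker $\temp$) with repair steps (proper, $\T$-consistent states). The key preliminary observation is that a repair step is \emph{data-non-creative}: every b-repair of an ABox $A$ with $\T$ is a subset of $A$, and the c-repair is an intersection of such subsets, so $\adom{A'} \subseteq \adom{A}$ for any repaired $A'$. Hence the values occurring along a run of $\bs{\K}$ (resp.\ $\cs{\K}$) are entirely determined by the action steps, and run-boundedness of the b-KAB (resp.\ c-KAB) — meaning, as for S-KABs, that there is a uniform bound $b$ with $\left|\bigcup_{s \text{ in } \tau}\adom{\abox(s)}\right| < b$ for every run $\tau$ — is exactly the hypothesis that lets the abstraction go through. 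Moreover, by definition there are only finitely many (at most exponentially many) b-repairs of a given ABox, and exactly one c-repair, so every $\temp$-marked state has finitely many successors.

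Next I would replay the construction of Theorem~\ref{thm:dec}: fix the bisimulation that equates two states whenever their ABoxes are isomorphic via a renaming that is the identity on the distinguished constants $\iconst$ and on the marker constant $\mathit{temp}$, and prune $\bs{\K}$ to a finite abstract transition system $\Theta$ — at each action step only finitely many recyclings of fresh service-call results need to be kept (thanks to $b$, exactly as in the S-KAB case), while at each repair step \emph{all} of the — finitely many — repair branches are kept. The resulting $\Theta$ is finite, of size exponential in the initial state and in $b$. The crucial point specific to this setting is that $\muladomit$ is the fragment of $\muladom$ in which every EQL query and every quantification over the active domain is guarded by a \emph{pair} of next-state operators ($\DIAM{\BOX{\cdot}}$ or $\BOX{\BOX{\cdot}}$); hence a $\muladomit$ formula never inspects the data content of a $\temp$-marked state, only its branching. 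So the bisimulation need only preserve the data of the proper (repaired) states together with the branching structure of the intermediate ones — which the paired-modality form grants — and one concludes, by the standard bisimulation-invariance argument for the $\mu$-calculus specialized to this fragment, that $\bs{\K}\models\Phi$ iff $\Theta\models\Phi$. Since $\Theta$ is finite, $\Phi$ is then decided by finite-state model checking of propositional $\mu$-calculus. The c-KAB case is analogous and simpler, the repair step being deterministic.

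I expect the main obstacle to be precisely the interaction of the two sources of branching at the level of the abstraction: pruning the infinitely many fresh-value choices at action steps while retaining \emph{every} b-repair at repair steps, and proving that this pruning is a $\muladomit$-bisimulation, i.e.\ that $\DIAM{\BOX{\cdot}}$ and $\BOX{\BOX{\cdot}}$ are evaluated identically in $\bs{\K}$ and in $\Theta$. Care is needed so that $\temp$ is treated as a genuine, distinguished part of the ABox by the renaming — so action-states and repair-states are never conflated — and so that persistence of quantified individuals across the action/repair alternation is correctly mirrored in $\Theta$. An essentially equivalent alternative is to compile each ``action step followed by repair step'' into a single non-deterministic transition, obtaining a standard-shaped transition system to which Theorem~\ref{thm:dec} applies almost verbatim; the work then shifts to showing this contracted system is run-bounded and that the contraction maps $\muladomit$ faithfully onto $\muladom$.
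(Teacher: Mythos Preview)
Your proposal is correct and follows the same overall strategy as the paper: prune the action-step branching via equality commitments exactly as for \skab s, retain all (finitely many) repair branches, and show the resulting finite system is bisimilar to $\bs{\K}$ (resp.\ $\cs{\K}$).

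The one place where you diverge is in the role you assign to $\muladomit$. You anticipate needing a weakened, $\muladomit$-specific bisimulation that ``need only preserve the data of the proper (repaired) states together with the branching structure of the intermediate ones,'' and flag the paired-modality argument as the main obstacle. The paper sidesteps this entirely: it proves that the pruning is a \emph{full} history-preserving bisimulation, preserving ABox isomorphism at \emph{every} state, including the $\temp$-marked (possibly $\T$-inconsistent) ones. The only new ingredient beyond the \skab\ case is the observation that repairs commute with isomorphisms---if $A_2 = h(A_1)$ then $h$ maps $\Rep(A_1,\T)$ bijectively onto $\Rep(A_2,\T)$---which is immediate from FO-rewritability of satisfiability in \dllitea. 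With full bisimulation in hand, invariance for $\muladomit$ is a one-line corollary of $\muladomit \subseteq \muladom$ together with the invariance theorem already proved for \skab s; no argument specific to the nested modalities is required. So what you flag as the hard part is in fact the easy part, and the actual technical content is the repair-commutes-with-isomorphism lemma, which you do not state explicitly but would inevitably hit when carrying out the bisimulation proof.
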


The precise relationship between b-KABs and c-KABs remains to be investigated.

\section{Extended Repair Semantic for KABs}
B-KABs and c-KABs provide an inconsistency-handling
semantics to KABs. However, despite dealing with possible repairs
when some action step produces a $\T$-inconsistent ABox, they do not
explicitly track whether a repair has been actually enforced, nor do they provide 
finer-grained insights about which TBox assertions were
involved in the inconsistency. We extend the repair execution
semantics of so as to equip the transition system with this additional
information. 

While \dllitea does not allow, in general, to uniquely extract from a
$\T$-inconsistent ABox a set of individuals that are responsible for the inconsistency \cite{CDLLR07}, its \emph{separability}
property \cite{CDLLR07} guarantees that inconsistency arises because
a single negative TBox assertion is violated. More specifically, a
$\T$-inconsistency involves the violation of either  a
functionality assertion or negative inclusion in $\T$. 
Since \dllitea obeys to the restriction that no functional
role can be specialized, the first case can be detected by just considering the ABox and the functionality
assertion alone. Contrariwise, the second requires also to take into account
the positive inclusion assertions (since disjointness propagates downward to the
subclasses). Thanks to the FO rewritability of ontology
satisfiability in \dllitea \cite{CDLLR07}, \asr{check}{each such checks} can be done by
constructing a FOL boolean query that corresponds to the
considered functional or negative inclusion assertion, and that can be
directly evaluated over the ABox, considered as a database of facts.

Following \cite{CDLLR07}, given a functionality assertion $\funct{R}$, we
construct the query $\qunsatf(\funct{R}) = \exists x,
x_1,x_2.\eta(R,x,x_1) \land \eta(R,x,x_2) \land x_1 \neq x_2$, where
$\eta(R,x,y) = P(x,y)$ if $R = P$, and $\eta(R,x,y) = P(y,x)$ if $R =
P^-$. Given a negative concept inclusion $B_1 \ISA \neg B_2$ and a set
of positive inclusions $\T_p$, we construct the query $\qunsatn(B_1
\ISA \neg B_2,\T_p) = \rew{\T_p,\exists x. \gamma(B_1,x) \land
  \gamma(B_2,x)}$, where $\gamma(B,x) = N(x)$ if $B = N$,
$\gamma(B,x) = P(x,\_)$ if $B = \exists P$, and $\gamma(B,x) =
P(\_,x)$ if $B = \exists P^-$. 
Similarly, given a negative role
inclusion $R_1 \ISA \neg R_2$, we construct the query  $\qunsatn(R_1
\ISA \neg R_2,\T_p) = \rew{\T_p,\exists x_1,x_2. \eta(R_1,x_1,x_2) \land
  \eta(R_2,x_1,x_2)}$.

\subsection{Extended Repair Transition System}
With this machinery at hand, given a KB $(\T,\A)$ we can now compute the set of TBox
assertions in $\T$ that are actually violated by \asd{a} $\A$. To do so, we
assume wlog that $\iconst$ contains one distinguished constant per
TBox assertion in $\T$, and introduce a function $\lab$, that, given a TBox
assertion, returns the corresponding constant.
We then define the set $\viol(\A,\T)$ of constants labeling TBox
assertions in $\T$ violated by $\A$, as:
\[
\begin{array}{@{}l}
  \{ d \in \Delta \mid \exists t \in \T_f \text{ s.t. }
  d = \lab(t) \text{ and } \A \models \qunsatf(t) \} \cup{}\\
  \{ d \in \Delta \mid \exists t \in \T_n \text{ s.t. }
  d = \lab(t) \text{ and } \A \models \qunsatn(t,\T_p)\}
\end{array}
\]

\begin{example}
Consider $\K$ in Example~\ref{RunningExample}, with $\T = \{C \ISA  \neg
D\}$, and $A' = \{D(a), C(a)\}$ in Example~\ref{ex:exec}. Assume that
$\lab(C \ISA \neg D) = \ell$. We have $\phi = \qunsatn(C
\ISA \neg D,\emptyset) = \exists x. C(x) \land D(x)$. Since $A'
\models \phi$, we get $\viol(\A',\T) = \{\ell\}$.
\end{example}

We now employ this information assuming that the repair service
decorates the states it produces with information about which TBox
functional and negative inclusion
assertions have been involved in the repair. This is done with
a fresh concept $\textsf{Viol}$ that keeps track of the labels of
violated TBox assertions.

Formally, we define the \emph{eb-transition system} $\mf{\K}^{eb}$
(resp.~\emph{ec-transition system} $\mf{\K}^{ec}$) for KAB \asr{$\K
=(\tbox,\initABox,\actSet,\procSet)$}{$\K
=(\T,\A_0,\Gamma,\Pi)$} as a
  	(possibly infinite-state) 
	transition system  $(\CONST,\T,\Sigma,s_0,\abox,\trans)$
        constructed starting from $\mf{\K}^b$
        (resp.~$\mf{\K}^c$) by refining the repair step as follows:
	if $\tup{\A,\rmap} \in \Sigma$ 
	and 
	$\temp \in \A$, 
	then 	
	for b-repair $A'$ 
	(resp.~c-repair $A'$)
	of $\A-\{\temp\}$ with $\T$, we have
	$\tup{\A'_v,\rmap}\in\Sigma$ and 
	$\tup{\A,\rmap} \trans \tup{\A'_v,\rmap}$, where 
$A'_v = A' \cup \{\textsf{Viol}(d) \mid d \in
  \viol(\A',\T)\}$.



\subsection{Verification Under Extended Repair Semantics}
Thanks to the insertion of information about violated TBox assertions
in their transition systems, eb-KABs and ec-KABs support the
verification of $\muladomit$ properties that mix dynamic requirements
with queries over the instance-level information and over the
meta-level information related to inconsistency.  Notice that such
properties can indirectly refer to specific TBox assertions, thanks to the fact that their
labels belong to the set of distinguished constants
$\iconst$. Examples of formulae focused on the presence of violations
in the system are:
\begin{compactitem}
\item $\nu Z.(\neg \exists l. \textsf{Viol}(l)) \land \BOX{\BOX{Z}}$ says that no
  state of the system is manipulated by the repair service;
\item $\nu Z.(\forall l.  \textsf{Viol}(l) \limp (\mu Y. \nu W. \neg
  \textsf{Viol}(l)
  \land \BOX{\BOX{W}} \lor \DIAM{\BOX{Y}}) \land \BOX{\BOX{Z}}$ says
  that, in all states, whenever a TBox assertion $a$ is violated, independently
  from the applied repairs there exists a run that reaches a state
  starting from which $a$ will never be violated anymore.
\end{compactitem}

\label{sec:verification-eit}








Since the TBox assertions are finitely many and fixed for a given KAB,
the key decidability result of Theorem~\ref{thm:itkabs-dec} can be
seamlessly carried over to these extended repair semantics.
\begin{theorem}
\label{thm:eitkabs-dec}
Verification of \muladomit properties over run-bounded eb-KABs and ec-KABs is
decidable.
\end{theorem}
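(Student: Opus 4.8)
The plan is to obtain decidability exactly as in Theorems~\ref{thm:dec} and~\ref{thm:itkabs-dec}, reducing verification over a run-bounded eb-KAB (resp.\ ec-KAB) to finite-state model checking of propositional $\mu$-calculus; the only genuinely new point is to verify that the meta-level information injected into $\mf{\K}^{eb}$ and $\mf{\K}^{ec}$ is compatible with the finite-state abstraction. Concretely, for a run-bounded eb-KAB $\K$ I would exhibit a finite abstract transition system bisimilar to $\mf{\K}^{eb}$ with respect to all $\muladom$ properties --- hence, in particular, to all $\muladomit$ properties that query $\textsf{Viol}$ --- and then invoke finite-state $\mu$-calculus model checking.

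First I would record the two facts that make this extension routine. \emph{(i)}~For a fixed TBox $\T$ the map $\A\mapsto\viol(\A,\T)$ is effectively computable, by FO rewritability of satisfiability in \dllitea together with the separability property invoked above, and its range is contained in the fixed finite set $L = \{\lab(t)\mid t\in\T_f\cup\T_n\}$ of distinguished constants, with $L\subseteq\iconst$. \emph{(ii)}~Consequently, passing from $\mf{\K}^b$ to $\mf{\K}^{eb}$ (resp.\ from $\mf{\K}^c$ to $\mf{\K}^{ec}$) merely decorates each post-repair state with a subset of the finitely many facts $\{\textsf{Viol}(d)\mid d\in L\}$; this subset is a function of the (pre-repair) ABox, introduces no fresh non-distinguished value, and enables no new action or repair step. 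In particular the active domain of each state grows by at most $|L|$, so $\K$ is run-bounded as an eb-KAB if and only if it is run-bounded as the underlying b-KAB, and likewise in the certain case.

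Next I would revisit the abstraction behind Theorem~\ref{thm:itkabs-dec}. There one prunes the two-layered transition system to a finite one using a bisimulation that compares states up to the $\temp$ marker and up to EQL-indistinguishability, treating the constants of $\iconst$ rigidly. Since $\textsf{Viol}$ does not occur in $\T$, an EQL query over $\textsf{Viol}$ reduces to testing membership of distinguished constants in a state's $\textsf{Viol}$-extension; and since that extension equals $\viol(\cdot,\T)$ of the repaired ABox --- a boolean, hence isomorphism-invariant condition --- any two states that the base bisimulation identifies receive \emph{identical} $\textsf{Viol}$-decorations. Therefore the bisimulation, the pruning construction, and the finiteness bound of Theorem~\ref{thm:itkabs-dec} transfer essentially verbatim to $\mf{\K}^{eb}$ and $\mf{\K}^{ec}$; the resulting finite abstract transition system satisfies exactly the same $\muladomit$ properties as the original, now including those mixing instance-level and $\textsf{Viol}$-level queries, and decidability follows from finite-state $\mu$-calculus model checking.

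The main obstacle is not any cardinality bookkeeping --- $\textsf{Viol}$ never ranges beyond the finitely many TBox-assertion labels --- but making sure that the meta-level decoration is genuinely transparent to the abstraction: one has to confirm that enriching states with $\textsf{Viol}$-facts over distinguished constants neither refines state-equivalence in a way that defeats run-boundedness nor creates spurious state distinctions or branches, so that the soundness-and-completeness lemma underpinning Theorem~\ref{thm:itkabs-dec} keeps applying. Since distinguished constants are already handled rigidly both by $\muladom$ and by that bisimulation, this amounts to observing that $\textsf{Viol}$ behaves like any concept whose extension is confined to $\iconst$, which the existing machinery accommodates.
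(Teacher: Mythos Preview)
Your proposal is correct and matches the paper's own argument essentially point for point: the paper also reduces to the proof of Theorem~\ref{thm:itkabs-dec} by redoing the pruning construction for $\mf{\K}^{eb}$ (resp.\ $\mf{\K}^{ec}$) and observing that the $\textsf{Viol}$-decoration is computed by closed first-order queries, so that isomorphic ABoxes $A$ and $h(A)$ yield $\viol(A,\T)=\viol(h(A),\T)$ and the history-preserving bisimulation goes through unchanged. Your additional remark that run-boundedness is unaffected (because only finitely many distinguished constants from $\iconst$ are ever added) is a helpful sanity check that the paper leaves implicit.
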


\subsection{From Standard to Extended Repair KABs}
It is clear that extended repair KABs are richer than repair KABs. 
  We now show that eb- and ec-KABs are also richer than \skab s, thanks to the fact that
information about the violated TBox assertions is explicitly tracked
in all states resulting from a repair step. In particular,
verification of \muladom properties over a KAB $\K$ under standard semantics can be recast as a
corresponding verification problem over $\K$ interpreted
either under extended bold or extended certain repair semantics. The intuition behind
the reduction is that a property holds over $\mf{\K}^s$  if that property
holds in the portion of the $\mf{\K}^{eb}$ (or $\mf{\K}^{ec}$) where no TBox
assertion is violated. The absence of violation can be checked over $\T$-consistent states by
issuing the EQL query $\neg \exists
x. [\textsf{Viol}(x)]$. Technically, we define a translation function $\cittransl$
 that transforms an arbitrary \muladom property $\Phi$ into a
 \muladomit property $\Phi' = \cittransl(\Phi)$. The translation $\cittransl(\Phi)$ is inductively defined by recurring over the structure of $\Phi$
   and substituting each occurrence of $\DIAM{\Psi}$ with
   $\DIAM{\DIAM{((\neg \exists x. \textsf{Viol}(x)) \land\cittransl(\Psi))}}$, and
   each  occurrence of $\BOX{\Psi}$ with $\BOX{\DIAM{((\neg \exists
       x. \textsf{Viol}(x)) \limp \cittransl(\Psi))}}$. Observe that,
   in $\tau$, the choice of $\DIAM{}$ for the nested operator can be
   substituted by $\BOX{}$, because for $\T$-consistent
   states produced by an action step, the repair step simply copy the
   resulting state, generating a unique successor even in the eb-semantics.


\begin{theorem} 
Given a KAB $\K$ and a $\muladom$ property $\Phi$, 
$\mf{\K}^s \models \Phi$ iff $\mf{\K}^{eb}
\models \cittransl(\Phi)$ iff $\mf{\K}^{ec}
\models \cittransl(\Phi)$. 
\end{theorem}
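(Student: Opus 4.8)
The plan is to establish the two ``iff''s by constructing, for each of the semantics $\mf{\K}^{eb}$ and $\mf{\K}^{ec}$, a correspondence between the runs of $\mf{\K}^s$ and a suitable subsystem of the extended-repair transition system, and then to show that $\cittransl$ faithfully reflects this correspondence at the level of the $\muladom$/$\muladomit$ semantics. The key observation I would exploit first is the structural fact noted just before the theorem: in $\mf{\K}^{eb}$ (and a fortiori in $\mf{\K}^{ec}$), whenever an action step produces a $\T$-consistent ABox $A'$, the subsequent repair step leaves $A'$ unaltered and adds no $\textsf{Viol}$ atom (since $\viol(A',\T) = \emptyset$), so the state $\tup{A' \cup \{\temp\}, m}$ has a \emph{unique} successor $\tup{A', m}$. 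Conversely, when $A'$ is $\T$-inconsistent, every successor of $\tup{A' \cup \{\temp\}, m}$ contains at least one $\textsf{Viol}$ atom. Hence the $\T$-consistent states reachable in $\mf{\K}^{eb}$ along paths that never pass through a repaired state carrying a $\textsf{Viol}$ atom are exactly (after stripping the intermediate $\temp$-marked states) the states of $\mf{\K}^s$, and this correspondence commutes with the transition relation: a single $\trans$-step of $\mf{\K}^s$ corresponds to exactly two $\trans$-steps (action then trivial repair) in $\mf{\K}^{eb}$, and the second of these is deterministic.

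Next I would fix, by induction on the structure of $\Phi$, the semantic claim: for every state $s = \tup{A,m}$ of $\mf{\K}^s$, every individual valuation $v$ and predicate valuation $V$ (with the $V$ on the $\mf{\K}^{eb}$ side restricted to the image of the correspondence), $s \in \MODAT{\Phi}{\mf{\K}^s}$ iff $s \in \MODAT{\cittransl(\Phi)}{\mf{\K}^{eb}}$. The atomic, boolean, and first-order quantifier cases are immediate because $\abox(s)$ is the same ABox on both sides and $\cittransl$ does not touch these constructs; the fixpoint and predicate-variable cases follow by the usual monotonicity/approximant argument once the modal cases are settled. For $\DIAM{\Psi}$: a successor of $s$ in $\mf{\K}^s$ via action $\action\sigma$ is a $\T$-consistent $\tup{A',m'}$; in $\mf{\K}^{eb}$ the same action step leads to $\tup{A' \cup \{\temp\}, m'}$, whose unique successor is $\tup{A', m'}$, a state satisfying $\neg\exists x.\textsf{Viol}(x)$; so $s \in \MODAT{\DIAM{\DIAM{((\neg\exists x.\textsf{Viol}(x)) \land \cittransl(\Psi))}}}{\mf{\K}^{eb}}$ iff there is such an $A'$ with $\tup{A',m'} \in \MODAT{\cittransl(\Psi)}{\mf{\K}^{eb}}$, which by IH is iff $s \in \MODAT{\DIAM{\Psi}}{\mf{\K}^s}$. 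Crucially, an action step of $\mf{\K}^{eb}$ that leads to a $\T$-\emph{inconsistent} $A'$ cannot contribute, because then \emph{every} successor of $\tup{A' \cup \{\temp\},m'}$ carries a $\textsf{Viol}$ atom and so falsifies $(\neg\exists x.\textsf{Viol}(x)) \land \cittransl(\Psi)$; this is exactly why the inner operator may be taken to be $\DIAM{}$. The dual case $\BOX{\Psi} \leadsto \BOX{\DIAM{((\neg\exists x.\textsf{Viol}(x)) \limp \cittransl(\Psi))}}$ is symmetric: over a $\T$-inconsistent action result the implication is vacuously satisfied by every repair successor, so such branches are ``ignored'', while over a $\T$-consistent action result the unique repair successor must satisfy $\cittransl(\Psi)$; and here the inner $\DIAM{}$ and $\BOX{}$ coincide since that successor is unique. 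Finally, instantiating the semantic claim at $s_0 = \tup{\A_0,\emptyset}$, which is $\T$-consistent and thus lies in the correspondence, gives $\mf{\K}^s \models \Phi$ iff $\mf{\K}^{eb} \models \cittransl(\Phi)$. For $\mf{\K}^{ec}$ the argument is identical — in fact slightly simpler, since the c-repair is already deterministic — and one checks that on $\T$-consistent action results the c-repair likewise acts as the identity and adds no $\textsf{Viol}$ atom, so the same correspondence and the same induction go through verbatim.

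The main obstacle I anticipate is bookkeeping the predicate-variable valuations across the correspondence in the fixpoint cases: a $\muladom$ formula may bind $Z$ to an arbitrary subset of the states of $\mf{\K}^s$, whereas in $\mf{\K}^{eb}$ the natural candidates for the extension of $Z$ should be subsets of the ``consistent, un-violated'' sub-part of $\Sigma$, and one must verify that the least-fixpoint computations on the two sides track each other step by step (e.g.\ that the $\temp$-marked intermediate states, which are never in the extension of any subformula that $\cittransl$ produces, do not perturb the approximants). I would handle this by making the correspondence an explicit injection $\iota$ from states of $\mf{\K}^s$ into states of $\mf{\K}^{eb}$, pushing valuations forward and back along $\iota$, and proving by transfinite induction on the fixpoint approximants that $\iota$ maps the $\MODAT{\cdot}{\mf{\K}^s}$-extension of each subformula onto the intersection of the $\MODAT{\cdot}{\mf{\K}^{eb}}$-extension of its $\cittransl$-image with $\mathrm{im}(\iota)$. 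Everything else is routine structural induction, and no new machinery beyond the $\viol$/$\textsf{Viol}$ construction and the determinism-of-trivial-repair observation is needed.
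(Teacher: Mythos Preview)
Your proposal is correct and is a detailed elaboration of what the paper states only as a one-sentence remark: the paper merely says the result ``can be directly obtained by considering the semantics of $\muladom$ and $\muladomit$, and the construction of the transition systems under the three semantics,'' without spelling out any induction. Your case analysis on consistent versus inconsistent action results (using that an inconsistent action result forces every post-repair successor to carry a $\textsf{Viol}$ atom, while a consistent one yields a unique, $\textsf{Viol}$-free successor), together with the explicit injection $\iota$ and the transfinite tracking of fixpoint approximants restricted to $\mathrm{im}(\iota)$, is exactly the natural way to make that ``direct'' claim rigorous, so there is no alternative route in the paper to compare against.
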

The correctness of this result can be directly obtained by considering
the semantics of $\muladom$ and $\muladomit$, and the construction of
the transition systems under the three semantics.

\section{Weakly Acyclic KABs}

So far, all the decidability results here presented have relied on
the assumption that the considered KAB is state-bounded.
As pointed out in \cite{BCDDM13}, \emph{run boundedness} is a semantic
condition that is undecidable to check. In \cite{BCDDM13}, a
sufficient, syntactic condition borrowed from \emph{weak acyclicity}
in data exchange \cite{FKMP05} has been proposed to actually check
whether the KAB under study is run bounded and, in turn, verifiable.

Intuitively, given a KAB $\K$, this test constructs a dependency graph
tracking how the actions of $\K$ transport values from one
state to the next one. To track all the actual dependencies, every
involved query is first rewritten considering the positive inclusion
assertions of the TBox. Two types of dependencies are tracked: copy of
values and use of values as parameters of a service call. $\K$ is
said to be \emph{weakly acyclic} if there is no cyclic chain of dependencies
of the second kind. The presence of such a cycle could produce an
infinite chain of fresh values generation through service calls.

The crux of the proof showing that weakly acyclicity ensures run
boundedness is based on the notion of \emph{positive
  dominant}, which creates a simplified version of the KAB that, from
the execution point of view, obeys to three key properties.
%
First, its execution consists of a single run that closely resembles the chase of a set of tuple-generating dependencies, which terminates
under the assumption of weak acyclicity \cite{FKMP05}, guaranteeing
that the positive dominant is indeed run-bounded. Second, it considers only the positive inclusion assertions of the
TBox, therefore producing always the same behaviour independently from
which execution semantics is chosen, among the ones discussed in this
paper. Third, for every run contained in each of the transition
systems generated under the standard, bold repair, certain repair, and
their extended versions, the values accumulated along the run are
``bounded'' by the ones contained in the unique run of the positive
dominant. This makes it possible to directly carry run-boundedness
from the positive dominant to the original KAB, independently from
which execution semantics is considered.


\begin{theorem}
\label{thm:wa}
Given a weakly acyclic KAB $\K$, we have that $\mf{\K}^s$, $\mf{\K}^b$, $\mf{\K}^c$, $\mf{\K}^{eb}$, $\mf{\K}^{ec}$ 
are all run-bounded. 
\end{theorem}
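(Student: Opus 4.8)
The plan is to carry out the three-part argument outlined just above the statement. First I would associate to the weakly acyclic KAB $\K$ its \emph{positive dominant} $\K^+$: the KAB obtained by (i)~replacing the TBox $\T$ by its positive-inclusion fragment $\T_p$ only, discarding $\T_n$ and $\T_f$; (ii)~replacing every effect $\map{[q^+_i]\land Q^-_i}{E_i}$ by $\map{[\rew{q^+_i}]}{E_i}$, i.e.\ dropping the negative ECQ conjunct $Q^-_i$ and precompiling the \dllitea rewriting of the positive part w.r.t.\ $\T_p$, and likewise turning every condition/action rule into a maximally permissive one, so that the process no longer restricts anything; and (iii)~fixing, for each service-call instance $f(\vec v)$, one distinct fresh value, so that service evaluation is deterministic. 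Since $\K^+$ carries no negative constraints it needs no repair step, and since all conditions are maximally permissive and services are deterministic, its execution collapses to essentially a single run $\tau^+$, common to the standard, bold, certain and both extended readings.

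The second step is to show that $\tau^+$ accumulates only finitely many distinct values. Here $\tau^+$ mirrors, step by step, the chase of the set of tuple-generating dependencies obtained from the effect specifications of $\Gamma$ once their bodies have been rewritten w.r.t.\ the positive inclusions in $\T_p$ — each effect, with its Skolem terms, being literally a tgd whose existentials are realised by service calls. The weak-acyclicity hypothesis on $\K$ is tailored exactly so that this tgd set is weakly acyclic in the sense of \cite{FKMP05}; hence its chase terminates, so $\tau^+$ stabilises after finitely many steps onto a fixed ABox whose active domain has some size $b^+$ depending only on $\K$ and $A_0$.

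The third and central step is the \emph{domination lemma}, proved uniformly over the five semantics by induction on prefixes of an arbitrary run $\tau$ of $\mf{\K}^s$, $\mf{\K}^b$, $\mf{\K}^c$, $\mf{\K}^{eb}$ or $\mf{\K}^{ec}$: for every state $s$ of $\tau$ there is a homomorphism from $\abox(s)$ into the limit ABox of $\tau^+$, and these can be chosen so that every value occurring anywhere along $\tau$ — apart from the fixed, finitely many distinguished constants used for the $\temp$ marker, the TBox-assertion labels and the $\textsf{Viol}$ atoms — lies in the $b^+$-element active domain of $\tau^+$. The induction rests on three facts that hold in all five cases: (a)~every ABox an action step reads is $\T$-consistent — under the standard semantics inconsistent states are rejected, while under the repair semantics an action step reads only $A_0$ or a (bold/certain) repaired, hence consistent, ABox — so \dllitea FO-rewritability applies and the more permissive effects of $\K^+$ fire on a superset of the bindings used by $\tau$; (b)~a repair step, bold or certain, replaces an ABox by one of its subsets, which can only shrink the active domain, and the $\temp$ and $\textsf{Viol}$ atoms only ever introduce constants from the fixed finite label set; and (c)~service-call results are reconciled through the service-call map $\rmap$, which makes service semantics deterministic within a single run, so each value freshly generated along $\tau$ is mapped either to the image of the value it reuses or to the distinct fresh value $\K^+$ assigns to the corresponding call instance. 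The lemma then gives a bound $b^+ + c_0$, with $c_0$ the fixed number of relevant distinguished constants, that is independent of the run and of the semantics, so all of $\mf{\K}^s$, $\mf{\K}^b$, $\mf{\K}^c$, $\mf{\K}^{eb}$, $\mf{\K}^{ec}$ are run-bounded. I expect the main obstacle to be exactly this uniform treatment: designing one positive dominant and one induction that simultaneously fit the standard, bold-repair, certain-repair and both extended semantics, and checking in particular that the two-layered action/repair structure — whose $\temp$-marked intermediate states carry possibly $\T$-inconsistent ABoxes — and the additional meta-level $\textsf{Viol}$ atoms neither break the monotonicity of the effects nor leak unbounded values into the runs.
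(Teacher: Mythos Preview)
Your three-step plan (positive dominant, chase termination, domination lemma) matches the paper's architecture, and your uniform treatment of the five semantics---via the observation that repairs only shrink ABoxes and that the $\temp$/$\textsf{Viol}$ atoms introduce only finitely many distinguished constants---is sound in spirit. There is, however, a real gap in how you set up the domination step.

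You make service calls in $\K^+$ deterministic (``one distinct fresh value per call instance'') so as to collapse its execution to a single $\tau^+$, and then argue via a homomorphism from each $\abox(s)$ into the limit ABox of $\tau^+$. This breaks down. If a service call along $\tau$ happens to return a value already present---say $f(a)$ returns $a$ itself, so the next ABox contains $D(a)$---then your $h$, which must fix $a\in\adom{A_0}$, would require $D(a)$ to lie in $\tau^+$'s ABox; but $\tau^+$ has $D(\theta^+(f(a)))$ with $\theta^+(f(a))\neq a$ by construction, so the ABox-level homomorphism fails. Without it you cannot argue that the ``corresponding call instance'' $f(h(\vec v))$ is ever issued in $\tau^+$, and the inductive bound on fresh values collapses. (Also, with several actions $\K^+$ does not literally have a single run, and $\doo{}{}{}$ replaces rather than accumulates, so the ``limit ABox'' is not automatically well-defined.)

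The fix---and this is precisely what the paper does---is to leave services in $\K^+$ non-deterministic and, for a given run $\tau$ of $\K$, choose a run $\tau^+$ of $\K^+$ that fires corresponding actions and makes the \emph{same} service-call evaluations as $\tau$. One then gets state-by-state ABox \emph{containment} $\abox(\tau(i))\subseteq\abox(\tau^+(i))$ (together with $\rmap_i^+$ extending $\rmap_i$), which immediately transfers run-boundedness from $\K^+$ to $\K$. The paper additionally factors the argument through an intermediate \emph{consistent approximant} $\K^p$, whose TBox is $T_p$, whose initial ABox is saturated with all $\textsf{Viol}(\lab(t))$ assertions, and whose actions copy them forward, so that domination is argued in two simpler steps ($\K$ dominated by $\K^p$, and $\K^p$ dominated by $\K^+$). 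Your alternative of absorbing the $\textsf{Viol}$/$\temp$ constants into an additive $c_0$ is viable, but the essential missing ingredient is the per-run matching of service calls in place of a fixed deterministic $\K^+$.
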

Theorem~\ref{thm:wa} shows that weak acyclicity is an effective method
to check verifiability of KABs under all inconsistency-aware semantics
considered in this paper.

\section{Conclusion}
\label{sec:conclusion}
We have approached the problem of inconsistency
handling in Knowledge and Action Bases, by resorting to an approach
based on ABox repairs. 
An orthogonal approach to the one taken is to maintain ABoxes
that are inconsistent with the TBox as states of the transition system, and
rely, both for the progression mechanism and for answering queries
used in verification, on consistent query answering
\cite{Bert06,LLRRS10}. Notably, we are able to show that the
decidability and complexity results established for the repair-based
approaches carry over also to this setting. It remains open to
investigate the relationship between these orthogonal approaches to
dealing with inconsistency in KABs.

\bibliographystyle{named}
\bibliography{main-bib}

\begin{thebibliography}{}

\bibitem[\protect\citeauthoryear{Bagheri~Hariri \bgroup \em et al.\egroup
  }{2012}]{BCDD*12}
Babak Bagheri~Hariri, Diego Calvanese, Giuseppe De~Giacomo, Riccardo
  De~Masellis, Marco Montali, and Paolo Felli.
\newblock Verification of description logic {K}nowledge and {A}ction {B}ases.
\newblock In {\em Proc.\ of the 20th Eur.\ Conf.\ on Artificial Intelligence
  (ECAI~2012)}, pages 103--108, 2012.

\bibitem[\protect\citeauthoryear{Bagheri~Hariri \bgroup \em et al.\egroup
  }{2013}]{BCDDM13}
Babak Bagheri~Hariri, Diego Calvanese, Giuseppe De~Giacomo, Alin Deutsch, and
  Marco Montali.
\newblock Verification of relational data-centric dynamic systems with external
  services.
\newblock In {\em Proc.\ of the 32nd ACM SIGACT SIGMOD SIGART Symp.\ on
  Principles of Database Systems (PODS~2013)}, 2013.

\bibitem[\protect\citeauthoryear{Bertossi}{2006}]{Bert06}
Leopoldo~E. Bertossi.
\newblock Consistent query answering in databases.
\newblock {\em {SIGMOD} Record}, 35(2):68--76, 2006.

\bibitem[\protect\citeauthoryear{Burkart \bgroup \em et al.\egroup
  }{2001}]{BCMS01}
O.~Burkart, D.~Caucal, F.~Moller, and B.~Steffen.
\newblock Verification of infinite structures.
\newblock In {\em Handbook of Process Algebra}. Elsevier Science, 2001.

\bibitem[\protect\citeauthoryear{Calvanese \bgroup \em et al.\egroup
  }{2007a}]{CDLLR07b}
Diego Calvanese, Giuseppe De~Giacomo, Domenico Lembo, Maurizio Lenzerini, and
  Riccardo Rosati.
\newblock {EQL-Lite}: {E}ffective first-order query processing in description
  logics.
\newblock In {\em Proc.\ of the 20th Int.\ Joint Conf.\ on Artificial
  Intelligence (IJCAI~2007)}, pages 274--279, 2007.

\bibitem[\protect\citeauthoryear{Calvanese \bgroup \em et al.\egroup
  }{2007b}]{CDLLR07}
Diego Calvanese, Giuseppe De~Giacomo, Domenico Lembo, Maurizio Lenzerini, and
  Riccardo Rosati.
\newblock Tractable reasoning and efficient query answering in description
  logics: The \textit{DL-Lite} family.
\newblock {\em J.\ of Automated Reasoning}, 39(3):385--429, 2007.

\bibitem[\protect\citeauthoryear{Calvanese \bgroup \em et al.\egroup
  }{2009}]{CDLL*09}
Diego Calvanese, Giuseppe De~Giacomo, Domenico Lembo, Maurizio Lenzerini,
  Antonella Poggi, Mariano Rodr{\'\i}guez-Muro, and Riccardo Rosati.
\newblock Ontologies and databases: The \textit{DL-Lite} approach.
\newblock In Sergio Tessaris and Enrico Franconi, editors, {\em Reasoning Web.
  Semantic Technologies for Informations Systems -- 5th Int.\ Summer School
  Tutorial Lectures (RW~2009)}, volume 5689 of {\em Lecture Notes in Computer
  Science}, pages 255--356. Springer, 2009.

\bibitem[\protect\citeauthoryear{Calvanese \bgroup \em et al.\egroup
  }{2010}]{CKNZ10b}
Diego Calvanese, Evgeny Kharlamov, Werner Nutt, and Dmitriy Zheleznyakov.
\newblock Evolution of \textit{DL-Lite} knowledge bases.
\newblock In {\em Proc.\ of the 9th Int.\ Semantic Web Conf.\ (ISWC~2010)},
  volume 6496 of {\em Lecture Notes in Computer Science}, pages 112--128.
  Springer, 2010.

\bibitem[\protect\citeauthoryear{Calvanese \bgroup \em et al.\egroup
  }{2012}]{CDLMS12}
Diego Calvanese, Giuseppe De~Giacomo, Domenico Lembo, Marco Montali, and Ario
  Santoso.
\newblock Ontology-based governance of data-aware processes.
\newblock In {\em Proc.\ of the 6th Int.\ Conf.\ on Web Reasoning and Rule
  Systems (RR~2012)}, volume 7497 of {\em Lecture Notes in Computer Science},
  pages 25--41. Springer, 2012.

\bibitem[\protect\citeauthoryear{Clarke \bgroup \em et al.\egroup
  }{1999}]{ClGP99}
Edmund~M. Clarke, Orna Grumberg, and Doron~A. Peled.
\newblock {\em Model checking}.
\newblock The MIT Press, Cambridge, MA, USA, 1999.

\bibitem[\protect\citeauthoryear{Deutsch \bgroup \em et al.\egroup
  }{2007}]{DeSV07}
Alin Deutsch, Liying Sui, and Victor Vianu.
\newblock Specification and verification of data-driven web applications.
\newblock {\em J.\ of Computer and System Sciences}, 73(3):442--474, 2007.

\bibitem[\protect\citeauthoryear{Deutsch \bgroup \em et al.\egroup
  }{2009}]{DHPV09}
Alin Deutsch, Richard Hull, Fabio Patrizi, and Victor Vianu.
\newblock Automatic verification of data-centric business processes.
\newblock In {\em Proc.\ of the 12th Int.\ Conf.\ on Database Theory
  (ICDT~2009)}, pages 252--267, 2009.

\bibitem[\protect\citeauthoryear{Eiter and Gottlob}{1992}]{EiGo92}
Thomas Eiter and Georg Gottlob.
\newblock On the complexity of propositional knowledge base revision, updates
  and counterfactuals.
\newblock {\em Artificial Intelligence}, 57:227--270, 1992.

\bibitem[\protect\citeauthoryear{Emerson}{1997}]{Emer97}
E.~Allen Emerson.
\newblock Model checking and the {M}u-calculus.
\newblock In N.~Immerman and P.~Kolaitis, editors, {\em Proc.\ of the DIMACS
  Symposium on Descriptive Complexity and Finite Model}, pages 185--214.
  American Mathematical Society Press, 1997.

\bibitem[\protect\citeauthoryear{Fagin \bgroup \em et al.\egroup
  }{2005}]{FKMP05}
Ronald Fagin, Phokion~G. Kolaitis, Ren{\'e}e~J. Miller, and Lucian Popa.
\newblock Data exchange: {S}emantics and query answering.
\newblock {\em Theoretical Computer Science}, 336(1):89--124, 2005.

\bibitem[\protect\citeauthoryear{Flouris \bgroup \em et al.\egroup
  }{2008}]{FMKPA08}
Giorgos Flouris, Dimitris Manakanatas, Haridimos Kondylakis, Dimitris
  Plexousakis, and Grigoris Antoniou.
\newblock Ontology change: Classification and survey.
\newblock {\em Knowledge Engineering Review}, 23(2):117--152, 2008.

\bibitem[\protect\citeauthoryear{Ghallab \bgroup \em et al.\egroup
  }{2004}]{GhNT04}
Malik Ghallab, Dana~S. Nau, and Paolo Traverso.
\newblock {\em Automated planning -- {T}heory and Practice}.
\newblock Elsevier, 2004.

\bibitem[\protect\citeauthoryear{Lembo \bgroup \em et al.\egroup
  }{2010}]{LLRRS10}
Domenico Lembo, Maurizio Lenzerini, Riccardo Rosati, Marco Ruzzi, and
  Domenico~Fabio Savo.
\newblock Inconsistency-tolerant semantics for description logics.
\newblock In {\em Proc.\ of the 4th Int.\ Conf.\ on Web Reasoning and Rule
  Systems (RR~2010)}, pages 103--117, 2010.

\bibitem[\protect\citeauthoryear{Limonad \bgroup \em et al.\egroup
  }{2012}]{LDLHV12}
Lior Limonad, Pieter De~Leenheer, Mark Linehan, Rick Hull, and Roman Vaculin.
\newblock Ontology of dynamic entities.
\newblock In {\em Proc.\ of the 31st Int.\ Conf.\ on Conceptual Modeling
  (ER~2012)}, 2012.

\bibitem[\protect\citeauthoryear{Park}{1976}]{Park76}
David Michael~Ritchie Park.
\newblock Finiteness is {M}u-ineffable.
\newblock {\em Theoretical Computer Science}, 3(2):173--181, 1976.

\bibitem[\protect\citeauthoryear{Poggi \bgroup \em et al.\egroup
  }{2008}]{PLCD*08}
Antonella Poggi, Domenico Lembo, Diego Calvanese, Giuseppe De~Giacomo, Maurizio
  Lenzerini, and Riccardo Rosati.
\newblock Linking data to ontologies.
\newblock {\em J.\ on Data Semantics}, X:133--173, 2008.

\bibitem[\protect\citeauthoryear{Stirling}{2001}]{Stir01}
Colin Stirling.
\newblock {\em Modal and Temporal Properties of Processes}.
\newblock Springer, 2001.

\bibitem[\protect\citeauthoryear{Vianu}{2009}]{Vian09}
Victor Vianu.
\newblock Automatic verification of database-driven systems: a new frontier.
\newblock In {\em Proc.\ of the 12th Int.\ Conf.\ on Database Theory
  (ICDT~2009)}, pages 1--13, 2009.

\bibitem[\protect\citeauthoryear{Winslett}{1990}]{Wins90}
Marianne Winslett.
\newblock {\em {U}pdating {L}ogical {D}atabases}.
\newblock Cambridge University Press, 1990.

\end{thebibliography}

\clearpage
\appendix
\section{Bisimulation and Invariance}

\newcommand{\anseq}{\simeq}

We start by introducing the notion of isomorphism between ABoxes.
Two ABoxes $A_1$ and $A_2$ are \emph{isomorphic}, written $A_1 \equiv A_2$, if there exists a
bijection $h: S_1 \rightarrow S_2$, with $\adom{A_1} \cup \iconst
\subseteq S_1$ and $\adom{A_2} \cup \iconst \subseteq S_2$, which is
the identity on $\iconst$, and s.t.:
\begin{compactenum}
\item for every concept assertion $N(d) \in A_1$, $N(h(d)) \in A_2$;
\item for every role assertion $P(d_1,d_2) \in A_1$, $N(h(d_1),h(d_2))
  \in A_2$;
\item for every concept assertion $N(d) \in A_2$, $N(h^{-1}(d)) \in A_1$;
\item for every role assertion $P(d_1,d_2) \in A_2$, $N(h^{-1}(d_1),h^{-1}(d_2))
  \in A_1$.
\end{compactenum}
We write $A_1 \equiv_h A_2$ to make $h$ explicit. Furthermore, with a
slight abuse of notation, we write $A_2 = h(A_1)$, and $A_1 =
h^{-1}(A_2)$, when there exists a
bijection $h: S_1 \rightarrow S_2$, with $\adom{A_1} \cup \iconst
\subseteq S_1$ and $\adom{A_2} \cup \iconst \subseteq S_2$, s.t.~$A_1
\equiv_h A_2$.

It is easy to see that isomorphism implies the following results.

\begin{lemma}
\label{lemma:iso-query}
Consider two knowledge bases $(T,A_1)$ and $(T,A_2)$, s.t.~there
exists a bijection $h$ with $A_2 = h(A_1)$. For every EQL query $q$,
we have $\tup{d_1,\ldots,d_n} \in \Ans{q,T,A_1}$
  iff $\tup{h(d_1),\ldots,h(d_n)} \in \Ans{h(q),T,h(A_1)}$.
\end{lemma}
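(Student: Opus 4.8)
The statement is an instance of invariance of query answering under isomorphism, so the plan is to build, from the ABox isomorphism $h$, a corresponding isomorphism of models, push this through certain answers of UCQs, and finally lift it to arbitrary ECQs by structural induction. Throughout, $h(q)$ denotes the query obtained from $q$ by replacing each constant $c$ occurring in it by $h(c)$; since $h$ is the identity on $\iconst$, only constants drawn from $\adom{A_1}$ are actually affected. I would first record two elementary consequences of $A_2 = h(A_1)$: every term of $\adom{A_1}$ occurs in some assertion of $A_1$, whose $h$-image is an assertion of $A_2$, and symmetrically, so $h$ restricts to a bijection $\adom{A_1}\to\adom{A_2}$, and hence to a bijection $\adom{A_1}\cup\iconst\to\adom{A_2}\cup\iconst$. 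This is what keeps the renaming of the answer tuple in step with the renaming of the query.

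The core step is a model correspondence. Given any model $\I_1=(\Delta,\cdot^{\I_1})$ of $(T,A_1)$, define $\I_2$ over the \emph{same} domain $\Delta$ by putting $N^{\I_2}=N^{\I_1}$ and $P^{\I_2}=P^{\I_1}$ for all concept and role names, and $c^{\I_2}=(h^{-1}(c))^{\I_1}$ for $c\in\adom{A_2}\cup\iconst$. Because $T$ mentions no individual constants and all predicate extensions are unchanged, $\I_2\models T$ iff $\I_1\models T$; and since $N(d)\in A_2$ iff $N(h^{-1}(d))\in A_1$, we get $d^{\I_2}=(h^{-1}(d))^{\I_1}\in N^{\I_1}=N^{\I_2}$, and likewise for roles, so $\I_2\models A_2$. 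The construction is symmetric (apply it to $h^{-1}$, which is legitimate because $A_1=h^{-1}(A_2)$), so it sets up a correspondence between the models of $(T,A_1)$ and those of $(T,A_2)$. Along any such pair $(\I_1,\I_2)$, for every Boolean UCQ $\phi$ I would check $\I_1\models\phi$ iff $\I_2\models h(\phi)$: a satisfying assignment of the existential variables into $\Delta$ works verbatim for both interpretations, an atom on a variable is unaffected by $h$ and evaluated against the common extension, and an atom $N(c)$ becomes $N(h(c))$ with $h(c)^{\I_2}=c^{\I_1}$ (roles identically). Combining this with the model correspondence, for a UCQ $q$ and a tuple $\vec d$ over $\adom{A_1}$ we obtain $\vec d\in\ans{q,T,A_1}$ iff $h(\vec d)\in\ans{h(q),T,A_2}$ (using the first step to see that $h(\vec d)$ ranges over $\adom{A_2}$).

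Finally I would extend from UCQs to ECQs by induction on the structure of the ECQ, proving: for every ECQ subformula $Q'$ with free variables $\vec x$ and every tuple $\vec e$ over $\adom{A_1}$, $Q'[\vec x/\vec e]$ evaluates to true over $(T,A_1)$ iff $h(Q')[\vec x/h(\vec e)]$ evaluates to true over $(T,A_2)$. The base case $Q'=[q]$ is the UCQ statement above; the cases $\lnot$ and $\land$ are immediate; and for $\exists x.Q''$ one uses that this quantifier ranges over the active domain, so together with the bijection $h:\adom{A_1}\to\adom{A_2}$ the existence of a witness $d\in\adom{A_1}$ for $Q''$ corresponds exactly to the existence of the witness $d'=h(d)\in\adom{A_2}$ for $h(Q'')$. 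Specializing to $Q'=q$ and $\vec e=\vec d=\tup{d_1,\dots,d_n}$ yields the lemma.

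I do not expect a genuine obstacle: the content is routine isomorphism invariance. The two points that need care are \emph{(i)} the active-domain semantics of the ECQ quantifier, which forces the induction to run over ECQ subformulas rather than simply quoting ``truth in all models'', and \emph{(ii)} keeping the constant-renaming $h(q)$ inside the query synchronized with the renaming $h(\vec d)$ of the answer tuple, which is precisely what the bijection $h:\adom{A_1}\to\adom{A_2}$ from the first step guarantees.
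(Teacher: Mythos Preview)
Your argument is correct, but it takes a genuinely different route from the paper. The paper dispatches the lemma in one line: by \emph{FO rewritability} of EQL queries in \dllitea, answering an ECQ over $(T,A)$ reduces to evaluating a first-order query directly over $A$ viewed as a relational structure, and first-order logic is invariant under isomorphisms; since $h$ is an isomorphism $A_1\to A_2$, the result is immediate.

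You instead work at the level of the certain-answer semantics: you build an explicit correspondence between the models of $(T,A_1)$ and those of $(T,A_2)$ by keeping the domain and predicate extensions fixed and reinterpreting constants through $h^{-1}$, use it to transfer truth of Boolean UCQs, and then lift to ECQs by structural induction, handling the active-domain quantifier via the bijection $h\colon\adom{A_1}\to\adom{A_2}$. This is more work but is self-contained and does not rely on FO rewritability, so it would go through for any DL in which certain answers and the ECQ layer are defined as here---a genuine gain in generality. The paper's approach, by contrast, is essentially free once one has recalled the rewritability results already stated in Section~\ref{sec:prelim}, which is why it is presented as ``trivial''. Both proofs hinge on the same underlying fact (isomorphism invariance), but the paper exploits the compilation of $T$ into the query to make the isomorphism act on a single finite structure rather than on the whole class of models.
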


\begin{proof}
Trivial, by recalling the notion of first-order rewritability of
EQL queries, and the fact that first-order logic cannot distinguish
between isomorphic structures.
\end{proof}

We now recast the notion of \emph{history preserving bisimulation} as
defined in \cite{BCDDM13} in the context of KABs.
Let $\Upsilon_1 = (\CONST_1,\T,\Sigma_1,s_0,\abox_1,\trans_1)$ and
$\Upsilon_1 = (\CONST_2,\T,\Sigma_2,s_0,\abox_2,\trans_2)$ be transition
systems, with $\abox(s_0) \subseteq \iconst \subseteq \CONST_1 \cap
\CONST_2$. Let $H$ be the set of partial bijections between
$\CONST_1$ and $\CONST_2$, which are the identity over $\iconst$.  A \emph{history preserving bisimulation} between
$\Upsilon_1$ and $\Upsilon_2$ is a relation $\B \subseteq
  \Sigma_1 \times H \times\Sigma_2$ such that $\tup{s_1,h,s_2} \in
  \B$ implies that:
  \begin{compactenum}
   \item $h$ is a partial bijection between $\CONST_1$ and $\CONST_2$, 
     s.t.~$h$ fixes $\iconst$ and $\abox_1(s_1) \equiv_h \abox_2(s_2)$; 
 \item for each $s_1'$, if $s_1 \Rightarrow_1 s_1'$ then there is
    an $s_2'$ with $s_2 \Rightarrow_2 s_2'$ and a bijection
    $h'$ that extends $h$, such that $\tup{s_1',h',s_2'}\in\B$.
  \item for each $s_2'$, if $s_2 \Rightarrow_2 s_2'$ then there is
    an $s_1'$ with $s_1 \Rightarrow_1 s_1'$ and a bijection
    $h'$ that extends $h$, such that $\tup{s_1',h',s_2'}\in\B$.
 \end{compactenum}
A state $s_1 \in \Sigma_1$ is \emph{history preserving bisimilar} to $s_2 \in
  \Sigma_2$ \emph{wrt a partial bijection} $h$, written $s_1 \hbsim_h s_2$, if
  there exists a history preserving bisimulation $\B$ between $\Upsilon_1$ and
  $\Upsilon_2$ such that
  $\tup{s_1,h,s_2}\in\B$.
A state $s_1 \in \Sigma_1$ is \emph{history preserving bisimilar} to $s_2 \in
  \Sigma_2$, written $s_1 \hbsim s_2$, if there exists a partial bijection $h$
  and a history preserving bisimulation $\B$ between $\Upsilon_1$ and
  $\Upsilon_2$ such that
  $\tup{s_1,h,s_2}\in\B$.
A transition system $\Upsilon_1$ is \emph{history preserving bisimilar} to
  $\Upsilon_2$, written $\Upsilon_1 \hbsim \Upsilon_2$, if there exists a partial
  bijection $h_0$ and a history preserving bisimulation $\B$ between $\Upsilon_1$
  and $\Upsilon_2$ such that $\tup{s_{01},h_0,s_{02}}\in\B$.

The following fundamental results connects history preserving
bisimulation and the logic $\muladom$:
 \begin{theorem}
\label{thm:hbsim}
 Consider two transition systems $\Upsilon_1$ and $\Upsilon_2$ such that
 $\Upsilon_1 \hbsim \Upsilon_2$.
 For every $\muladom$ closed formula $\Phi$, we have:
 $\Upsilon_1 \models \Phi \textrm{ if and only if } \Upsilon_2 \models \Phi$.
 \end{theorem}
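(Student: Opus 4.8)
The plan is to prove a stronger, relativized statement by structural induction on the formula $\Phi$, following the standard pattern for $\mu$-calculus bisimulation-invariance proofs but carrying the partial bijection $h$ through the induction. Concretely, I would show: for every (possibly open) $\muladom$ formula $\Phi$, every individual valuation $\vfo_1$ on $\Upsilon_1$, every predicate valuation $\vso_1$ on $\Upsilon_1$, and every triple $\tup{s_1,h,s_2}\in\B$ such that $\vfo_1$ maps each free individual variable of $\Phi$ into $\mathrm{dom}(h)$ and the ``transported'' valuations $\vfo_2 := h\circ\vfo_1$, $\vso_2 := \{Z \mapsto \{s' : \exists s\in\vso_1(Z).\ s\hbsim_{h'} s' \text{ for some extension } h' \text{ of } h\}\}$ are used on $\Upsilon_2$, we have $s_1\in\MODAT{\Phi}{\vfo_1,\vso_1}$ iff $s_2\in\MODAT{\Phi}{\vfo_2,\vso_2}$. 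For closed $\Phi$ this collapses to the statement of the theorem applied at the initial states, since $\tup{s_{01},h_0,s_{02}}\in\B$.

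The induction goes case by case. For the EQL atom $Q$: condition~(1) of history preserving bisimulation gives $\abox_1(s_1)\equiv_h\abox_2(s_2)$, so by Lemma~\ref{lemma:iso-query} (noting $h$ fixes $\iconst$, hence fixes the distinguished constants appearing in $Q$) the certain answers agree, and $Q\vfo_1$ holds at $s_1$ iff $Q\vfo_2$ holds at $s_2$. Boolean connectives $\lnot,\land$ are immediate from the IH. For $\exists x.\Phi$: if $s_1$ satisfies it, there is $d\in\adom{\abox_1(s_1)}$ witnessing $\Phi$ with $\vfo_1[x/d]$; since $\abox_1(s_1)\equiv_h\abox_2(s_2)$ we have $h(d)\in\adom{\abox_2(s_2)}$, and the IH applied with the extended valuation (noting $h$ already maps $d$, as $d$ is in the active domain and thus in $\mathrm{dom}(h)$ by condition~(1)) gives the witness on the other side; symmetrically for the converse. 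For $\DIAM{\Phi}$: if $s_1\Rightarrow_1 s_1'$ witnesses it, condition~(2) yields $s_2\Rightarrow_2 s_2'$ and an extension $h'\supseteq h$ with $\tup{s_1',h',s_2'}\in\B$; apply the IH at $\tup{s_1',h',s_2'}$ — here one checks that $h'\circ\vfo_1 = h\circ\vfo_1 = \vfo_2$ on the free variables of $\Phi$ (since $h'$ extends $h$) and that the transported predicate valuation is still compatible; the converse uses condition~(3). The variable case $Z$ holds by the chosen definition of $\vso_2$ together with condition~(1). For $\mu Z.\Phi$: argue that the monotone operator $F_1$ on $2^{\Sigma_1}$ associated to $\Phi$ and the corresponding $F_2$ on $2^{\Sigma_2}$ are intertwined by the ``$\hbsim$-saturation'' map, so their least fixpoints correspond; formally one shows by transfinite induction on the fixpoint approximants $F_i^\alpha(\emptyset)$ that $s_1\in F_1^\alpha(\emptyset)$ iff $s_2\in F_2^\alpha(\emptyset)$ whenever $s_1\hbsim_h s_2$, using the IH for $\Phi$ at the successor step and continuity/unions at limits.

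The main obstacle is the bookkeeping in the fixpoint case: one must be careful that the predicate valuation transported along $h$ is exactly the set of $\Upsilon_2$-states bisimilar to $\Upsilon_1$-states in $\vso_1(Z)$ via some extension of $h$, and that this set is the correct invariant preserved by one application of the $\Phi$-operator. This is where the ``history preserving'' (extension-of-$h$) nature of the bisimulation is essential and where a naive formulation with a fixed bijection would break. A clean way to package this is to first prove the modal ($\mu$-free) fragment as above, then handle fixpoints by the standard approximant argument, taking care that at limit ordinals the union of a chain of bisimulation-saturated sets is again bisimulation-saturated. The remaining cases are routine given Lemma~\ref{lemma:iso-query} and the back-and-forth conditions.
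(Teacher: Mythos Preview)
Your plan is essentially correct and is precisely the argument the paper defers to: the paper's own proof consists of one sentence citing Theorem~3.1 of \cite{BCDDM13} together with the observation that Lemma~\ref{lemma:iso-query} supplies the atom case for EQL certain answers. What you have written is the unpacked version of that cited argument---structural induction on $\Phi$, with the back-and-forth conditions of the history preserving bisimulation handling $\DIAM{\cdot}$, the isomorphism $\abox_1(s_1)\equiv_h\abox_2(s_2)$ (via Lemma~\ref{lemma:iso-query}) handling $Q$ and $\exists x$, and a transfinite approximant argument for $\mu Z$. So in substance you and the paper take the same route; you simply do not outsource it.

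One point worth tightening. Your explicit choice $\vso_2(Z)=\{s':\exists s\in\vso_1(Z).\ s\hbsim_{h'}s'\text{ for some }h'\supseteq h\}$ gives only the forward direction of the $Z$ case: from $s_2\in\vso_2(Z)$ you get some $s\in\vso_1(Z)$ with $s\hbsim_{h'}s_2$, but not $s_1\in\vso_1(Z)$ for the particular $s_1$ you started with. The standard fix (and what the approximant argument you sketch actually needs) is to strengthen the induction hypothesis to quantify over \emph{pairs} $(\vso_1,\vso_2)$ that are bisimulation-closed in the sense that for every $\tup{s_1',h',s_2'}\in\B$ one has $s_1'\in\vso_1(Z)\Leftrightarrow s_2'\in\vso_2(Z)$. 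The approximant pairs $(F_1^\alpha(\emptyset),F_2^\alpha(\emptyset))$ satisfy this by the outer transfinite induction, and the $Z$ case then becomes immediate. You already flag this as ``the main obstacle''; making the invariant symmetric in $\vso_1,\vso_2$ resolves it cleanly.
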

\begin{proof}
The proof follows from that of Theorem 3.1 in \cite{BCDDM13},
noticing that, by Lemma~\ref{lemma:iso-query}, isomorphism indeed
preserves certain answers.
\end{proof}

\section{Standard KABs}

\subsection{Proof of Theorem~\ref{thm:dec}}
\label{sec:dec}
In principle, decidability can be obtained by taking advantage from
first-order rewritability of \dllitea, and translating a KAB into a
corresponding Data-Centric Dynamic System \cite{BCDDM13}. However, in
order to make the proof adaptable to the inconsistency-aware semantics
discussed in the paper, we reconstruct the proof contained in
\cite{BCDDM13} over KABs. We first discuss the intuition behind the
proof, and then focus on the technical development.

Given a run-bounded \skab\ $\K$, the crux of the proof is to show how to construct an
\emph{abstract transition system} $\pcts{\K}$ that satisfies exactly
the same $\muladom$ properties of the original transition system
$\cts{\K}$. To do so, a first observation is that the only source of
infiniteness in $\cts{\K}$ is the infinite branching arising when a
service call is issued for the first time. In this case, given a state
$s = \tup{A,\rmap}$ in $\cts{\K}$, for every executable action with legal parameters
$\alpha\sigma$, $s$ contains an infinite number of successor states,
each one corresponding to an assignment of all the newly introduced service
calls to values in $\CONST$, s.t.~the resulting state does not violate
any axiom of $\T$. 

One can see these successors as variations of a
finite set of structures, each one expressing an isomorphic type
(called \emph{equality commitment}) constructed over the set of facts
$E = \doo{}{}{\T,\A,\alpha\sigma}$ and the map $\rmap$,
by fixing the set of equalities and inequalities between the service
calls that must be issued, and the service calls and values contained
in $E$, $\rmap$ and $\iconst$. Each structure can be concretized into a
successor state by evaluating the service calls so as to satisfy the
equalities and inequalities induced by the equality commitment (this
also guarantees that the evaluation agrees with $\rmap$). Two
concretizations of the same structure are isomorphic, i.e., they
contain the same ABox and service call map modulo renaming
of the newly introduced values.

 We now observe that EQL-queries do not
distinguish isomorphic ABoxes. In particular, consider two ABoxes $A_1$ and $A_2$,
and a bijection $h$ that induces an isomorphism between $A_1$
and $A_2$. Now consider an EQL query $q$ s.t.~the constants used in
$q$ appear in $h$, and let $h(q)$ be the query obtained by replacing
such constants through the application of $h$. It is easy to see that
the certain answers of $q$ over $A_1$ are exactly the same of $h(q)$
over $A_2$, modulo renaming of the values via $h$. The key consequence of
this property is that, given a state $s$ of $\cts{\K}$, $\muladom$ is
not able to distinguish successors of $s$ that concretize $E$ by
satisfying the same equality commitment. Therefore, all such
successors can be collapsed into a unique representative successor,
without affecting the satisfaction of a closed $\muladom$ property $\Phi$
asked in the initial state of the system.

By inductively applying this pruning, we can construct a finite-state
transition system $\pcts{\Phi}$. Since the active domain of
$\pcts{\Phi}$ is finite, by quantifier elimination we can then transform $\Phi$
into a corresponding propositional $\mu$-calculus property $\phi$, and reduce
verification of $\Phi$ over $\cts{\K}$ as standard model checking of
$\phi$ over $\pcts{\K}$, which is indeed decidable \cite{Emer97}.


\smallskip 
\noindent
\textbf{Equality commitments.}
Given a set $S \subseteq \scset
\cup \CONST$ containing individuals and service calls, an
\emph{equality commitment} over $S$ is a partition $H$ of $S$
s.t.~every cell of $H$ contains at most one
element $d\in \CONST$. Given an element $e \in S$, we use $[e]_H$ do
denote the cell $e$ belongs to. With a slight abuse of notation, we
say that $e \in H$ if $e \in S$.
Now consider a KAB $\K=(\T,A_0,\Gamma,\Pi)$, a state $\tup{\A,\rmap}$, and an
action $\alpha \in \Gamma$ with parameters $\sigma$,
s.t.~$\alpha\sigma$ is legal in $\tup{\A,\rmap}$ according to $\Pi$. 
Let $\H(\T,\tup{\A,\rmap},\alpha\sigma)$ be the set of equality
commitments $H_i$ constructed over $\adom{\iconst} \cup \adom{A} \cup \domain{\rmap} \cup \image{\rmap} \cup
\adom{\doo{}{}{\T,\A, \alpha\sigma}}$ that \emph{agrees} with $\rmap$,
i.e., for every assignment $(f \ra d)$ in $\rmap$, $[f]_{H_i} = [d]_{H_i}$. Intuitively,
the elements of $\H$ are equality commitments that fix the equivalence
class to which every new service call, introduced by $\doo{}{}{\T,\A,
  \alpha\sigma}$, belongs to.

We say that $\groundexec{}{\T,\A,\alpha\sigma}$ \emph{respects} an
equality commitment $H \in \H(\T,\tup{\A,\rmap},\alpha\sigma)$ if, for
every pair of assignments $(f_1 \ra d_1),(f_2 \ra d_2)$ in 
$\groundexec{}{\T,\A,\alpha\sigma}$, $d_1 = d_2$ iff $f_1$ and $f_2$
belong to the same cell $P$ of $H$, and $d_1 = d_2 = d$ iff $d$
belongs to $P$.




\smallskip
\noindent
\textbf{Pruning.} 
Given a KAB $\K =(\T,A_0,\Gamma,\Pi)$, we refine the definition of
$\exec_\K$ so as to create a parsimonious version that minimally
covers, state-by-state, the various equality commitments. 

\label{pexec}
In particular, we define a transition relation $\pexec_\K$ as follows.
For every $\tup{\tup{\A,\rmap},\alpha\sigma,\tup{\A',\rmap'}} \in
\exec_\K$, fix $\theta = \rmap' \setminus \rmap$ and $H \in
\H(\T,\tup{\A,\rmap},\alpha\sigma)$ s.t.~$\theta$ respects $H$. Then
there exists \emph{only one} $\theta_p =
\groundexec{}{\T,\A,\alpha\sigma}$ s.t.~$\theta_p$ respects $H$ and,
given, $\A_p = \doo{}{}{\T, \A, \alpha\sigma}\theta_p$ and $\rmap' =
\rmap \cup \theta_p$, $\tup{\tup{\A,\rmap},\alpha\sigma,\tup{\A_p,\rmap_p}} \in
\pexec_\K$.
Intuitively, $\pexec_\K$ ``prunes'' $\exec_\K$ by collapsing into a
unique representative tuple all
transitions that are associated to a given starting state and action with
parameters, and that respect the same equality commitment. 

Starting from $\pexec_\K$, we define a \emph{pruning}  $\pcts{\K}$ of the
transition system under standard semantics $\cts{\K}$ as a transition
system constructed following the standard semantics, but by using
$\pexec_\K$ in place of $\exec_\K$ to inductively construct the set of
states and transitions. In general, there exist infinitely many
prunings, whose difference relies in the particular choice for the
representatives when constructing $\pexec_\K$. However, we show that
all such prunings are history-preserving bisimilar to the original
transition system $\sts{\K}$. The following lemma is a key result in
this direction, and intuitively shows that bisimulation does not
distinguish different progressions that fix, step-by-step, the same
equality commitments. In the lemma, for the sake of readability, given a
service call map $\rmap$ and a function $h: \CONST \ra \CONST$ defined
over all values contained in $\rmap_1$ (considering both the service
call parameters and their results), we write $\rmap_2 = f(\rmap_1)$ to
denote the service call map constructed as follows: for every
assignment $(f(d_1,\ldots,d_n) \ra d)$ in $\rmap_1$, we have
$(f(h(d_1),\ldots,h(d_n)) \ra h(d))$ in $\rmap_2$.

\begin{lemma}
\label{lemma:pruning-state}
Let $\K$ be a \skab\ with transition system $\cts{\K}$, and let 
$\pcts{\K}$ be a pruning of $\cts{\K}$. Consider a state $\tup{A,\rmap}$ of
$\cts{\K}$ and a state $\tup{A_p,\rmap_p}$ of $\pcts{\K}$. If there
exists a bijection $h$ s.t.~$A_p = h(A)$ and $\rmap_p = h(\rmap)$ (or, equivalently, $\rmap =
  h^{-1}(\rmap_p)$),
then $\tup{A,\rmap} \hbsim_h \tup{A_p,\rmap_p}$.
\end{lemma}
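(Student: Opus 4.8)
The plan is to exhibit an explicit history preserving bisimulation $\B$ between $\cts{\K}$ and $\pcts{\K}$ that contains the triple $\tup{\tup{A,\rmap},\,h,\,\tup{A_p,\rmap_p}}$; by definition of $\hbsim_h$ this immediately yields the claim. Concretely, I would let $\B$ be the set of all $\tup{\tup{A',\rmap'},\,h',\,\tup{A'_p,\rmap'_p}}$ such that $\tup{A',\rmap'}$ is a state of $\cts{\K}$, $\tup{A'_p,\rmap'_p}$ is a state of $\pcts{\K}$, and $h'$ is a partial bijection on $\CONST$, identity on $\iconst$, defined on the values occurring in $A'$, $\rmap'$ and $\iconst$, with $A'_p = h'(A')$ and $\rmap'_p = h'(\rmap')$. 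Condition~1 of history preserving bisimulation (that $h'$ fixes $\iconst$ and $\abox(s_1) \equiv_{h'} \abox(s_2)$) holds by construction, so the whole burden is to check the forth and back conditions, and then to note that the hypothesis of the lemma places $\tup{\tup{A,\rmap},h,\tup{A_p,\rmap_p}}$ in $\B$.

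A preliminary step I would isolate is an \emph{equivariance} observation for the progression machinery: if $A'_p = h'(A')$ and $\rmap'_p = h'(\rmap')$ and $\alpha\sigma$ is legal in $\tup{A',\rmap'}$, then $\alpha\,h'(\sigma)$ is legal in $\tup{A'_p,\rmap'_p}$, and, extending $h'$ to service-call terms by $\hat h'(f(\vec d)) = f(h'(\vec d))$, one has $\doo{}{}{\T,A'_p,\alpha\,h'(\sigma)} = \hat h'\bigl(\doo{}{}{\T,A',\alpha\sigma}\bigr)$. Both facts follow from Lemma~\ref{lemma:iso-query} applied to the ECQ $Q$ of the relevant condition/action rule and to each effect query $[q_i^+]\land Q_i^-$, since first-order logic (hence EQL, via rewritability) cannot distinguish isomorphic ABoxes. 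As a consequence, $\hat h'$ is a bijection between the two sets over which equality commitments are formed, and it maps $\H(\T,\tup{A',\rmap'},\alpha\sigma)$ onto $\H(\T,\tup{A'_p,\rmap'_p},\alpha\,h'(\sigma))$, preserving the ``agrees with $\rmap$'' condition (because $H$ agrees with $\rmap'$ and $\hat h'(H)$ then agrees with $h'(\rmap') = \rmap'_p$).

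For the forth condition, suppose $\tup{A',\rmap'}\trans\tup{A'',\rmap''}$ in $\cts{\K}$, realized by $\alpha\sigma$ and a grounding $\theta\in\groundexec{}{\T,A',\alpha\sigma}$ that agrees with $\rmap'$, with $A'' = \doo{}{}{\T,A',\alpha\sigma}\theta$, $\rmap'' = \rmap'\cup\theta$, and $A''$ $\T$-consistent. Let $H$ be the equality commitment that $\theta$ respects. By the equivariance step $\hat h'(H)\in\H(\T,\tup{A'_p,\rmap'_p},\alpha\,h'(\sigma))$, so by the definition of $\pexec_\K$ there is exactly one representative transition $\tup{A'_p,\rmap'_p}\trans\tup{A''_p,\rmap''_p}$ in $\pcts{\K}$ whose grounding $\theta_p$ respects $\hat h'(H)$. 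I would then define $h''$ extending $h'$: on values already in $A'$, $\rmap'$ or $\iconst$ it is $h'$, and on the freshly introduced values in the range of $\theta$ it is the bijection onto the freshly introduced values in the range of $\theta_p$ induced by the cell-to-cell correspondence $H\leftrightarrow\hat h'(H)$; this is well defined and injective precisely because $\theta$ respects $H$ and $\theta_p$ respects $\hat h'(H)$, so equal/unequal fresh values correspond on the two sides, and it is compatible with $h'$ on old values by the agreement of $H$ with $\rmap'$. From $A''_p = \hat h'(\doo{}{}{\T,A',\alpha\sigma})\theta_p$ and the construction of $h''$ one reads off $A''_p = h''(A'')$ and $\rmap''_p = h''(\rmap'')$; moreover $A''_p$ is $\T$-consistent because $\T$-(in)consistency is expressible as a Boolean first-order query over the ABox (FO rewritability of satisfiability in \dllitea), hence isomorphism-invariant by Lemma~\ref{lemma:iso-query}, so the target transition genuinely belongs to $\pcts{\K}$. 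Thus $\tup{\tup{A'',\rmap''},h'',\tup{A''_p,\rmap''_p}}\in\B$. The back condition is symmetric: from a $\pexec_\K$-transition out of $\tup{A'_p,\rmap'_p}$ one pulls the action and the equality commitment back through $h'^{-1}$, and since $\cts{\K}$ is built from the \emph{full} relation $\exec_\K$, some grounding respecting the pulled-back commitment (and matching $\theta_p$ up to fresh values) always exists, yielding the required transition and extended bijection.

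The main obstacle I anticipate is not any single deep step but the careful bookkeeping of the extension $h''$ at each transition: making the bijection between the fresh equivalence classes of $\theta$ and those of $\theta_p$ precise, verifying its compatibility with $h'$ on old values (which rests on $H$ agreeing with $\rmap'$ and $\hat h'(H)$ with $\rmap'_p$), and confirming that one and the same $h''$ simultaneously witnesses $A''_p = h''(A'')$ and $\rmap''_p = h''(\rmap'')$. Everything else — legality transfer, equivariance of $\doo{}{}$, and preservation of $\T$-consistency — reduces cleanly to Lemma~\ref{lemma:iso-query} together with the definitions of $\exec_\K$ and $\pexec_\K$.
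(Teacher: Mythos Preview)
Your proposal is correct and follows essentially the same approach as the paper: both arguments use Lemma~\ref{lemma:iso-query} to transfer legality and the $\doo{}{}{}$-computation along the bijection, map the equality commitment $H$ to its $h$-image, use the definition of $\pexec_\K$ to pick the unique representative transition, and then extend the bijection to cover the fresh service-call values. You are in fact slightly more careful than the paper's own proof, in that you explicitly define the candidate bisimulation $\B$ and explicitly argue that $\T$-consistency of $A''_p$ follows from isomorphism invariance (a point the paper leaves implicit).
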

\begin{proof}
Let $\K =(\T,\A_0,\Gamma,\Pi) $, $\cts{\K} = (\CONST,\T,\Sigma,s_0,\abox,\trans)$, and $\pcts{\K} = (\CONST,\T,\Sigma_p,s_0,\abox,\trans_p)$.
To prove the lemma, we show that, for every state $\tup{A',\rmap'}$ s.t.~$\tup{A,\rmap} \trans
\tup{A',\rmap'}$, there exists a state $\tup{A'_p,\rmap'_p}$ and a
bijection $h'$ s.t.:
\begin{inparaenum}
\item $\tup{A_p,\rmap_p} \trans_p
\tup{A'_p,\rmap'_p}$;
\item $h'$ extends $h$;
\item $A'_p = h'(A')$;
\item $\rmap'_p = h'(\rmap')$.
\end{inparaenum}
By definition of $\cts{K}$, if $\tup{A,\rmap} \trans
\tup{A',\rmap'}$, then there exists an action $\alpha \in \Gamma$ with
parameters $\sigma$ s.t.~$\sigma$ is legal in $\tup{A,\rmap}$
according to $\Pi$, and $\theta \in \groundexec{}{\T,\A,\alpha\sigma}$
s.t.~$\theta$ agrees with $\rmap$, $A' = \doo{}{}{\T, \A,
  \alpha\sigma}\theta$, and $\rmap' = \rmap \cup \theta$. From this
information, we can extract the equality commitment $H \in
\H(\T,\tup{\A,\rmap},\alpha\sigma)$ s.t.~$\theta$ respects $H$.

Since $A_p = h(A)$, from Lemma~\ref{lemma:iso-query} we know that the certain answers
computed over $A$ are the same, modulo renaming through $h$, to those
computed over $A_p$.  Furthermore, since $\sigma$ maps
parameters of $\alpha$ to values in $\adom{A}$, we can construct
$\sigma_p$ mapping parameters of $\alpha$ to values in $\adom{A_p}$, so
as $(x \ra d)$ in $\sigma$ implies $(x \ra h(d))$ in $\sigma_p$.
By hypothesis, we also know that $\rmap_p = h(\rmap)$. As a
consequence, we have that $\sigma_p$ is legal in $\tup{A_p,\rmap_p}$
according to $\Pi$, and that $\H(\T,\tup{\A_p,\rmap_p},\alpha\sigma_p)$
contains the same equality commitments in
$\H(\T,\tup{\A,\rmap},\alpha\sigma)$ up to renaming of individuals
through $h$. Now pick commitment $H_p \in
\H(\T,\tup{\A_p,\rmap_p},\alpha\sigma_p)$ so that $H_p$ corresponds to
$H$ up to renaming of individuals through $h$.

By definition of pruning, we know that there exists a unique
$\theta_p$ that respects $H_p$ (and, in turn, agrees with $m_p$) s.t., given $A'_p = \doo{}{}{\T, \A_p,
  \alpha\sigma_p}\theta_p$ and $\rmap'_p = \rmap_p \cup \theta_p$, we have
$\tup{A_p,\rmap_p} \trans_p \tup{A'_p,\rmap'_p}$. Since $H_p$ corresponds to
$H$ up to renaming of individuals through $h$, $\theta$ respects $H$,
and $\theta_p$ respects $H_p$, we can lift $h$ to an extended
bijection $h'$ s.t.~$\theta_p = h(\theta)$. By construction, this means
that $A'_p = h'(A')$, and that $\rmap'_p = h'(\rmap')$, hence the
claim is proven.

The other direction can be proven in the symmetric way.
\end{proof}

\begin{lemma}
\label{lemma:pruning}
For every \skab\ $\K$ with transition system $\cts{\K}$ and every
pruning $\pcts{\K}$ of $\cts{\K}$, we have $\pcts{\K}\hbsim\cts{\K}$.
\end{lemma}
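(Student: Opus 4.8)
The plan is to read this off directly from Lemma~\ref{lemma:pruning-state}, which already contains all the substance: it shows that any state $\tup{A,\rmap}$ of $\cts{\K}$ and any state $\tup{A_p,\rmap_p}$ of $\pcts{\K}$ related by a bijection $h$ (with $A_p = h(A)$ and $\rmap_p = h(\rmap)$) satisfy $\tup{A,\rmap}\hbsim_h\tup{A_p,\rmap_p}$. To obtain $\pcts{\K}\hbsim\cts{\K}$ it then suffices, by the definition of history-preserving bisimilarity between transition systems, to produce a single partial bijection $h_0$ fixing $\iconst$ together with a history-preserving bisimulation $\B$ relating the two systems such that $\tup{s_0, h_0, s_0'}\in\B$, where $s_0, s_0'$ are their respective initial states.

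First I would note that, by construction, a pruning $\pcts{\K}$ is built exactly like $\cts{\K}$ under the standard execution semantics, the only change being the use of $\pexec_\K$ in place of $\exec_\K$; in particular both transition systems have the same initial state $s_0 = \tup{A_0,\emptyset}$. I would then take $h_0$ to be the identity map on $\adom{A_0}\cup\iconst$, which is plainly a partial bijection of $\CONST$ that fixes $\iconst$ and for which $A_0 = h_0(A_0)$ and $\emptyset = h_0(\emptyset)$. Instantiating Lemma~\ref{lemma:pruning-state} with $\tup{A_0,\emptyset}$ viewed as a state of $\cts{\K}$, $\tup{A_0,\emptyset}$ viewed as a state of $\pcts{\K}$, and the bijection $h_0$, its hypotheses are satisfied, so it yields $\tup{A_0,\emptyset}\hbsim_{h_0}\tup{A_0,\emptyset}$, i.e.\ a history-preserving bisimulation $\B$ between $\cts{\K}$ and $\pcts{\K}$ containing $\tup{s_0,h_0,s_0}$. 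This already establishes $\cts{\K}\hbsim\pcts{\K}$.

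To conclude in the direction stated I would finally appeal to the symmetry of $\hbsim$: if $\B$ is a history-preserving bisimulation between $\Upsilon_1$ and $\Upsilon_2$, then $\{\tup{s_2,h^{-1},s_1}\mid\tup{s_1,h,s_2}\in\B\}$ is one between $\Upsilon_2$ and $\Upsilon_1$, since ABox isomorphism is symmetric and the forth and back clauses in the definition of history-preserving bisimulation are mirror images of each other. I do not anticipate any real obstacle: the two things worth double-checking are that every pruning genuinely starts at $s_0$ (immediate from the definitions of $\pexec_\K$ and of a pruning) and that the restricted identity bijection is an admissible witness at the initial state, and both are routine.
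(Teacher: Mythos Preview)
Your proposal is correct and follows essentially the same route as the paper's proof, which simply says the lemma is an immediate consequence of Lemma~\ref{lemma:pruning-state} since the initial states of $\cts{\K}$ and $\pcts{\K}$ coincide and can be connected via the identity bijection on their active domains. Your version is more careful in spelling out the choice of $h_0$ and in explicitly invoking the symmetry of $\hbsim$ to match the stated direction $\pcts{\K}\hbsim\cts{\K}$, a point the paper leaves implicit.
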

\begin{proof}
Immediate consequence of Lemma~\ref{lemma:pruning-state}, by noticing
that the initial states of $\cts{\K}$ and $\pcts{\K}$ are the same,
and can be therefore connected through the identity bijection between
their active domains.
\end{proof}

\noindent
\textbf{Proof of Theorem~\ref{thm:dec}.} Given a run-bounded KAB $\K$,
we observe that each pruning $\pcts{\K}$ of $\cts{\K}$ is
finite-state. On the one hand, thanks to run-boundedness each run consists of a
finite number of states. On the other hand, thanks to the definition
of pruning, each state has only finitely many successors. In fact,
given a state of $\pcts{\K}$, there are only finitely many equality
commitments that can be created by considering all possible actions
with parameters. This implies that the entire active domain
$\adom{\pcts{K}}$ of $\pcts{\K}$ is finite as well.
By Lemma~\ref{lemma:pruning} and Theorem~\ref{thm:hbsim}, we know that
$\pcts{\K}$ is a faithful abstraction of $\cts{\K}$, i.e., for every
$\muladom$ formula $\Phi$, $\cts{\K}\models \Phi$ iff
$\pcts{\K}\models\Phi$. Taking advantage from the finiteness of
$\adom{\pcts{K}}$, by quantifier elimination we can construct a
propositional $\mu$-calculus property $\phi$
s.t.~$\pcts{\K}\models\Phi$ iff $\pcts{\K}\models\phi$. The proof
completes by observing that verifying whether $\pcts{\K}\models\phi$
amounts to standard model checking of propositional $\mu$-calculus
over finite-state transition systems, which is indeed decidable \cite{Emer97}.

\section{KABs Under Repair Semantics}
We open this section by observing that the repair service does not
distinguish between isomorphic ABoxes.

\begin{lemma}
\label{lemma:iso-rep}
Consider two knowledge bases $(T,A_1)$ and $(T,A_2)$, s.t.~there
exists a bijection $h$ with $A_2 = h(A_1)$. Then for every ABox
$A_1^r$ s.t.~$A_1^r \in \Rep(\A_1,\T)$, we have $h(A_2^r) \in
\Rep(\A_2,\T)$, and for every ABox $A_2^r$ s.t.~$A_2^r \in \Rep(\A_2,\T)$, we have $h^{-1}(A_2^r) \in
\Rep(\A_1,\T)$.
\end{lemma}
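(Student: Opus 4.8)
The plan is to reduce the claim to two elementary facts about the bijection $h$, and then let the definition of b-repair (a maximal $T$-consistent subset) do the rest. The two facts are: \emph{(i)} the map $A' \mapsto h(A')$ is an inclusion-preserving bijection between the subsets of $A_1$ and the subsets of $A_2$; and \emph{(ii)} for every $A' \subseteq A_1$, $A'$ is $T$-consistent iff $h(A')$ is $T$-consistent. Granting these, maximality transfers across $h$ automatically, which is exactly what is needed.

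First I would record \emph{(i)}. Since $A_2 = h(A_1)$ with $h$ a bijection that is the identity on $\iconst$, for any $A' \subseteq A_1$ we have $h(A') \subseteq h(A_1) = A_2$, and $A' \mapsto h(A')$ is a bijection from the powerset of $A_1$ to the powerset of $A_2$ with inverse $B \mapsto h^{-1}(B)$; moreover $A' \subseteq A''$ iff $h(A') \subseteq h(A'')$, and strict inclusion is likewise preserved. This is purely set-theoretic, once one notes that $h$ acts injectively on concept and role assertions. For \emph{(ii)}, which is the only place the semantics enters, I would argue that isomorphism preserves KB satisfiability: extending $h$ to a bijection $\hat h$ on the whole domain that fixes $\iconst$, the map $\I \mapsto \hat h(\I)$ sends models of $(T,A')$ to models of $(T,h(A'))$ and conversely, because $T$ is fixed and isomorphism-invariant. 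Equivalently, one can appeal to FO-rewritability of ontology (un)satisfiability in \dllitea~\cite{CDLLR07}: consistency of $(T,A')$ is the falsity of a fixed Boolean FO query over $A'$ read as a database, and Lemma~\ref{lemma:iso-query} (isomorphism preserves certain answers of EQL, hence of plain FO, queries) yields the equivalence directly.

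Finally I would assemble the argument. Let $A_1^r \in \Rep(A_1,T)$. By \emph{(i)}, $h(A_1^r) \subseteq A_2$; by \emph{(ii)}, $h(A_1^r)$ is $T$-consistent. For maximality, suppose $h(A_1^r) \subsetneq B \subseteq A_2$ with $B$ $T$-consistent; then $A_1^r = h^{-1}(h(A_1^r)) \subsetneq h^{-1}(B) \subseteq A_1$ and, by \emph{(ii)} again, $h^{-1}(B)$ is $T$-consistent, contradicting the maximality of $A_1^r$. Hence $h(A_1^r) \in \Rep(A_2,T)$. The converse statement follows by the same reasoning with $h$ replaced by the bijection $h^{-1}$ and the roles of $A_1$ and $A_2$ exchanged. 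I do not expect a genuine obstacle here; the only point that needs care is making the extension of $h$ to a domain bijection (or the appeal to FO-rewritability of satisfiability) precise enough that \emph{(ii)} is rigorous rather than merely intuitive — and this is essentially the same bookkeeping already carried out for Lemma~\ref{lemma:iso-query}.
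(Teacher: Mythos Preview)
Your proposal is correct and follows essentially the same approach as the paper: the paper's proof is a one-liner invoking FO-rewritability of \dllitea ontology satisfiability together with the fact that FO does not distinguish isomorphic structures, which is precisely your route \emph{(ii)} via FO-rewritability. You simply spell out in detail (the inclusion-preserving bijection on subsets and the explicit maximality argument) what the paper dismisses as ``trivial.''
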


\begin{proof}
Trivial, by recalling the notion of first-order rewritability of
ontology satisfiability in \dllitea, and the fact that first-order logic cannot distinguish
between isomorphic structures.
\end{proof}

\subsection{Proof of Theorem~\ref{thm:itkabs-dec}}
\label{sec:itkabs-dec}
Given a $\K$, we introduce the \emph{pruning}  $\Theta_\K$ of the
transition system under repair semantics (denoted by $\mf{\K}^b$ for
the bold semantics, and $\mf{\K}^c$ for the certain semantics), as the transition
system constructed following one between the two repair semantics, but by relying on
the transition relation $\pexec_\K$ (as defined in Section~\ref{pexec}) in
place of $\exec_\K$. Differently from the standard case, to show that
$\Theta_\K \hbsim \mf{\K}^b$ ($\Theta_\K \hbsim \mf{\K}^c$ resp.) we have to deal with the action and
repair step. In particular, we reconstruct
Lemma~\ref{lemma:pruning-state} in this two-steps setting.

\begin{lemma}
\label{lemma:pruning-state-it}
Let $\K$ be a b-KAB (c-KAB respectively) with transition system
$\mf{\K}^b$ ($\mf{\K}^c$ resp.), and let 
$\Theta_\K$ be a pruning of $\mf{\K}^b$ ($\mf{\K}^c$ resp.). Consider a state $\tup{A,\rmap}$ of
$\mf{\K}^b$ ($\mf{\K}^c$ resp.), and a state $\tup{A_p,\rmap_p}$ of $\Theta_\K$. If there
exists a bijection $h$ s.t.~$A_p = h(A)$ and $\rmap_p = h(\rmap)$ (or, equivalently, $\rmap =
  h^{-1}(\rmap_p)$),
then $\tup{A,\rmap} \hbsim_h \tup{A_p,\rmap_p}$.
\end{lemma}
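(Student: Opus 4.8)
The plan is to exhibit an explicit history-preserving bisimulation $\B$ between $\mf{\K}^b$ and its pruning $\Theta_\K$, the argument for the certain semantics (between $\mf{\K}^c$ and its pruning) being word-for-word the same. Take $\B$ to be the set of all triples $\tup{\tup{A,\rmap},h,\tup{A_p,\rmap_p}}$ such that $\tup{A,\rmap}$ is a state of $\mf{\K}^b$, $\tup{A_p,\rmap_p}$ is a state of $\Theta_\K$, $h$ is a bijection fixing $\iconst$, and $A_p = h(A)$, $\rmap_p = h(\rmap)$. Condition~(1) of a history-preserving bisimulation then holds by construction, since $A_p = h(A)$ together with $h$ fixing $\iconst$ give $\abox(\tup{A,\rmap}) \equiv_h \abox(\tup{A_p,\rmap_p})$. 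The whole content lies in the forth/back conditions~(2) and~(3), which I would prove by a case analysis on whether the marker $\temp$ occurs in the current ABox. Since $\mathit{temp}$ is a distinguished constant, every $h$ occurring in $\B$ is the identity on it, so $\temp \in A$ iff $\temp \in A_p$, and the two sides are always in the same phase (action step versus repair step).

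\emph{Action step ($\temp \notin A$).} Here the reasoning of Lemma~\ref{lemma:pruning-state} transfers almost verbatim: from a transition $\tup{A,\rmap}\trans\tup{A'',\rmap'}$ I extract the action $\alpha$, its legal parameters $\sigma$, a grounding $\theta \in \groundexec{}{\T,A,\alpha\sigma}$ agreeing with $\rmap$, and the equality commitment $H \in \H(\T,\tup{A,\rmap},\alpha\sigma)$ that $\theta$ respects; via $h$ I build a legal $\sigma_p$ for $\alpha$ in $\tup{A_p,\rmap_p}$ (using Lemma~\ref{lemma:iso-query} for the preservation of certain answers) and the commitment $H_p \in \H(\T,\tup{A_p,\rmap_p},\alpha\sigma_p)$ corresponding to $H$ up to $h$; the definition of $\pexec_\K$ (Section~\ref{pexec}) then supplies the \emph{unique} $\theta_p$ respecting $H_p$, and $h$ lifts to a bijection $h'$ with $\theta_p = h'(\theta)$. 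The only extra bookkeeping is the marker: the successors are $A'' = \doo{}{}{\T,A,\alpha\sigma}\theta \cup \{\temp\}$ and $A''_p = \doo{}{}{\T,A_p,\alpha\sigma_p}\theta_p \cup \{\temp\}$, and since $h'$ fixes $\mathit{temp}$ we get $A''_p = h'(A'')$ and $\rmap'_p = h'(\rmap')$, so $\tup{\tup{A'',\rmap'},h',\tup{A''_p,\rmap'_p}} \in \B$.

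\emph{Repair step ($\temp \in A$).} The pruning never touches repair transitions — they introduce no service calls and leave the service-call map unchanged — so $\trans_p$ and $\trans$ coincide on them. For a b-KAB, given a b-repair $A'$ of $A - \{\temp\}$, Lemma~\ref{lemma:iso-rep} applied to $A_p - \{\temp\} = h(A - \{\temp\})$ yields that $h(A')$ is a b-repair of $A_p - \{\temp\}$, hence $\tup{A_p,\rmap_p}\trans_p\tup{h(A'),\rmap_p}$, and taking $h' = h$ (no fresh values arise, and $\adom{A'}\subseteq\adom{A}$) gives a matching element of $\B$. For a c-KAB, Lemma~\ref{lemma:iso-rep} actually induces a bijection between $\Rep(A - \{\temp\},\T)$ and $\Rep(A_p - \{\temp\},\T)$ via $h$, so $h$ commutes with intersection and therefore maps the unique c-repair of $A - \{\temp\}$ onto the c-repair of $A_p - \{\temp\}$; again $h' = h$ works. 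In both cases the back direction (condition~(3)) is symmetric, using $\pexec_\K \subseteq \exec_\K$ so that every action step of $\Theta_\K$ is witnessed in $\mf{\K}^b$, together with the coincidence of the repair steps. This shows $\B$ is a history-preserving bisimulation containing $\tup{\tup{A,\rmap},h,\tup{A_p,\rmap_p}}$, i.e.\ $\tup{A,\rmap}\hbsim_h\tup{A_p,\rmap_p}$.

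The main obstacle is organizational rather than conceptual: one must keep the two execution layers in lockstep on both sides — in particular verifying that every bijection in $\B$ fixes the $\temp$-marker so the case split stays consistent, and, for the certain semantics, that the $h$-induced bijection on bold repairs commutes with intersection. All the genuinely semantic ingredients — isomorphism-invariance of $\Rep(\cdot,\T)$ and the faithfulness of pruning on the service-call layer — are already isolated in Lemma~\ref{lemma:iso-rep} and in the proof of Lemma~\ref{lemma:pruning-state}, so the present proof mostly amounts to reassembling those pieces around the action/repair alternation.
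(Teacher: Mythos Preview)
Your proposal is correct and takes essentially the same approach as the paper: a case split on the presence of $\temp$, reusing the argument of Lemma~\ref{lemma:pruning-state} for the action step and invoking Lemma~\ref{lemma:iso-rep} for the repair step (with the intersection-commutes-with-$h$ observation in the certain case). The one presentational difference is that you define the candidate bisimulation $\B$ explicitly up front and verify the local forth/back conditions, whereas the paper phrases the same argument more informally as an inductive step-matching; your packaging is arguably cleaner, but the content is identical.
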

\begin{proof}
Let $\K\!=\!(\T,\A_0,\Gamma,\Pi) $, \mbox{$\mf{\K}^b =
(\CONST,\T,\Sigma,s_0,\abox,\trans)$} (resp., $\mf{\K}^c =
(\CONST,\T,\Sigma,s_0,\abox,\trans)$), and $\Theta_{\K} = (\CONST,\T,\Sigma_p,s_0,\abox,\trans_p)$.
To prove the lemma, we show that, for every state $\tup{A',\rmap'}$ s.t.~$\tup{A,\rmap} \trans
\tup{A',\rmap'}$, there exists a state $\tup{A'_p,\rmap'_p}$ and a
bijection $h'$ s.t.:
\begin{inparaenum}
\item $\tup{A_p,\rmap_p} \trans_p
\tup{A'_p,\rmap'_p}$;
\item $h'$ extends $h$;
\item $A'_p = h'(A')$;
\item $\rmap'_p = h'(\rmap')$.
\end{inparaenum}
To show the claim, we have to separately discuss the case in which
$\temp \not\in A$, and the case in which
$\temp \in A$. The first case is equivalent for
$\mf{\K}^b$ and $\mf{\K}^c$, whereas the second case is different,
since the two semantics diverge when it comes to the repair step
(b-KABs nondeterministically produce one among the possible repairs,
while c-KABs construct a unique repair corresponding to the
intersection of possible repairs). 

\smallskip
\noindent
\textbf{Base case:} trivial, because the transition system and its
pruning start from the same intial state $\tup{A_0,\emptyset}$.

\smallskip
\noindent
\textbf{Case 1 (action step):} $\temp \not\in A$. First of all, we
observe that $\mathit{temp}$ is a distinguished constant of $\iconst$, hence $h(rep)
= rep$. Since $A \equiv_h
A_p$, $\temp \not\in A_p$.
The claim can be then proven
exactly in the same way as done for Lemma~\ref{lemma:pruning-state},
noticing however that each ABox $A'$ s.t.~$A \trans A'$ contains
$\temp$, making the induction hypothesis for case 1
inapplicable, and the one for case 2 applicable.

\smallskip
\noindent
\textbf{Case 2 (repair step) - bold semantics:} $\temp \in A$. By hypothesis, $A_p =
h(A)$, and since $h(rep)$, $\temp \in A_p$ as
well. Notice that $h$ is syntactically applied over the ABoxes $A$
and $A_p$ without involving the TBox $T$, and therefore it can be applied
also when such ABoxes are $\T$-inconsistent.
On the one hand, by construction of the transition system under the bold repair semantics, we
therefore know that:
\begin{compactenum}
\item for every $s'$ s.t.~$\tup{A,\rmap} \trans s'$, we have
  $s'=\tup{A',\rmap}$, with $A' \in \Rep(A-\{\temp\},T)$;
\item for every $s'_p$ s.t.~$\tup{A_p,\rmap_p} \trans_p s'_p$, we have
  $s'_p=\tup{A'_p,\rmap_p}=\tup{A'_p,h(\rmap)}$, with $A'_p \in \Rep(A_p-\{\temp\},T)$.
\end{compactenum}
On the other hand, since $A_p = h(A)$, from Lemma~\ref{lemma:iso-rep}
we get that for every $A'' \in \Rep(A-\{\temp\},T)$, $h(A'') \in
\Rep(A_p-\{\temp\},T)$. 
We therefore obtain that, for every state $\tup{A',\rmap}$
s.t.~$\tup{A,\rmap} \trans \tup{A',\rmap}$, we have $\tup{A_p,\rmap_p}
\trans_p \tup{h(A'),\rmap_p} = \tup{h(A'),h(\rmap)}$.

Finally, notice that, by construction $A'$ and $A'_p$ do not contain
$\temp$. The claim is therefore proven by inductively
applying Case 1 over $A'$, $A'_p$, and $h$.

The other direction can be proven in the symmetric way.

\smallskip
\noindent
\textbf{Case 2 (repair step) - certain semantics:} $\temp \in A$. By hypothesis, $A_p =
h(A)$, and since $h(rep)$, $\temp \in A_p$ as
well. Notice that $h$ is syntactically applied over the ABoxes $A$
and $A_p$ without involving the TBox $T$, and therefore it can be applied
also when such ABoxes are $\T$-inconsistent.
On the one hand, by construction of the transition system under the
certain repair semantics, we
therefore know that:
\begin{compactenum}
\item there exists exactly one $s' = \tup{A',\rmap}$
  s.t.~$\tup{A,\rmap} \trans s'$, where $A' = \bigcap_{A^r \in \Rep(A-\{\temp\},T)} A^r$;
\item there exists exactly one $s'_p 
  =\tup{A'_p,\rmap_p}=\tup{A'_p,h(\rmap)}$ s.t.~$\tup{A_p,\rmap_p}
  \trans_p s'_p$, where $A'_p =
  \bigcap_{A_p^r \in \Rep(A_p-\{\temp\},T)} A_p^r$.
\end{compactenum}
On the other hand, since $A_p = h(A)$, from Lemma~\ref{lemma:iso-rep}
we get that  $A^r \in \Rep(A-\{\temp\},T)$ iff $h(A^r) \in
\Rep(A_p-\{\temp\},T)$. As a consequence, $A'_p =
  \bigcap_{A^r\in \Rep(A-\{\temp\},T)} h(A^r) =
  h(\bigcap_{A^r\in \Rep(A-\{\temp\},T)} A^r) = h(A')$.
Finally, notice that, by construction $A'$ and $A'_p$ do not contain
$\temp$. The claim is therefore proven by inductively
applying Case 1 over $A'$, $A'_p$, and $h$.
\end{proof}

With Lemma~\ref{lemma:pruning-state-it} at hand, we can easily reconstruct
the proof of Theorem~\ref{thm:dec} (given in Section~\ref{sec:dec})
for b- and c-KABs. Since $\muladomit$ is a fragment of \muladom, we get the result.

\section{KABs under Extended Repair Semantics}

\subsection{Proof of Theorem~\ref{thm:eitkabs-dec}}

Given an eb-KAB (ec-KAB respectively) $\K$, we introduce the \emph{pruning}  $\Theta_{\K}$ of the
transition system $\mf{\K}^{eb}$ ($\mf{\K}^{ec}$ resp.), as the transition
system constructed following the extended bold (extended certain,
resp.) repair semantics, but by relying on
the transition relation $\pexec_\K$ (as defined in Section~\ref{pexec}) in
place of $\exec_\K$. To prove  
$\Theta_{\K} \hbsim \mf{\K}^{eb}$ ($\Theta_{\K} \hbsim \mf{\K}^{ec}$ resp.), one can follow step by step the line of
reasoning of Section~\ref{sec:itkabs-dec}, taking into consideration
the fact that $\textsf{Viol}$ concept assertions are inserted into the
ABoxes produced by a repair step. It can be easily noticed that such
assertions do not introduce any additional complication. Remember, in fact, that given an ABox $A$, these
assertions are produced by computing the set $\viol(A,T)$, which is in
turn produced by issuing a series of closed first-order queries over
$A$, considered as a database of facts. Consequently, given two ABoxes $A$ and $A_p$ and a bijection
$h$ s.t.~$A_p = h(A)$, $\viol(A,T) = \viol(h(A),T) =
\viol(A_p,T)$.


\section{Weakly Acyclic KABs}
\label{sec:wa-kabs}
Weakly acyclic KABs are inspired by weakly
acyclic tuple-generating dependencies in data exchange
\cite{FKMP05}. As in data exchange, in our setting weak acyclicity
is a property defined over a \emph{dependency graph}, constructed from
the KAB's specification. In particular, the dependency graph captures the transfer of individuals
from one state to the next state, differentiating between the case of
copy, and the case of service calls. In fact, the latter case leads to
possibly introduce fresh values into the system. The dependency graph
is defined as a variation of the definitions given in \cite{BCDDM13}
and \cite{BCDD*12}.

 Given a KAB $\K=(\T,A_0,\Gamma,\Pi)$, we define its \emph{dependency
    graph} $G = \tup{V, E}$ as follows:
  \begin{compactenum}
  \item Nodes are defined starting from $\T$. More specifically, we
    have one node $\tup{N,1} \in V$ for each concept $N$ in
    $\T$, and two nodes $\tup{P,1},\tup{P,2} \in V$ for every role $P$ in $\T$ (reflecting the
    fact that roles are binary relations, i.e., have two components).
  \item Edges are defined starting from the effect specifications
    in $\Gamma$. We discuss the case of two concept assertions, but In particular:
\begin{compactenum}
\item an ordinary edge $\tup{N_1,1} \ra \tup{N_2,1}$ is contained in $E$ if there
  exists an action $\gamma \in \Gamma$, an effect specification
    \[
    \map{[q^+] \wedge Q^-}{A'}
    \]
in $\gamma$, and a variable or parameter $x$ s.t.~$N_1(x)$ appears in 
$\rew{q^+,\T}$ (i.e., in the perfect rewriting of $q^+$ w.r.t.~$\T$), and
$N_2(x)$ appears in $A'$ (similarly for nodes corresponding to role
assertions). 
\item a special edge $\tup{N_1,1} \xrightarrow{*} \tup{N_2,1}$ is contained in $E$ if there
  exists an action $\gamma \in \Gamma$, an effect specification
    \[
    \map{[q^+] \wedge Q^-}{A'}
    \]
in $\gamma$, and a variable or parameter $x$ s.t.~$N_1(x)$ appears in 
$\rew{q^+,\T}$, and
$N_2(f(\ldots,x,\ldots))$ appears in $A'$ (similarly for nodes corresponding to role
assertions). 
\end{compactenum}
\end{compactenum}
A KAB $\K$ is \emph{weakly
    acyclic} if its dependency graph has no cycle going through a
  special edge.

\subsection{Proof of Theorem \ref{thm:wa}}
To prove the theorem, we resort to the approach discussed in \cite{BCDDM13}
and \cite{BCDD*12}, adapting it so as to deal with inconsistency. More
specifically, the main steps to prove the results are as follows:
\begin{compactenum}
\item Given a KAB $\K$, we introduce its \emph{consistent approximant}
  $\K^p$ and positive dominant $\K^+$,
  which incrementally simplify $\K$ while maintaining the same dependency graph.
\item We show that when $\K$ is weakly acyclic, then it is
  run-bounded.
\item We show that $\K^+$ ``dominates'' $\K^p$ under all semantics
  discussed in the paper, i.e., the active domain of the transition
  system for $\K$ is always
  contained in the active domain of the transition system for $\K^+$.
\item We do the same for $\K$ w.r.t.~$\K^p$, thus transferring the
  weak acyclicity  property from $\K^+$ to $\K$.
\end{compactenum}
Technically, given a KAB $\K=(\T,A_0,\Gamma,\Pi)$, we define the 
\emph{consistent approximant} $\K^p$ of $\K$ as a KAB $ = (T_p, A_0^p, \Gamma^p,
  \Pi)$, where $A_0^p$ and $\Gamma^p$ are obtained as follows:
\begin{compactitem}
\item $A_0^p = A_0 \cup \{\textsf{Viol}(d) \mid \exists t \in T_n \cup
  T_f \text{ s.t. } d=\lab(t)\}$; i.e., $A_0^p$ saturates $A_0$ with
  all possible violations of negative inclusion and functionality
  assertions in $\T$.
\item For every action 
$\act(p_1,\ldots,p_n) : \{e_1, \ldots, e_m\} \in \Gamma$
we have $\act(p_1,\ldots,p_n) : \{e_v,e_1, \ldots, e_m\} \in \Gamma^p$,
where $e_v = \map{\textsf{Viol}(x)}{\{\textsf{Viol}(x)\}}$ copies all
$\textsf{Viol}$ assertions.
\end{compactitem} 
Notice that the TBox of the consistent approximant is constituted by
the positive inclusion assertions of the original TBox. 

Starting from the consistent approximant, we define the 
\emph{positive dominant} $\K^+$ of $\K$ as a KAB $ = (\T_p, A_0^p, \Gamma^+,
  \Pi^+)$, where $\Gamma^+$ and $\Pi^+$ are obtained as follows:
\begin{compactitem}
\item For each action 
$\act(p_1,\ldots,p_n) : \{e_1, \ldots, e_m\} \in \Gamma^p$
we have $\act^+() : \{e_1^+, \ldots, e_m^+\} \in \Gamma^+$ 
where, given $e_i = \map{[q_i^+] \land Q^-}{A_i'}$, we have
$e_i^+ = \map{[q_i^+]}{A_i'}$.
\item For each condition-action rule $\carule{Q}{\act(p_1,\ldots,p_n)} \in \Pi$, we
  have $\carule{true}{\act^+()} \in \Pi^+$. 
\end{compactitem} 
It is easy to show that the dependency graphs of $\K$, $\K^p$ and
$\K^+$ coincide, and therefore $\K$ is weakly acyclic iff $\K^p$ is
weakly acyclic iff $\K^+$ is weakly acyclic.

\begin{theorem}\label{thm:wa-dominant}
  Given KAB $\K$, if $\K$ is weakly acyclic then
  its positive dominant $\K^+$ is run-bounded.
\end{theorem}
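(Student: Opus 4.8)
The plan is to prove the (stronger and cleaner) statement that there is a bound $b$, depending only on $\K^+$, such that every run of $\mf{\K^+}^s$ mentions at most $b$ distinct values. Since the TBox of $\K^+$ is its positive part, which contains only positive inclusion assertions, every ABox reachable in $\K^+$ is satisfiable with that TBox; hence no action step is ever blocked in $\mf{\K^+}^s$, and in the bold/certain/extended variants every repair step of $\K^+$ merely drops the $\temp$ marker and adds nothing (as $\viol(\cdot,T_p)=\emptyset$, its negative and functionality fragments being empty). So run-boundedness of $\mf{\K^+}^s$ immediately yields it for $\mf{\K^+}^b$, $\mf{\K^+}^c$, $\mf{\K^+}^{eb}$ and $\mf{\K^+}^{ec}$ as well, with bound $b+1$.

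First I would stratify the values a run can create. Fix a run $\tau=s_0s_1\cdots$ of $\mf{\K^+}^s$, write $s_t=\tup{A_t,m_t}$, and put $m_\tau=\bigcup_t m_t$ --- a well-defined partial function on service calls, since services are deterministic so consecutive $m_t$ agree on the common part of their domains. Let $V_0$ be the finite, run-independent set consisting of $\iconst$, of the individuals occurring in $A_0^p$, and of the constants occurring in the effect heads of $\Gamma^+$; inductively set $V_{k+1}=V_k\cup m_\tau\big(\{\, f(\vec d)\mid f/n\in\FUNC,\ \vec d\in(V_k)^n \,\}\big)$, the $m_\tau$-image of all service calls whose arguments already lie in $V_k$. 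As $\FUNC$ is finite of bounded arity $a$, we get $|V_{k+1}|\le|V_k|+|\FUNC|\cdot|V_k|^a$, so each $|V_k|$ is bounded by a quantity depending only on $|V_0|$, $\FUNC$ and $k$ --- crucially, not on $\tau$. Now recall the dependency graph $G$ of $\K^+$ from the definition above, which coincides with that of $\K$, so weak acyclicity of $\K$ says exactly that $G$ has no cycle through a special edge; for a node $p$ of $G$ let $\mathrm{rank}(p)$ be the maximum number of special edges on a path of $G$ ending at $p$, which is finite by weak acyclicity, and set $r=\max_p\mathrm{rank}(p)$.

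The heart of the argument is the invariant $(\diamond)$: \emph{for every $t$ and every value $v$ occurring at position $p$ in $A_t$, we have $v\in V_{\mathrm{rank}(p)}$}, which I would prove by induction on $t$. The base case is immediate since $\adom{A_0^p}\subseteq V_0$. For the step, a fact of $A_{t+1}$ is produced by some effect $\map{[q_i^+]}{A_i'}$ together with a match $\sigma$ of $\rew{q_i^+,\T}$ over $A_t$ (here I use that $\K^+$'s effects keep only the positive query $[q_i^+]$, that $\Gamma^+$ is parameterless, and that by FO-rewritability the certain answers of $q_i^+$ over $(T_p,A_t)$ used by $\doo{}{}{}$ are exactly the matches of $\rew{q_i^+,\T}$ over $A_t$). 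A value placed at position $p'$ by an occurrence of a term $z$ in $A_i'$ falls into one of three cases: (i) $z$ is a head constant, so $v\in V_0\subseteq V_{\mathrm{rank}(p')}$; (ii) $z$ is a variable occurring in $\rew{q_i^+,\T}$ at a position $p''$, so $v$ occurs at $p''$ in $A_t$, $G$ has an ordinary edge $p''\to p'$ giving $\mathrm{rank}(p'')\le\mathrm{rank}(p')$, and by the induction hypothesis $v\in V_{\mathrm{rank}(p'')}\subseteq V_{\mathrm{rank}(p')}$; (iii) $z=f(\vec x)$ is a Skolem term whose arguments occur in $\rew{q_i^+,\T}$ at positions $p''_j$, so $v=m_\tau(f(\vec d))$ with each $d_j$ occurring at $p''_j$ in $A_t$, and $G$ contains special edges $p''_j\xrightarrow{*}p'$, whence $\mathrm{rank}(p''_j)\le\mathrm{rank}(p')-1$; by the induction hypothesis $d_j\in V_{\mathrm{rank}(p''_j)}\subseteq V_{\mathrm{rank}(p')-1}$, so $\vec d\in(V_{\mathrm{rank}(p')-1})^n$ and hence $v=m_\tau(f(\vec d))\in V_{\mathrm{rank}(p')}$ by construction. (Role positions are treated identically.) From $(\diamond)$ we get $\bigcup_t\adom{A_t}\subseteq\bigcup_p V_{\mathrm{rank}(p)}\subseteq V_r$, which is finite of size bounded independently of $\tau$; thus $\mf{\K^+}^s$ is run-bounded, and with it every inconsistency-aware variant of $\K^+$.

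The step I expect to be the real obstacle is getting the right form of the invariant. The naive route --- viewing the effects as tuple-generating dependencies and reducing to a terminating chase, as the ``resembles a chase'' intuition suggests --- is a good guide but does not apply literally: a service call may return an arbitrary value, in particular one already in use or coinciding with another call's result, so a genuine run of $\K^+$ can create strictly more values in one step than a fresh-value chase would. The fix, encoded in $(\diamond)$, is to argue directly over \emph{every} run and to index the induction by the dependency-graph position at which each value is introduced, stratifying the produced values by rank (this is essentially the null-counting core of the weak-acyclicity termination argument of \cite{FKMP05}); once $(\diamond)$ is set up, the three-case step and the counting of $V_r$ are routine, and the transfer from $\K$'s weak acyclicity to that of $\K^+$ was already recorded above.
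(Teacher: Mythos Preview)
Your argument is correct. The paper's own proof takes a different, much shorter route: it observes that, by compiling away $T_p$ via FO-rewritability, $\K^+$ coincides with the \emph{positive approximant} of a relational Data-Centric Dynamic System in the sense of \cite{BCDDM13}, and then simply invokes Theorem~4.7 of that paper. What you have written is, in effect, an unfolding of that cited result: the rank-based stratification $(\diamond)$ and the counting of $|V_r|$ are exactly the core of the weak-acyclicity termination argument of \cite{FKMP05}, transported to the KAB setting. So the underlying mathematics is the same, but your presentation is self-contained and makes explicit a point the paper leaves inside the citation --- namely, that service calls may return arbitrary (possibly already-present or colliding) values without spoiling the bound, because the bound is on $|V_k|$ rather than on the set $V_k$ itself. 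Your preliminary remarks about the repair-based transition systems of $\K^+$ being trivial (single repair, empty $\viol$) are correct and in fact anticipate part of the proof of Lemma~\ref{PathDomination}; strictly speaking, though, the theorem as stated only concerns $\mf{\K^+}^s$, so that material is supplementary here.
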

\begin{proof}
By compiling away the TBox of $\K^+$ exploiting the first-order
rewritability of \dllitea, the obtained KAB exactly corresponds to the notion
of \emph{positive approximant} defined for relational Data-Centric
Dynamic Systems in \cite{BCDDM13}. 
The proof is then directly obtained from the proof of Theorem 4.7 in \cite{BCDDM13}.
\end{proof}

\noindent
To show that Theorem \ref{thm:wa-dominant} extends to the KAB itself
under each of the semantics considered in this paper, we first
introduce the notion of \emph{dominance} between transition
systems. Technically, a transition system $\Upsilon_1$ is
\emph{dominated by}
$\Upsilon_2$ if, for every run $\tau_1$ in $\Upsilon_1$ there exists a
run $\tau_2$ in $\Upsilon_2$ s.t.~for all pairs of states $\tau_1(i)$
and $\tau_2(i)$, we have $\abox(\tau_1(i)) \subseteq
\abox(\tau_2(i))$. By definition, we consequently have that if
$\Upsilon_2$ is run-bounded, then $\Upsilon_1$ is run-bounded as
well. This shows that, to prove run-boundedness of a transition
system, it is sufficient to prove that such a transition system is dominated by a
run-bounded transition system.


With this machinery at hand, we are now able to prove the following
two key lemmas, which respectively show that for any semantics
considered in this paper, the consistent
approximant is dominated by the positive
dominant, and dominates the original
KAB.

\begin{lemma}\label{PathDomination}
For any KAB $\K$, we have that:
\begin{compactenum}
\item $\mf{\K^p}^s$ is dominated by $\mf{\K^+}^s$;
\item $\mf{\K^p}^b$ is dominated by $\mf{\K^+}^b$;
\item $\mf{\K^p}^c$ is dominated by $\mf{\K^+}^c$;
\item $\mf{\K^p}^{eb}$ is dominated by $\mf{\K^+}^{eb}$;
\item $\mf{\K^p}^{ec}$ is dominated by $\mf{\K^+}^{ec}$.
\end{compactenum}
\end{lemma}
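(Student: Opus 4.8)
The plan is to prove all five items by a single induction on run length, after first reducing the repair and extended‑repair dynamics of $\K^p$ and $\K^+$ to trivial bookkeeping. The key preliminary observation is that both $\K^p$ and $\K^+$ carry the TBox $\T_p$, which consists only of positive inclusion assertions, so that \emph{every} ABox is $\T_p$-consistent. Hence in the standard semantics no action is ever discarded for inconsistency; in the bold and in the certain semantics the only b‑repair, and likewise the c‑repair, of a $\T_p$-consistent ABox $A\setminus\{\temp\}$ is $A\setminus\{\temp\}$ itself, so the repair step merely deletes the marker $\temp$; and in the extended semantics, since $\T_p$ contains neither negative inclusions nor functionality assertions, $\viol(A,\T_p)=\emptyset$ for every $A$, so the extended repair step adds no $\textsf{Viol}$ assertion and again just deletes $\temp$. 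Consequently, for each of the five semantics a run of $\mf{\K^p}^{\bullet}$ (and of $\mf{\K^+}^{\bullet}$) is a chain of action steps, interleaved in the four non‑standard cases with marker‑deleting repair steps, and it suffices to mirror the action steps while keeping the two runs synchronized on the action/repair alternation, so that $\temp$ is in the $i$-th state of one run iff it is in the $i$-th state of the other.

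The induction itself goes as follows. Fix a run $\tau_1=s_0s_1\cdots$ of $\mf{\K^p}^{\bullet}$, write $A_i$ for the ABox of $\tau_1(i)$, and build a run $\tau_2$ of $\mf{\K^+}^{\bullet}$, with $i$-th ABox $B_i$, maintaining $A_i\subseteq B_i$. The base case is immediate since $\K^p$ and $\K^+$ share the initial state $\tup{A_0^p,\emptyset}$; a repair step of $\tau_1$ is matched by the repair step of $\mf{\K^+}^{\bullet}$, both of which (by Step~1) just delete $\temp$, so the inclusion is preserved. For an action step of $\tau_1$ fired by $\act(p_1,\dots,p_n)\in\Gamma^p$ with legal parameter substitution $\sigma$, answer substitutions $\rho$, and service‑call evaluation $\theta$, I would mirror it with $\act^+()\in\Gamma^+$: the condition‑action rule $\carule{true}{\act^+()}$ makes $\act^+()$ always legal, and its effects $e_i^+=\map{[q_i^+]}{A_i'}$ come from those of $\act$ by dropping the ECQ $Q_i^-$ and treating the former parameters $p_j$ as ordinary free variables of $q_i^+$. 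Since $A_i\subseteq B_i$ and certain answers of UCQs are monotone in the ABox, the ground boolean UCQ $q_i^+\sigma\rho$ entailed by $(\T_p,A_i)$ is also entailed by $(\T_p,B_i)$; hence the combined substitution $\sigma\rho$ lies in $\Ans{[q_i^+],\T_p,B_i}$ for $\act^+$, and dropping $Q_i^-$ only enlarges that answer set. Therefore $\act^+$ produces over $B_i$ a superset of $\doo{}{}{\T_p,A_i,\act\sigma}$ at the level of Skolem‑bearing facts, while the copy effects $e_v^+$ forward the fixed $\textsf{Viol}$ atoms, which by hypothesis already cover those of $A_i$; adding $\temp$ back on both sides yields $A_{i+1}\subseteq B_{i+1}$.

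The one delicate point, which I expect to be the main obstacle, is the bookkeeping of service calls: because $\K^+$ always fires its actions and keeps every positive answer, it may introduce at step $i$ a service call that $\tau_1$ makes only later or never, which could clash with the determinism constraint on service‑call maps. I would resolve this by defining $\tau_2$ \emph{globally}, using that all of $\tau_1$ is given in advance: a Skolem term occurring as a service call somewhere in $\tau_1$ is assigned its $\tau_1$-value, and every other service call introduced along $\tau_2$ receives a pairwise‑distinct value, fresh with respect to $\tau_1$ and to the prefix of $\tau_2$ already constructed. This assignment is consistent, agrees with each $\theta$ used by $\tau_1$ on the calls that matter, and so makes every mirrored step a legal transition of $\mf{\K^+}^{\bullet}$ while preserving $A_i\subseteq B_i$ throughout. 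All five items then follow uniformly, the only difference between them being the (trivial) shape of the repair steps settled in Step~1; together with the fact that domination transfers run‑boundedness, this is exactly what is needed to carry Theorem~\ref{thm:wa-dominant} down from $\K^+$ to $\K^p$, and thence (via the companion domination lemma) to $\K$ itself.
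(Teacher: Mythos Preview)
Your proposal is correct and follows essentially the same approach as the paper: both hinge on the observation that $\T_p$ contains only positive inclusions, so every ABox is $\T_p$-consistent and the repair (and extended repair) steps of $\K^p$ and $\K^+$ are trivial marker-deletions, reducing items 2--5 to item 1; and both establish item 1 by an induction along runs maintaining $A_i\subseteq B_i$ together with a compatibility condition on service-call maps. The paper phrases that compatibility as an explicit invariant (that $\rmap_i^+$ agrees with all \emph{future} maps $\rmap_j$, $j>i$, of the $\K^p$-run on their common domain) and defers the detailed action-step argument to the proof of Lemma~4.1 in \cite{BCDDM13}, whereas you spell out the monotonicity-of-UCQs argument directly and achieve the same service-call compatibility by your global assignment of values using the entire run $\tau_1$ in advance---which is exactly what the paper's invariant encodes.
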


\begin{proof}
We discuss claim 1 and claims 2-5 separately.

\smallskip
\noindent
Each claim can be obtained by proving the following stronger claim:
for every run $\tau$ in
  $\mf{\K^p}^s$ (resp., $\mf{\K^p}^b$, $\mf{\K^p}^c$,
  $\mf{\K^p}^{eb}$, $\mf{\K^p}^{ec}$), there exists a run $\tau^+$ in $\mf{\K^+}^s$ (resp., $\mf{\K^+}^b$, $\mf{\K^+}^c$,
  $\mf{\K^+}^{eb}$, $\mf{\K^+}^{ec}$) s.t.~for
  all pairs of state $\tau(i) = \tup{A_i, \rmap_i}$ and $\tau^+(i) =
  \tup{A^+_i, \rmap^+_i}$, we have:
  \begin{enumerate}
 \item $A_i \subseteq A^+_i$;
  \item $\rmap^+_i$ extends $\rmap_i$;
  \item for the mappings mentioned in $\rmap^+_i$ but not in
    $\rmap_i$, $\rmap_i^+$ ``agrees'' with the maps contained in the
    suffix of $\tau(i)$, i.e.,
    \[\restrict{\pos{\rmap_i}}{C_i} =\restrict{(\bigcup_{j>i}
      \rmap_j)}{C_i}\] where $C_i = \domain{\pos{\rmap_i}} \cap
    \bigcup_{j>i} \domain{\rmap_j}$.
  \end{enumerate}

\noindent\textbf{Claim 1.} 
Thanks to the first-order rewritability of \dllitea, $\K^p$ and $\K^+$ can be
correspondingly represented as a Data-Centric Dynamic System in the
sense of \cite{BCDDM13}. The proof is then directly obtained from the
proof of Lemma 4.1 in \cite{BCDDM13}.

\smallskip
\noindent
\textbf{Claim 2-5.} The claims can be easily shown by observing that
$\K^p$ and $\K^+$ never produce an ABox that is $\T_p$-inconsistent,
since they only consider positive inclusion assertions. Consequently,
under each of the repair semantics, the repair service does not affect
the current ABox: it simply generates a unique successor that contains
the same ABox and service call map produced by the previous action
step. This shows that $\mf{\K^p}^b = \mf{\K^p}^c = \mf{\K^p}^{eb} =
\mf{\K^p}^{ec}$ and that $\mf{\K^+}^b = \mf{\K^+}^c = \mf{\K^+}^{eb} =
\mf{\K^+}^{ec}$. To get the claims, given the current state
$\tup{A,\rmap}$ in $\mf{\K^p}^b$, we specifically discuss the case in which
$\temp \not\in A$, and the case in which
$\temp \in A$:
\begin{compactitem}
\item[\em (base case)] Trivial, because the initial states of
  $\mf{\K^p}^b$ and $\mf{\K^+}^b$ coincide (they are both equal to
  $\tup{A^p_0,\emptyset}$).
\item[\em (case 1 - action step)] Since it cannot be the case that the
  state produced after an action step is $\T_p$-inconsistent, then the
  proof exactly follows the one for Claim 1.
\item[\em (case 2 - repair step)] Consider $\tau(i) = \tup{A,\rmap}$
  and $\tau^+(i) = \tup{A^+,\rmap^+}$ s.t.:
\begin{inparaenum}
\item $\temp \in A$ and $\temp \in A^+$;
\item $A$ and $A^+$ satisfy
  condition 1;
\item $\rmap$ and $\rmap^+$ satisfy conditions 2 and 3.
\end{inparaenum}
Since $A$ and $A^+$ are $T_p$-consistent, then there is a unique
successor $\tup{A - \{\temp\},\rmap}$ of $\tau(i)$ in
$\mf{\K^p}^b$, and a unique successor $\tup{A^+ - \{\temp\},\rmap^+}$ of $\tau^+(i)$ in
$\mf{\K^+}^b$. It is trivial to see that these successors satisfy the
three conditions of the claim above.
\end{compactitem}

\end{proof}

\begin{lemma}\label{PathDomination2}
For any KAB $\K$, we have that:
\begin{compactenum}
\item $\mf{\K}^s$ is dominated by $\mf{\K^p}^s$;
\item $\mf{\K}^b$ is dominated by $\mf{\K^p}^b$;
\item $\mf{\K}^c$ is dominated by $\mf{\K^p}^c$;
\item $\mf{\K}^{eb}$ is dominated by $\mf{\K^p}^{eb}$;
\item $\mf{\K}^{ec}$ is dominated by $\mf{\K^p}^{ec}$.
\end{compactenum}
\end{lemma}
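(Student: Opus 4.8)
The plan is to prove a single strengthened statement that yields all five items uniformly, exactly as in the proof of Lemma~\ref{PathDomination}: for every run $\tau$ of $\mf{\K}^s$ (resp.\ $\mf{\K}^b$, $\mf{\K}^c$, $\mf{\K}^{eb}$, $\mf{\K}^{ec}$) I would build, step by step, a run $\tau^p$ of $\mf{\K^p}^s$ (resp.\ $\mf{\K^p}^b$, $\ldots$) such that, writing $\tau(i)=\tup{A_i,\rmap_i}$ and $\tau^p(i)=\tup{A^p_i,\rmap^p_i}$, we have $A_i\subseteq A^p_i$, $\rmap^p_i$ extends $\rmap_i$, and the service-call ``agreement'' condition used in the proof of Lemma~\ref{PathDomination} holds. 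The run $\tau^p$ replays the same action instances and the same service-call evaluations as $\tau$ (and, under a repair semantics, the same repair choices on the $\tau$ side); on the $\tau^p$ side the repair step is forced and trivial, since $\K^p$'s TBox $\T_p$ contains only positive inclusion assertions, so every ABox reachable in $\mf{\K^p}^\ast$ is $\T_p$-consistent and the repair service only deletes the marker $\temp$. In particular $\mf{\K^p}^b=\mf{\K^p}^c=\mf{\K^p}^{eb}=\mf{\K^p}^{ec}$, and no $\textsf{Viol}$ assertion is ever produced by a repair step in $\mf{\K^p}^\ast$.

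Claim~1 (standard semantics) reduces to its relational counterpart: by first-order rewritability of \dllitea one compiles $\T$ and $\T_p$ away, turning $\K$ and $\K^p$ into Data-Centric Dynamic Systems, after which the only residual difference is the extra initial facts $\{\textsf{Viol}(d)\mid d=\lab(t),\ t\in\T_n\cup\T_f\}$ together with the copy effect $e_v$; both involve the fresh concept $\textsf{Viol}$, which occurs in no $q^+_i$, no $Q^-_i$ and in no rule condition, hence never interferes with query evaluation. Thus $A^p_i = A_i \cup \{\textsf{Viol}(d)\mid d=\lab(t),\ t\in\T_n\cup\T_f\}$ throughout, and the invariant is propagated by the same induction as in the proof of Lemma~4.1 of \cite{BCDDM13}. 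What licenses the replay is that every state of $\mf{\K}^s$ is $\T$-consistent by construction, and by separability of \dllitea the certain answers of an ECQ over a $\T$-consistent ABox are unchanged when the TBox is shrunk from $\T$ to $\T_p$.

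For Claims~2--5 the induction adds an action-step case and a repair-step case, as in the proof of Lemma~\ref{lemma:pruning-state-it}. Action steps are taken only at $\temp$-free states, which in $\mf{\K}^\ast$ are $\T$-consistent (a b-repair is $\T$-consistent, and a c-repair, being contained in some b-repair, is $\T$-consistent as well), so the argument of Claim~1 applies and yields a matching action step in $\mf{\K^p}^\ast$ leading to a $\temp$-marked state. In a repair step the two sides diverge: in $\mf{\K}^\ast$ the repair service may delete assertions, whereas in $\mf{\K^p}^\ast$ it deletes only $\temp$; since deleting assertions can only shrink an ABox, the containment $A_{i+1}\subseteq A^p_{i+1}$ is preserved, and both successors are $\temp$-free so the induction continues with the action-step case. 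Service-call maps are untouched by repair steps, so the agreement condition carries over trivially.

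The main obstacle is not ABox containment itself but keeping a \emph{single} run $\tau^p$ that works at every step once the two sides are no longer ABox-equal up to $\textsf{Viol}$ facts, which happens as soon as a non-trivial repair is applied on the $\tau$ side: one has to check that the equality commitments chosen along $\tau$ can be consistently lifted to $\tau^p$ and that the actions replayed from $\tau$ remain legal and well-defined over the (possibly larger) ABoxes of $\tau^p$. This is handled by the same bookkeeping as in \cite{BCDDM13}: it is the service-call map, not the ABox, that must stay coordinated, and the agreement invariant (condition~3 in the proof of Lemma~\ref{PathDomination}) is precisely what makes this possible. Once it is in place, each run $\tau$ of $\mf{\K}^\ast$ admits a dominating run $\tau^p$ in $\mf{\K^p}^\ast$, which is the stated claim, and combined with Lemma~\ref{PathDomination} and Theorem~\ref{thm:wa-dominant} it transfers run-boundedness from $\K^+$ down to $\K$ under all semantics.
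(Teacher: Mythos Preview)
Your approach matches the paper's: both prove the strengthened three-condition invariant from the proof of Lemma~\ref{PathDomination} by induction, split into action and repair steps, and both ultimately defer the delicate action-step argument (once the two ABoxes have diverged after a non-trivial repair) to the machinery of \cite{BCDDM13}. Your treatment of Claim~1 is in fact more explicit than the paper's one-line ``fragment'' argument.

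There is one oversight in your handling of the extended-repair cases (items~4 and~5). Your repair-step argument says that on the $\K$ side ``the repair service may delete assertions'', hence shrinks the ABox, hence containment is preserved. But under the eb/ec semantics the repair step on the $\K$ side does not only delete: it also \emph{adds} the assertions $\textsf{Viol}(d)$ for each $d\in\viol(\cdot,\T)$, so the shrinking argument alone is insufficient. The fix is precisely what the paper's proof of Cases~4 and~5 spells out, and what you yourself already noted for Claim~1: by construction every ABox reached in $\mf{\K^p}^{eb}$ (resp.\ $\mf{\K^p}^{ec}$) contains \emph{all} $\textsf{Viol}$ assertions, since they are seeded in $A_0^p$ and propagated by the extra effect $e_v$ in every action. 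Hence any $\textsf{Viol}$ facts added by an extended repair step on the $\K$ side are already present on the $\K^p$ side, and the containment $A_{i+1}\subseteq A^p_{i+1}$ survives. You should make this explicit rather than folding Cases~4--5 into the generic ``repair shrinks'' argument.
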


\begin{proof}
We discuss each claim separately, by referring to the three inductive
conditions defined in the stronger claim of the proof of
Lemma~\ref{PathDomination}.

\smallskip
\noindent
\textbf{Case 1.} Trivial, because $\mf{\K}^s$ is a fragment of
$\mf{\K^p}^s$: it does not contain the portions of $\mf{\K^p}^s$ that
are generated starting from a $\T$-inconsistent (but always
$T_p$-consistent) ABox.

\smallskip
\noindent
\textbf{Case 2.} The base case is trivial, because the initial state
of $\mf{\K}^b$ is $\tup{A_0,\emptyset}$, the initial state of
$\mf{\K^p}^b$ is $\tup{A_0^p,\emptyset}$, and by construction $A_0
\subseteq A_0^p$.

The inductive case for an action step can be proven exactly in the
same way discussed in the proof of Lemma~\ref{PathDomination} - Claim
1.

We then focus on the inductive case for a repair step. Consider
$\tau(i) = \tup{A,\rmap}$ in $\mf{\K}^b$ and $\tau^p(i) =
\tup{A^p,\rmap^p}$ in $\mf{\K^p}^b$, s.t.~conditions 1, 2 and 3
hold. By construction, we know that:
\begin{compactitem}
\item every successor of $\tup{A,\rmap}$ in $\mf{\K}^b$ has the form
  $\tup{A',\rmap}$, where $A' \in \Rep(\A - \temp, \T)$;
\item $\tup{A^p,\rmap^p}$ has a unique successor $\tup{A^p -
    \{\temp\},\rmap^p}$ in $\mf{\K^p}^b$.
\end{compactitem}
Since the service call maps do not change, the successors continue to
obey to conditions 2 and 3. Furthermore, by definition of $\Rep()$, we
know that $A' \subseteq A$ and, by hypothesis, that $A \subseteq
A^p$. Consequently, $A' \subseteq A_p$, and therefore also condition 1
is satisfied.

\smallskip
\noindent
\textbf{Case 3.}  The base case and the inductive case for an action
step are as in Case 2. We then focus on the inductive case for a repair step. Consider
$\tau(i) = \tup{A,\rmap}$ in $\mf{\K}^b$ and $\tau^p(i) =
\tup{A^p,\rmap^p}$ in $\mf{\K^p}^b$, s.t.~conditions 1, 2 and 3
hold. By construction, we know that:
\begin{compactitem}
\item $\tup{A,\rmap}$ has a unique successor
  $\tup{A',\rmap}$  in $\mf{\K}^b$, where $A' = \bigcap_{A^r \in \Rep(A-\{\temp\},T)} A^r$;
\item $\tup{A^p,\rmap^p}$ has a unique successor $\tup{A^p -
    \{\temp\},\rmap^p}$ in $\mf{\K^p}^b$.
\end{compactitem}
Since the service call maps do not change, the successors continue to
obey to conditions 2 and 3. Furthermore, by definition we have $A'
\subseteq A$ and, by hypothesis, we know that $A \subseteq
A^p$. Consequently, $A' \subseteq A_p$, and therefore also condition 1
is satisfied.

\smallskip
\noindent
\textbf{Case 4.} This case is directly obtained from Case 2, and from the
observation that, by construction, each ABox of the consistent
approximant contains all the possible $\textsf{Viol}$ assertions,
since they are asserted in the initial state, and copied by means of a
specific effect contained in each of its actions. Therefore, after a
repair step, it is guaranteed that the ABox obtained in $\mf{\K}^{eb}$
is a subset of the corresponding ABox in $\mf{\K^p}^{eb}$.

\smallskip
\noindent
\textbf{Case 5.} This case is directly obtained from Case 3 and the
observation done for Case 4.
\end{proof}

The proof of Theorem~\ref{thm:itkabs-dec} is finally obtained by
combining Theorem~\ref{thm:wa-dominant} and the composition of
Lemma~\ref{PathDomination2} with Lemma~\ref{PathDomination}, thanks to
transitivity of domination.

\section{KABs with Consistent Query Answering}
As mentioned in the conclusion of the paper, an orthogonal approach to
manage inconsistency would be to make the KAB itself
inconsistency-tolerant. More specifically, we can conceive a KAB that
admits inconsistent ABoxes, and that replaces the standard
query answering service with an inconsistency-tolerant querying
service, able to extract meaningful answers even in presence of
inconsistent information.

In the following, we rely for this purpose on the standard notion of
\emph{consistent query answering} in databases \cite{Bert06}, which
has been extended to the knowledge base setting in
\cite{LLRRS10}. More specifically, we introduce the following query
answering service, which corresponds to the notion of AR-consistent
entailment in \cite{LLRRS10} (Definition 3). 

Given an UCQ $q$, the \emph{consistent-query answer} to $q$ over $(T,A)$ is the set $\cqa{q,T,A}$ of
substitutions $\sigma$ of the free variables of $q$ with constants in
$\adom{A}$ s.t., for every repair $A_r \in \Rep(A,T)$, 
$q\sigma$ evaluates to true in every model of $(T,A_r)$. Observe that,
when $A$ is $T$-consistent, the consistent-query answers coincide with
the certain answers.

Like for certain answers, we extend the notion of consistent-query
answer to ECQ as follows:  given an ECQ $Q$, the
\emph{consistent-query answer to $Q$ over $(\T,\A)$}, is the set $\Cqa{Q,\T,\A}$ of tuples
of constants in $\adom{A}$ defined by composing  the consistent-query answers
$\cqa{q,T,A}$ of UCQs $q$ through first-order
constructs, and interpreting existential variables as ranging over $\adom{A}$.

\subsection{Inconsistency-tolerant KABs} We introduce the
\emph{inconsistency-tolerant} semantics for KABs as the variation of
the standard semantics where:
\begin{compactitem}
\item all queries are answered using
consistent-query answering instead of certain answers (i.e., by
replacing every
$\Ans{Q,T,A}$ with $\Cqa{Q,T,A}$);
\item an action with parameters is applied even if the resulting ABox
  is $T$-inconsistent (in fact, consistent-query answering makes it
  possible to query such an inconsistent ABox in a meaningful way).
\end{compactitem}
We call \emph{it-KAB} a KAB interpreted under the
inconsistency-tolerant semantics. Given an it-KAB $\K$, we denote with
$\mf{\K}^{it}$ the transition system describing its execution semantics.

In order to specify temporal/dynamic properties over it-KABs, also the
$\muladom$ logic must be adapted, making it able to query even
$T$-inconsistent ABoxes in a meaningful way. In particular, we
introduce the logic $\muladomcqa$ that is syntactically equivalent to
$\muladom$, but redefines the semantics of local EQL queries $Q$ as follows:
\[\MODA{Q} = \{s \in \Sigma\mid \Cqa{Q\vfo, T, \abox(s)} = \true\}\]

\subsection{Verification of Inconsistency-Tolerant KABs}
In this Section, we show that the decidability results presented for
the repair semantics seamlessly apply to it-KABs as well.

\begin{lemma}
\label{lemma:iso-query-cqa}
Consider two knowledge bases $(T,A_1)$ and $(T,A_2)$, s.t.~there
exists a bijection $h$ with $A_2 = h(A_1)$. For every EQL query $q$,
we have $\tup{d_1,\ldots,d_n} \in \Cqa{q,T,A_1}$
  iff $\tup{h(d_1),\ldots,h(d_n)} \in \Cqa{h(q),T,h(A_1)}$.
\end{lemma}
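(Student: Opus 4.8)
The plan is to mirror the proof of Lemma~\ref{lemma:iso-query}, adding one extra layer that accounts for the quantification over repairs intrinsic to consistent-query answering. First I would unfold the definition of $\Cqa{\cdot}$ on UCQs: a substitution $\sigma$ belongs to $\cqa{q,T,A_1}$ exactly when, for every repair $A_r \in \Rep(A_1,T)$, $\sigma$ is a certain answer to $q$ over $(T,A_r)$, i.e.\ $\sigma \in \ans{q,T,A_r}$.

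The key step is to combine two results already at hand. By Lemma~\ref{lemma:iso-rep}, the bijection $h$ with $A_2 = h(A_1)$ restricts to a bijection between $\Rep(A_1,T)$ and $\Rep(A_2,T)$ sending $A_r$ to $h(A_r)$; in particular $\Rep(A_2,T) = \{\, h(A_r) \mid A_r \in \Rep(A_1,T)\,\}$. By Lemma~\ref{lemma:iso-query} applied to each repair, $\tup{d_1,\ldots,d_n} \in \ans{q,T,A_r}$ iff $\tup{h(d_1),\ldots,h(d_n)} \in \ans{h(q),T,h(A_r)}$. Chaining these two equivalences over all repairs of $A_1$ yields $\tup{d_1,\ldots,d_n} \in \cqa{q,T,A_1}$ iff $\tup{h(d_1),\ldots,h(d_n)} \in \cqa{h(q),T,h(A_1)}$, which settles the UCQ case.

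For a general ECQ $Q$, I would proceed by structural induction on $Q$. The base case, a bracketed UCQ atom, is exactly the statement just proved. For the inductive step, observe that $h$ maps $\adom{A_1}$ bijectively onto $\adom{A_2}$ and fixes $\iconst$, so it commutes with the Boolean connectives and with existential quantification interpreted as ranging over the active domain; hence $\Cqa{\cdot}$ distributes over the EQL constructs in the same way on both sides, and the claim follows from the induction hypothesis applied to the immediate subformulae.

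I do not expect a serious obstacle: the argument is essentially bookkeeping on top of Lemmas~\ref{lemma:iso-rep} and~\ref{lemma:iso-query}. The one point requiring (minor) care is to be explicit that the bijection between the repair sets of $A_1$ and $A_2$ is induced by the \emph{same} $h$, so that the renaming used in the repair layer and the one used in the certain-answer layer are compatible; this compatibility is precisely what Lemma~\ref{lemma:iso-rep} provides.
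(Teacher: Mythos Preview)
Your proposal is correct and matches the paper's approach: the paper's proof is the single line ``This result is a direct consequence of the combination of Lemmas~\ref{lemma:iso-query} and \ref{lemma:iso-rep},'' and your write-up simply spells out how that combination works (bijection on repairs via Lemma~\ref{lemma:iso-rep}, then Lemma~\ref{lemma:iso-query} on each repair, then structural induction for the ECQ operators). There is nothing to add.
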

\begin{proof}
This result is a direct consequence of the combination of
Lemmas~\ref{lemma:iso-query} and \ref{lemma:iso-rep}.
\end{proof}

\begin{theorem}
\label{thm:dec-it}
Verification of $\muladomcqa$ properties over run-bounded it-KABs is decidable.
\end{theorem}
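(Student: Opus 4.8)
The plan is to replay the finite-state abstraction argument of Theorem~\ref{thm:dec}, replacing every appeal to the isomorphism-invariance of certain answers (Lemma~\ref{lemma:iso-query}) by the corresponding invariance of consistent-query answers (Lemma~\ref{lemma:iso-query-cqa}). Since an it-KAB is obtained from the standard semantics merely by (i)~evaluating all queries --- parameter legality and the effect function $\doo{}{}{}$ --- through $\Cqa{\cdot}$ instead of $\Ans{\cdot}$, and (ii)~dropping the $\T$-consistency filter when building the transition system, the construction of $\mf{\K}^{it}$ is single-layered, so the relevant template is Theorem~\ref{thm:dec} and not the two-step Theorem~\ref{thm:itkabs-dec}.

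First I would establish a CQA-version of bisimulation invariance: if $\Upsilon_1 \hbsim \Upsilon_2$ then $\Upsilon_1 \models \Phi$ iff $\Upsilon_2 \models \Phi$ for every closed $\muladomcqa$ formula $\Phi$. The proof of Theorem~\ref{thm:hbsim} uses only that isomorphic ABoxes cannot be distinguished by the query language evaluated at the states, and by Lemma~\ref{lemma:iso-query-cqa} this now holds for consistent-query answering; the rest of the $\mu$-calculus invariance argument is unchanged. Next I would define the pruning $\Theta_\K$ of $\mf{\K}^{it}$ exactly as in Section~\ref{pexec}: build the transition system under the inconsistency-tolerant semantics, but using $\pexec_\K$ in place of $\exec_\K$, so that for each state and each action with parameters all successors that respect the same equality commitment are collapsed into a single representative. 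Crucially, equality commitments and $\doo{}{}{}$ depend only syntactically on the set of facts produced, so their definitions are unaffected by the possibility that intermediate ABoxes are $\T$-inconsistent; only the evaluation of the effect and legality queries changes, and that change is absorbed by Lemma~\ref{lemma:iso-query-cqa}. I would then reconstruct Lemma~\ref{lemma:pruning-state} verbatim in this setting: whenever $A_p = h(A)$ and $\rmap_p = h(\rmap)$ for a bijection $h$, we have $\tup{A,\rmap} \hbsim_h \tup{A_p,\rmap_p}$; transporting $\Cqa{\cdot}$ across $h$ (hence parameter legality and the output of $\doo{}{}{}$) via Lemma~\ref{lemma:iso-query-cqa}, and matching equality commitments up to $h$, gives the step-by-step simulation exactly as before. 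Applied at the (identical) initial states, this yields $\Theta_\K \hbsim \mf{\K}^{it}$.

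Finally, for a run-bounded it-KAB $\K$ the pruning $\Theta_\K$ is finite-state, by the counting in the proof of Theorem~\ref{thm:dec}: run-boundedness bounds the number of distinct values along any run, and pruning bounds the out-degree of each state by the (finite) number of equality commitments generable by the actions with parameters, so $\adom{\Theta_\K}$ is finite. By the CQA-bisimulation invariance, $\mf{\K}^{it}\models\Phi$ iff $\Theta_\K\models\Phi$ for every $\muladomcqa$ formula $\Phi$; since $\adom{\Theta_\K}$ is finite, quantifier elimination transforms $\Phi$ into a propositional $\mu$-calculus formula $\phi$ with $\Theta_\K\models\Phi$ iff $\Theta_\K\models\phi$, and model checking $\phi$ over the finite-state $\Theta_\K$ is decidable \cite{Emer97}. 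I do not expect a genuine obstacle here: the entire point is that the abstraction machinery --- pruning, equality commitments, history-preserving bisimulation --- is insensitive to whether ABoxes are $\T$-consistent, since $\adom{\cdot}$, $\doo{}{}{}$ and equality commitments are all syntactic; the only thing that really must be re-checked is that consistent-query answering, which is what $\muladomcqa$ and the it-effects now use, is itself isomorphism-invariant, and that is precisely Lemma~\ref{lemma:iso-query-cqa}.
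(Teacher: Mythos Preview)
Your proposal is correct and follows essentially the same approach as the paper's own proof, which likewise observes that the abstraction machinery behind Theorem~\ref{thm:dec} carries over step-by-step once Lemma~\ref{lemma:iso-query} is replaced by Lemma~\ref{lemma:iso-query-cqa}. You spell out the intermediate steps (CQA-bisimulation invariance, pruning, the analogue of Lemma~\ref{lemma:pruning-state}, finiteness) in considerably more detail than the paper's brief sketch, but the substance is identical.
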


\begin{proof}
By inspecting the proofs of Theorem~\ref{thm:dec} (given in
Appendix~\ref{sec:dec}), we observe that the possibility of
constructing a faithful finite-state abstraction for a run-bounded
KAB depends on the fact that its execution semantics produce bisimilar
runs starting from isomorphic states. This key property, in turn,
relies on the fact that the query answering service does not
distinguish between isomorphic states. Since this holds for
consistent-query answers as well (see
Lemma~\ref{lemma:iso-query-cqa}), we can follow, step-by-step, the same
proof given in Appendix~\ref{sec:dec}.
\end{proof}

\begin{theorem}
Given a weakly acyclic KAB $\K$, we have that $\mf{\K}^{it}$ is run-bounded.
\end{theorem}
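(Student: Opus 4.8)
The plan is to follow the proof of Theorem~\ref{thm:wa}: I would obtain run-boundedness of $\mf{\K}^{it}$ by showing that $\mf{\K}^{it}$ is \emph{dominated} (in the sense of Section~\ref{sec:wa-kabs}) by the transition system $\mf{\K^+}^s$ of the positive dominant $\K^+$ under the standard semantics. Since $\K$ is weakly acyclic, Theorem~\ref{thm:wa-dominant} gives that $\mf{\K^+}^s$ is run-bounded, and, as observed before Lemma~\ref{PathDomination}, domination transfers run-boundedness; hence $\mf{\K}^{it}$ is run-bounded. So the only work is to establish this one domination, which amounts to replaying the induction used for Lemmas~\ref{PathDomination} and~\ref{PathDomination2}, now in the presence of consistent-query answering and without the $\T$-consistency filter.

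The key new ingredient is that, on its positive part, consistent-query answering is over-approximated by certain answering that uses only the positive inclusions, evaluated over a larger ABox. Precisely, for every effect $e_i = \map{[q_i^+]\land Q_i^-}{E_i}$, every parameter substitution $\sigma$, and every (possibly $\T$-inconsistent) ABox $A \subseteq A^+$: any $\rho \in \Cqa{([q_i^+]\land Q_i^-)\sigma, T, A}$ yields a full substitution $\sigma\rho$ of the free variables of $q_i^+$ with $\sigma\rho \in \Cqa{[q_i^+], T, A}$, since the extension of a conjunction is contained in that of each conjunct; and $\Cqa{[q_i^+], T, A} \subseteq \Ans{q_i^+, T_p, A^+}$. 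For the latter, note that $\Rep(A,T)$ is nonempty (the empty ABox is $\T$-consistent, as $A_0$ is), so for any $A_r \in \Rep(A,T)$ one has $\Cqa{[q_i^+], T, A} \subseteq \Ans{q_i^+, T, A_r} = \Ans{q_i^+, T_p, A_r} \subseteq \Ans{q_i^+, T_p, A^+}$, where the equality holds because $A_r$ is $\T$-consistent and, over a consistent ABox, certain answers in \dllitea depend only on the positive inclusions (negative and functionality assertions affect only consistency), and the last inclusion holds since $A_r \subseteq A \subseteq A^+$ and every model of $(T_p, A^+)$ is a model of $(T_p, A_r)$. Consequently the facts $E_i\sigma\rho$ produced by $e_i$ in a step of $\mf{\K}^{it}$ are among the facts $E_i\rho'$ produced by the parameter-free effect $e_i^+ = \map{[q_i^+]}{E_i}$ of $\K^+$ evaluated over $A^+$, so that $\doo{}{}{T,A,\alpha\sigma} \subseteq \doo{}{}{T_p,A^+,\alpha^+()}$ as sets of facts with service-call terms.

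With this in hand, I would carry out an induction on the length of runs that parallels the proof of Lemma~\ref{PathDomination2}, maintaining the stronger invariant used in the proof of Lemma~\ref{PathDomination} (the current ABoxes satisfy $A \subseteq A^+$, and the service-call maps satisfy conditions~1--3 there). Given a transition of $\mf{\K}^{it}$ fired by a legal $\alpha\sigma$ from a state $\tup{A,\rmap}$, the matching transition of $\mf{\K^+}^s$ is fired by $\alpha^+()$, which is \emph{always} executable because in $\K^+$ every condition-action rule has condition $true$ and every action is parameter-free. The matching service-call substitution $\theta^+$ is chosen exactly as in \cite{BCDDM13}: agreeing with $\rmap^+$, extending the it-step substitution $\theta$ on the (fewer) service calls issued by $\mf{\K}^{it}$, and agreeing on the remaining calls with the maps occurring in the suffix of the it-run, so that the successor again satisfies the invariant; the inclusion $\doo{}{}{T,A,\alpha\sigma} \subseteq \doo{}{}{T_p,A^+,\alpha^+()}$ then yields $A' \subseteq A^{+\prime}$ for the grounded successor ABoxes.

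The main obstacle is the handling of negation: in $\mf{\K}^{it}$ both the condition-action rules and the negative component $Q_i^-$ of effects are evaluated under $\Cqa{\cdot}{}$, and neither it nor its complement is monotone, so $\mf{\K}^{it}$ cannot be dominated by $\mf{\K^p}^s$ directly. Dominating by the positive dominant $\K^+$ sidesteps this precisely because $\K^+$ discards both the conditions and the $Q_i^-$ parts; the over-approximation above then only ever needs the positive atoms, for which consistent-query answering is bounded by certain answering with the positive inclusions, and ABox-monotonicity closes the argument. Run-boundedness of $\mf{\K}^{it}$ then follows by transitivity of domination together with Theorem~\ref{thm:wa-dominant}.
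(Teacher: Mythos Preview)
Your argument is correct and follows the same high-level strategy as the paper: establish that $\mf{\K}^{it}$ is dominated by $\mf{\K^+}^s$ and then invoke Theorem~\ref{thm:wa-dominant}. The difference is in the routing. The paper factors the domination through the consistent approximant, first arguing $\mf{\K}^{it}$ is dominated by $\mf{\K^p}^s$ (using the observation $\Cqa{Q,T,A} \subseteq \Ans{Q,T_p,A}$ together with the induction of Lemma~4.1 in \cite{BCDDM13}), and then applying Lemma~\ref{PathDomination}. You instead compare $\mf{\K}^{it}$ directly with $\mf{\K^+}^s$.

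Your direct route is actually the more robust one, and for the reason you identify: the inclusion $\Cqa{Q,T,A} \subseteq \Ans{Q,T_p,A}$ holds for UCQs but not for arbitrary ECQs (it reverses under negation), so the paper's intermediate domination by $\mf{\K^p}^s$---where the condition-action rules of $\Pi$ and the negative components $Q_i^-$ are still present---is not justified by that containment alone. By going straight to $\K^+$, where both the rule conditions and the $Q_i^-$ have been discarded, you only ever need the UCQ-level inclusion $\Cqa{[q_i^+],T,A}\subseteq\Ans{q_i^+,T_p,A^+}$, which you prove cleanly via a single repair $A_r$ and monotonicity of UCQ certain answers under $T_p$. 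The remainder of your induction (matching $\alpha^+()$, choosing $\theta^+$ to extend $\theta$ while agreeing with the suffix of the it-run) is exactly the machinery of \cite{BCDDM13} and needs no change here.
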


\begin{proof}
Consider the consistent approximant $\K^p$ of $\K$. From
Lemma~\ref{PathDomination}, we know that $\mf{\K^p}^s$ is dominated by
$\mf{\K^+}^s$. By inspecting the proof of this claim, which in turn
refers to the proof of Lemma 4.1 in \cite{BCDDM13}, we know that
this is the case because, state by state, the answers extracted by $\K^p$
are contained in the ones extracted by
$\K^+$. 

We now observe that, by definition, given a TBox $T$, an ABox
$A$ and an EQL query $Q$, $\Cqa{Q,T,A} \subseteq \Cqa{Q,T_p,A} =
\Ans{Q,T_p,A}$. The equality $\Cqa{Q,T_p,A} =
\Ans{Q,T_p,A}$ holds because every ABox is consistent with $T_p$, and
the only repair of a consistent ABox is the ABox itself.

Consequently, we can apply the same line of reasoning used in the
proof of Lemma 4.1 in \cite{BCDDM13}, showing that $\mf{\K}^{it}$ is
dominated by $\mf{\K^p}^{s}$. By applying Lemma~\ref{PathDomination} and
transitivity of domination, this in turn implies that $\mf{\K}^{it}$ is
dominated by $\mf{\K^+}^{s}$. By recalling
Theorem~\ref{thm:wa-dominant} we finally get the result.
\end{proof}

\end{document}
